\def\BState{\State\hskip-\ALG@thistlm}
\newcolumntype{C}[1]{>{\centering\arraybackslash}p{#1}}
\definecolor{Gray}{gray}{0.9}
\newcolumntype{g}{>{\columncolor{Gray}}c}
\newcolumntype{R}{>{\columncolor{pink}}c}
\theoremstyle{plain}
\newtheorem{lemma}{Lemma}[section]
\newtheorem{proposition}{Proposition}[section]
\newtheorem{Condition}{Condition}[section]
\newcommand{\Bc}{\textbf{c}}
\newcommand{\BK}{\textbf{K}}
\newcommand{\BR}{\textbf{R}}
\newcommand{\Bv}{\textbf{v}}
\newcommand{\Bx}{\textbf{x}} 
\newcommand{\By}{\textbf{y}} \newcommand{\BY}{\textbf{Y}}
\newcommand{\Bu}{\textbf{\textit{u}}} \newcommand{\Bl}{\textbf{\textit{l}}}
\newcommand{\Bzero}{\boldsymbol{0}}
\newcommand{\Beta}{\boldsymbol{\eta}}
\newcommand{\Bmu}{\boldsymbol{\mu}}
\newcommand{\Bnu}{\boldsymbol{\nu}}
\newcommand{\Bxi}{\boldsymbol{\xi}}
\newcommand{\Btheta}{\boldsymbol{\theta}} \newcommand{\BTheta}{\boldsymbol{\Theta}}
\newcommand{\BPhi}{\boldsymbol{\Phi}}
\newcommand{\BSigma}{\boldsymbol{\Sigma}}
\newcommand{\BGamma}{\boldsymbol{\Gamma}}
\newcommand{\BLambda}{\boldsymbol{\Lambda}}
\newcommand{\realset}[1]{\mathds{R}^{#1}}
\newcommand{\normrnd}[2]{\mathcal{N}\left({#1,#2}\right)}
\newcommand{\tnormrnd}[4]{\mathcal{TN}\left({#1,#2,#3,#4}\right)}
\newcommand{\expect}[1]{\mathds{E} \left\{ #1 \right\}}
\newcommand{\var}[1]{\operatorname{var} \left\{ #1 \right\}}
\definecolor{green}{rgb}{0, 0.5, 0}
\crefname{section}{section}{sections}
\crefname{subsection}{subsection}{subsections}
\Crefname{figure}{Figure}{Figures}
\begin{document}

\title{
	\vspace{-2cm}
	\LARGE
	\textbf{Finite-dimensional Gaussian approximation with linear inequality constraints}
}

\author[1]{Andr\'es F. L\'opez-Lopera}
\author[2]{Fran\c{c}ois Bachoc}
\author[1,3]{Nicolas Durrande}
\author[1]{Olivier Roustant}
\affil[1]{Mines Saint-\'Etienne, UMR CNRS 6158, LIMOS, F-42023 Saint-\'Etienne, France.}
\affil[2]{Institut de Math\'ematiques de Toulouse, Université Paul Sabatier - Toulouse III, France.}
\affil[3]{PROWLER.io, 66-68 Hills Road, Cambridge, CB2 1LA, UK.}

\date{\vspace{-35pt}}

\maketitle

\sloppy
\begin{abstract}
	Introducing inequality constraints in Gaussian process (GP) models can lead to more realistic uncertainties in learning a great variety of real-world problems. We consider the finite-dimensional Gaussian approach from Maatouk and Bay (2017) which can satisfy inequality conditions everywhere (either boundedness, monotonicity or convexity). Our contributions are threefold. First, we extend their approach in order to deal with general sets of linear inequalities. Second, we explore several Markov Chain Monte Carlo (MCMC) techniques to approximate the posterior distribution. Third, we investigate theoretical and numerical properties of the constrained likelihood for covariance parameter estimation. According to experiments on both artificial and real data, our full framework together with a Hamiltonian Monte Carlo-based sampler provides efficient results on both data fitting and uncertainty quantification.
\end{abstract}

\section{Introduction}
Gaussian processes (GPs) are one of the most famous non-parametric Bayesian frameworks for modelling stochastic processes. In principle, GP models place prior distributions over function spaces, and prior assumptions (e.g. smoothness, stationarity, sparsity) are encoded in covariance functions \citep{Rasmussen2005GP,Paciorek04nonstationarycovariance,Snelson2006SparseGP}. Because GP provides a well-founded approach to learning, its properties have been explored in many decision tasks in regression (Kriging) and classification problems \citep{Rasmussen2005GP,Nickisch2008GPClassification}. Computer science, engineering, physics, biology, and neuroscience are some fields where GP models have been applied successfully \citep{Rasmussen2005GP,Murphy2012ML}.

Despite the reliable performance of GPs, they provide less realistic uncertainties when physical systems satisfy inequality constraints \citep{Maatouk2016GPineqconst,DaVeiga2012GPineqconst,Golchi2015MonotoneEmulation}. Quantifying properly the uncertainties is crucial for understanding real-world phenomena. For example, in nuclear safety criticality assessment, experimental settings typically demand expensive and risky procedures to evaluate neutron productions. Hence, emulators are required to infer these production rates and should assume a priori that the output is positive and usually monotonic with respect to a given set of input parameters. In this sense, to obtain more accurate predictions, both conditions have to be considered in the uncertainty quantification. Other test cases where data exhibit specific inequality constraints are given in computer networking (monotonicity) \citep{Golchi2015MonotoneEmulation}, social system analysis (monotonicity) \citep{Riihimaki2010GPwithMonotonicity}, and econometrics (monotonicity or positivity) \citep{Cousin2016KrigingFinancial}.

Several studies have shown that including inequality constraints in GP frameworks can lead to more realistic uncertainty quantifications in learning from real data \citep{DaVeiga2012GPineqconst,Golchi2015MonotoneEmulation,Riihimaki2010GPwithMonotonicity}. In most of the cases, it is assumed that the inequalities are satisfied on a finite set of input locations. Then, the posterior distribution is approximated given those constrained inputs. To the best of our knowledge, the framework from \citep{Maatouk2016GPineqconst} is the only Gaussian approach proposed in the literature which satisfies specific inequalities everywhere in the input space. There, the GP samples are approximated in the finite-dimensional space of functions such as piecewise linear functions. It is shown in \citep{Bay2016KimeldorfWahba} that the posterior mode converges to the one provided by thin plate splines. This approach has been applied on several real-data (e.g. econometrics, geostatistics) \citep{Maatouk2016GPineqconst,Cousin2016KrigingFinancial}, resulting in more realistic uncertainties than unconstrained Kriging.

The framework proposed in \citep{Maatouk2016GPineqconst} still presents some limitations. First, the focus is on either boundedness, monotonicity or convexity conditions. Second, the proposed rejection sampling method for estimating the posterior \citep{Maatouk2016RSM} results in costly computations when either the order of the finite approximation increases or the inequality constraints become more complex. Third, the proposed leave-one-out (LOO) technique for parameter estimation \citep{Maatouk2015CrossVal}, restricts the optimal values to be on a finite grid of possible values, and provides the same estimation of correlation parameters as for unconstrained GP parameters. In order to address these limitations, our contributions are threefold. First, we extend the framework to deal with general sets of linear inequality constraints. Second, we evaluate efficient Markov Chain Monte Carlo (MCMC) algorithms that can be used to approximate the posterior distribution. Third, we investigate theoretical and numerical properties of the conditional likelihood for covariance parameter estimation. According to experiments on both artificial and real data, the resulting framework provides efficient results on both data fitting and uncertainty quantification.

This paper is organised as follows. In \Cref{sec:GPwithICs}, we briefly describe GP modelling with inequality constraints. In \Cref{sec:GPwithICLinear}, continuing with the finite-dimensional approach from \citep{Maatouk2016GPineqconst}, we propose a general formulation to deal with sets of linear inequalities. In \Cref{sec:simPosterior}, we apply several MCMC techniques to approximate the posterior distribution, and we compare their performances with respect to exact Monte Carlo (MC) algorithms. In \Cref{sec:ml}, we write the conditional likelihood for the covariance parameter estimation, providing theoretical and empirical properties. In \Cref{sec:2Dapp}, we assess our framework in two-dimensional Kriging tasks. Finally, in \Cref{sec:conclusions}, we summarize the conclusions, as well as the potential future works.

\section{Gaussian process modelling with inequality constraints}
\label{sec:GPwithICs}
\subsection{Finite-dimensional approximation}
\label{subsec:finApprox}
Let $Y$ be a zero-mean Gaussian process (GP) on $\realset{}$ with covariance function $k$. Consider $x \in \mathcal{D}$ with compact input space $\mathcal{D} = [0, 1]$, and a set of knots $t_1, \cdots, t_m \in \realset{}$. For simplicity, we will consider equally-spaced knots $t_j = j \Delta_m$ with $\Delta_m = 1/(m-1)$, but this assumption can be relaxed. Then, define a finite-dimensional GP, denoted by $Y_m$, as the piecewise linear interpolation of $Y$ at knots $t_1, \cdots, t_m$:
\begin{align}
Y_m (x) = \sum_{j=1}^{m} Y(t_j) \phi_j (x),
\label{eq:finApprox}
\end{align}
where $\phi_1 \cdots, \phi_m$ are hat basis functions given by
\begin{align}
\phi_j (x) :=
\begin{cases}
1 - \left|\frac{x - t_j}{\Delta_m}\right| & \mbox{if } \left|\frac{x - t_j}{\Delta_m}\right| \leq 1,\\
0 & \mbox{otherwise}.
\end{cases}
\label{eq:hatbasisfun}
\end{align}
We illustrate the finite-dimensional representation of \Cref{eq:finApprox} in \Cref{fig:toyExample1a} 
for a deterministic function that satisfies two types of inequality constraints: boundedness and monotonicity (non-decreasing). 
\begin{figure}
	\centering
	\includegraphics[width=0.40\textwidth]{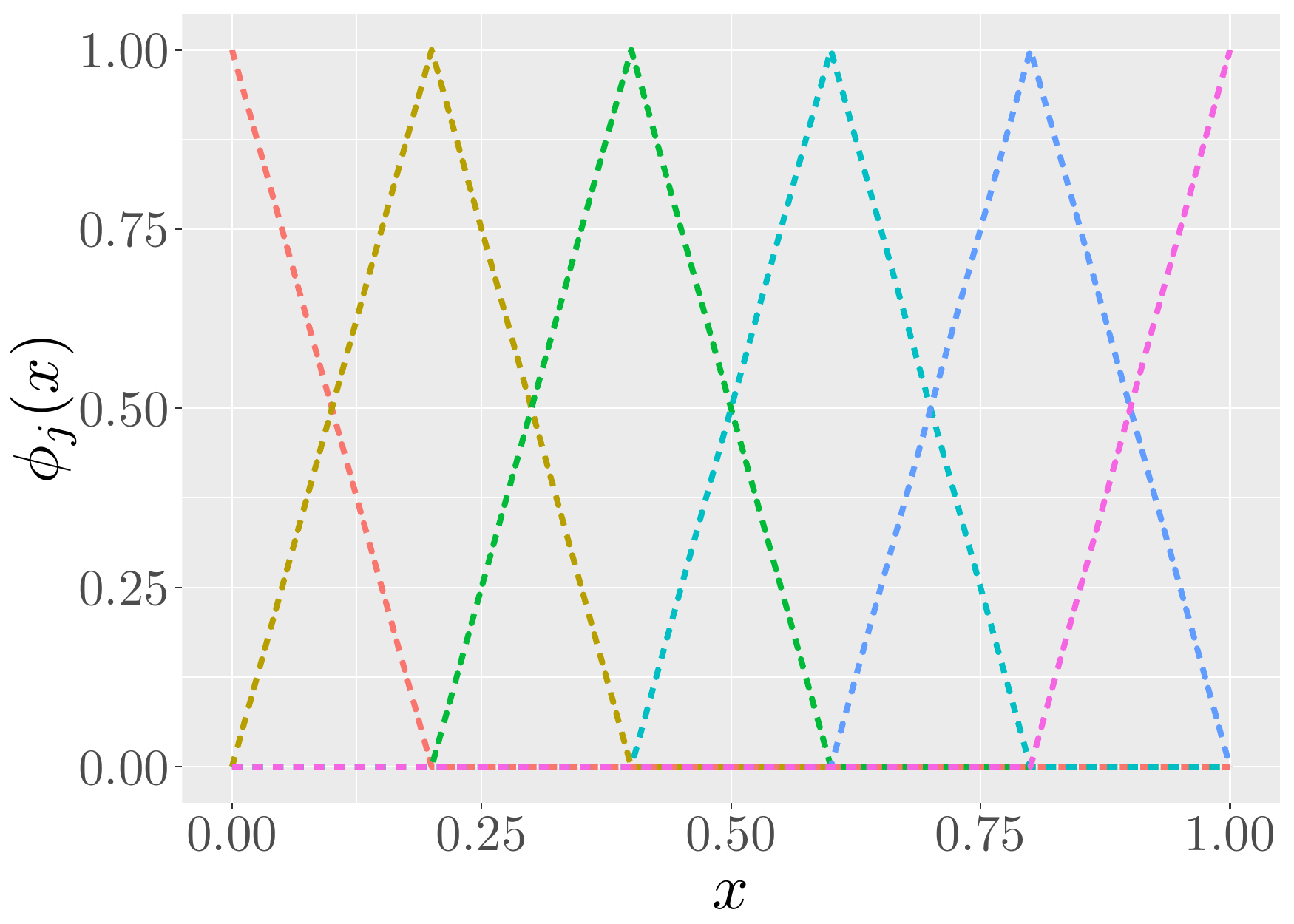} \hspace{20pt}
	\includegraphics[width=0.40\textwidth]{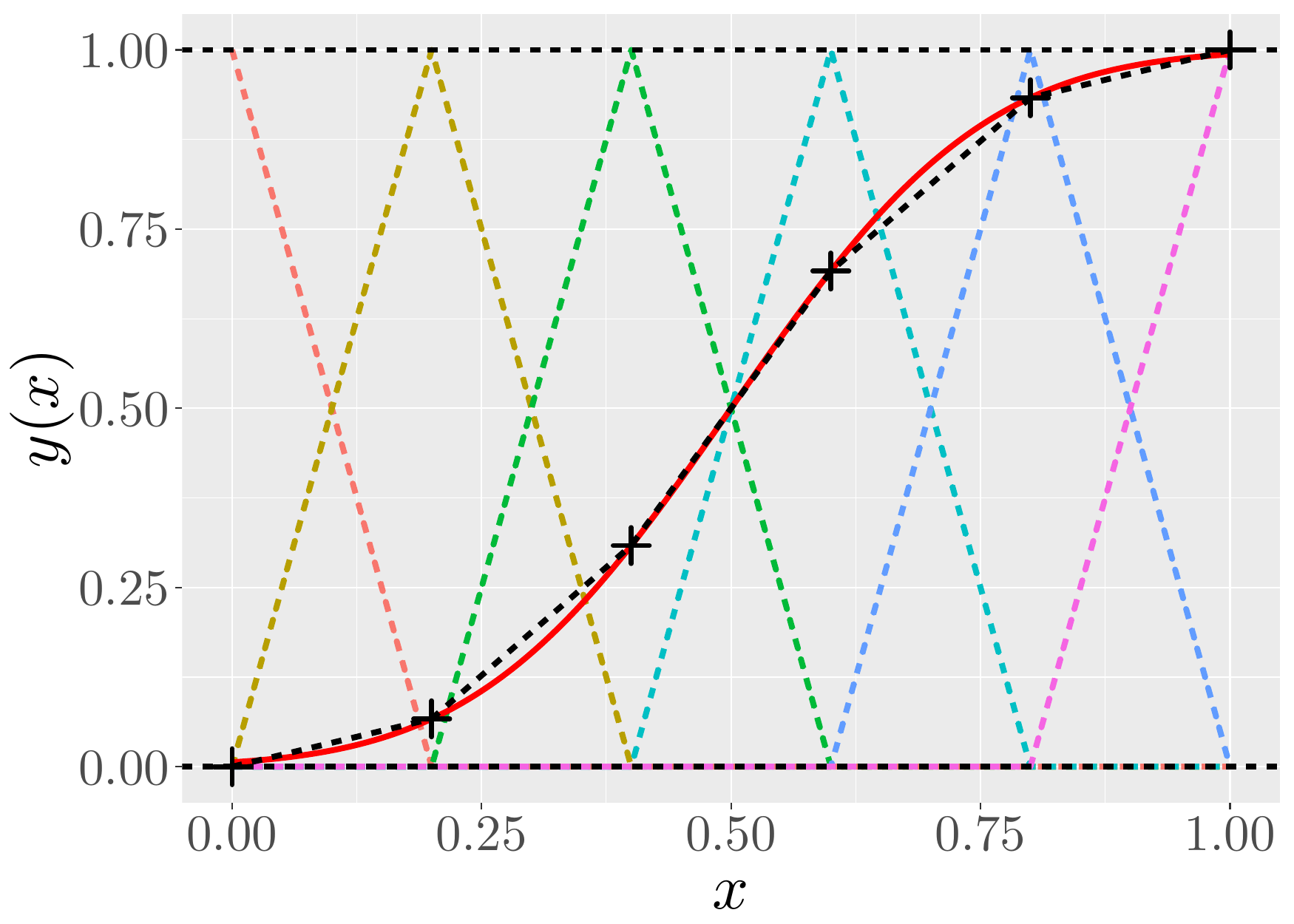}
	\caption{Illustration of the finite-dimensional approximation of \Cref{eq:finApprox}. (Left) Hat functions $\phi_j$ for $j = 1, \cdots, 6$. (Right) Approximation of the function $y(x) = \Phi(\frac{x-0.5}{0.2})$, where $\Phi$ is the standard cumulative normal distribution. Solid red and dashed black lines are the function $y$, and its finite approximation with six knots given by black crosses, respectively. Horizontal  black dashed lines denote the bounds.}
	\label{fig:toyExample1a}
\end{figure}
Now, let $\xi_j := Y(t_j)$ for $j =1, \cdots, m$. We aim at computing the distribution of $Y_m$ conditionally on $Y_m \in \mathcal{E}$ where $\mathcal{E}$ is a convex set of functions defined by some inequality constraints. For instance, we may have
\begin{equation}
\mathcal{E} = \mathcal{E}_\kappa := \begin{cases}
\left\{ f \in C(\mathcal{D}, \realset{}) \ \mbox{s.t.} \ \ell \leq f(x) \leq u, \ \forall x \in \mathcal{D} \right\} & \mbox{if } \kappa = 0, \\
\left\{ f \in C(\mathcal{D}, \realset{}) \ \mbox{s.t.} \ f \mbox{ is non-decreasing} \right\} & \mbox{if } \kappa = 1, \\
\left\{ f \in C(\mathcal{D}, \realset{}) \ \mbox{s.t.} \ f \mbox{ is convex} \right\} & \mbox{if } \kappa = 2,
\end{cases}
\label{eq:convexsetFun}
\end{equation}
which corresponds to boundedness, monotonicity, and convexity constraints. The benefit of using hat functions and the finite-dimensional approximation $Y_m$, is that satisfying the inequality conditions $Y_m(x) \in \mathcal{E}$, for all $x \in $ $\mathcal{D}$, is equivalent to satisfying only a finite number of inequality constraints \citep{Maatouk2016GPineqconst}. More precisely, for many natural choices of $\mathcal{E}$, we have
\begin{equation}
Y_m \in \mathcal{E} \quad \Leftrightarrow \quad \Bxi \in \mathcal{C},
\label{eq:convexEqui}
\end{equation}
where $\mathcal{C}$ is a convex set of $\realset{m}$ and 
$\Bxi = [\begin{smallmatrix} \xi_1, & \cdots, & \xi_m \end{smallmatrix}]^\top$. For instance, for the convex set $\mathcal{E}_\kappa$ of \Cref{eq:convexsetFun}, we have 
\begin{align}
\mathcal{C} = \mathcal{C}_\kappa := \begin{cases}
\left\{\Bc \in \realset{m}; \ \forall \ j = 1, \cdots, m \; : \; \ell \leq c_j \leq u \right\} & \mbox{if }  \kappa = 0, \\
\left\{\Bc \in \realset{m}; \ \forall \ j = 2, \cdots, m \; : \; c_j \geq c_{j-1} \right\} & \mbox{if } \kappa = 1,  \\
\left\{\Bc \in \realset{m}; \ \forall \ j = 3, \cdots, m \; : \; c_j - c_{j-1} \geq c_{j-1} - c_{j-2} \right\} & \mbox{if } \kappa = 2.
\end{cases}
\label{eq:convexsetKnots}
\end{align}

\subsection{Conditioning with interpolation and inequality constraints}
\label{subsec:GPwithICMaatouk}
Consider the finite dimensional representation of the GP as in \Cref{eq:finApprox}, given the interpolation and inequality constraints
\begin{equation}
Y_m(x) = \sum_{j=1}^{m} \xi_j \phi_{j}(x), \quad \mbox{s.t.} \quad \begin{cases} Y_m(x_i) = y_i & \mbox{(interpolation conditions)}, \\ Y_m \in \mathcal{E} & \mbox{(inequality conditions)}, \end{cases}
\label{eq:constrFinApprox}
\end{equation}
where $x_i \in \mathcal{D}$ and $y_i \in \realset{}$ for $i = 1, \cdots, n$. Given a design of experiment (DoE) $\Bx = [\begin{smallmatrix} x_1, & \cdots, & x_n \end{smallmatrix}]^\top$, we have matricially:
\[
\BY_m = \begin{bmatrix} Y_m(x_1), & \cdots, & Y_m(x_n) \end{bmatrix}^\top  = \BPhi \Bxi,
\]
where $\BPhi$ is the $n \times m$ matrix defined by $\BPhi_{i,j} = \phi_j (x_i)$. Let $\By = [\begin{smallmatrix} y_1, & \cdots, & y_n \end{smallmatrix}]^\top$ be a realization of $\BY_m$ as in \Cref{eq:constrFinApprox}. From \Cref{eq:convexEqui}, the conditional distribution of $Y_m$, under the inequality constraints $Y_m \in \mathcal{E}$ and interpolation conditions $Y_m(x_i) = y_i$ for $i = 1, \cdots, n$, can be obtained from the conditional distribution of $\Bxi$ given $\Bxi \in \mathcal{C}$ and $\BPhi \Bxi = \By$ (see \Cref{eq:convexEqui}). 

Observe that the vector $\Bxi$ of the values at the knots is a zero-mean Gaussian vector with covariance matrix $\BGamma= (k(t_i,t_j))_{1 \leq i,j \leq m}$. Then, the distribution of $\Bxi$ given both interpolation and inequality conditions is truncated multinormal:
\begin{equation}
\Bxi \sim \normrnd{\Bzero}{\BGamma} \quad \mbox{s.t.} \quad  \begin{cases} \BPhi \Bxi = \By & \mbox{(interpolation conditions)}, \\ \Bxi \in \mathcal{C} & \mbox{(inequality conditions)}, \end{cases}
\label{eq:trGP1}
\end{equation}
with $\mathcal{C}$ as in \Cref{eq:convexEqui}. For sampling purposes (see \cref{alg:tmKriging}), we need to compute the posterior mode which is given by the maximum of the probability density function of the posterior, i.e. $\Bmu_\Bxi^\ast = \min \{ \Bxi^\top \BGamma^{-1} \Bxi| \ \BPhi \Bxi = \By, \Bxi \in \mathcal{C}\}$ (maximum a posteriori, MAP). Notice that $\Bmu_\Bxi^\ast$ converges uniformly to the solution provided by thin plate splines when $m \to \infty$ \citep{Bay2016KimeldorfWahba}. More details and theoretical properties are provided in \citep{Maatouk2016GPineqconst,Bay2016KimeldorfWahba}. 

\Cref{fig:toyExample1b} shows different Gaussian models for the example of \cref{fig:toyExample1a}. We used a squared exponential (SE) covariance function with parameters $(\sigma^2 = 1, \; \theta = 0.2)$,\footnote{SE covariance function: $k_\Btheta(x-x') = \sigma^2 \exp\left\{-\frac{(x-x')^2}{2 \theta^2} \right\}$ with $\Btheta = (\sigma^2,\theta)$.} and we fixed $m=100$. The posterior distribution was approximated via Hamiltonian Monte Carlo (HMC) \citep{Pakman2014Hamiltonian}. From  \cref{subfig:toyExample1b2,subfig:toyExample1b3}, we observe that including the inequality constraints in the conditional distribution provides smaller confidence intervals compared to the ones given by the unconstrained GP. However, they do not satisfy both the boundedness and monotonicity conditions exhibited by the function $y$. On the other hand, from \cref{subfig:toyExample1b4}, 
imposing both conditions leads to a more accurate prediction and more realistic confidence intervals. Later in \Cref{sec:GPwithICLinear}, we will detail how to obtain the results of \cref{subfig:toyExample1b4}.
\begin{figure}[t!]
	\centering
	\subfigure[\scriptsize \label{subfig:toyExample1b1} Unconstrained.]{\includegraphics[width=0.40\textwidth]{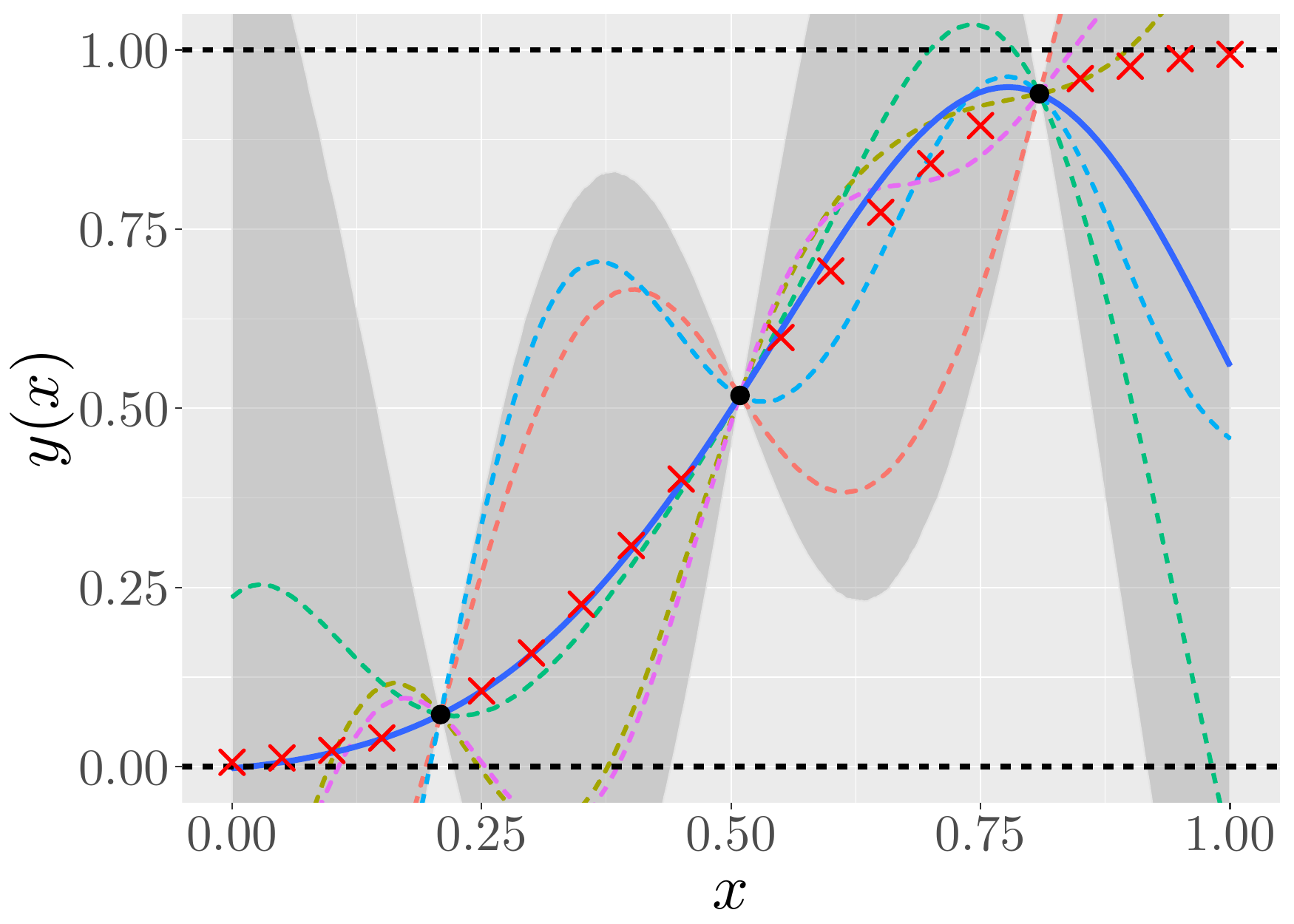}} \hspace{20pt}
	\subfigure[\scriptsize \label{subfig:toyExample1b2} Boundedness.]{\includegraphics[width=0.40\textwidth]{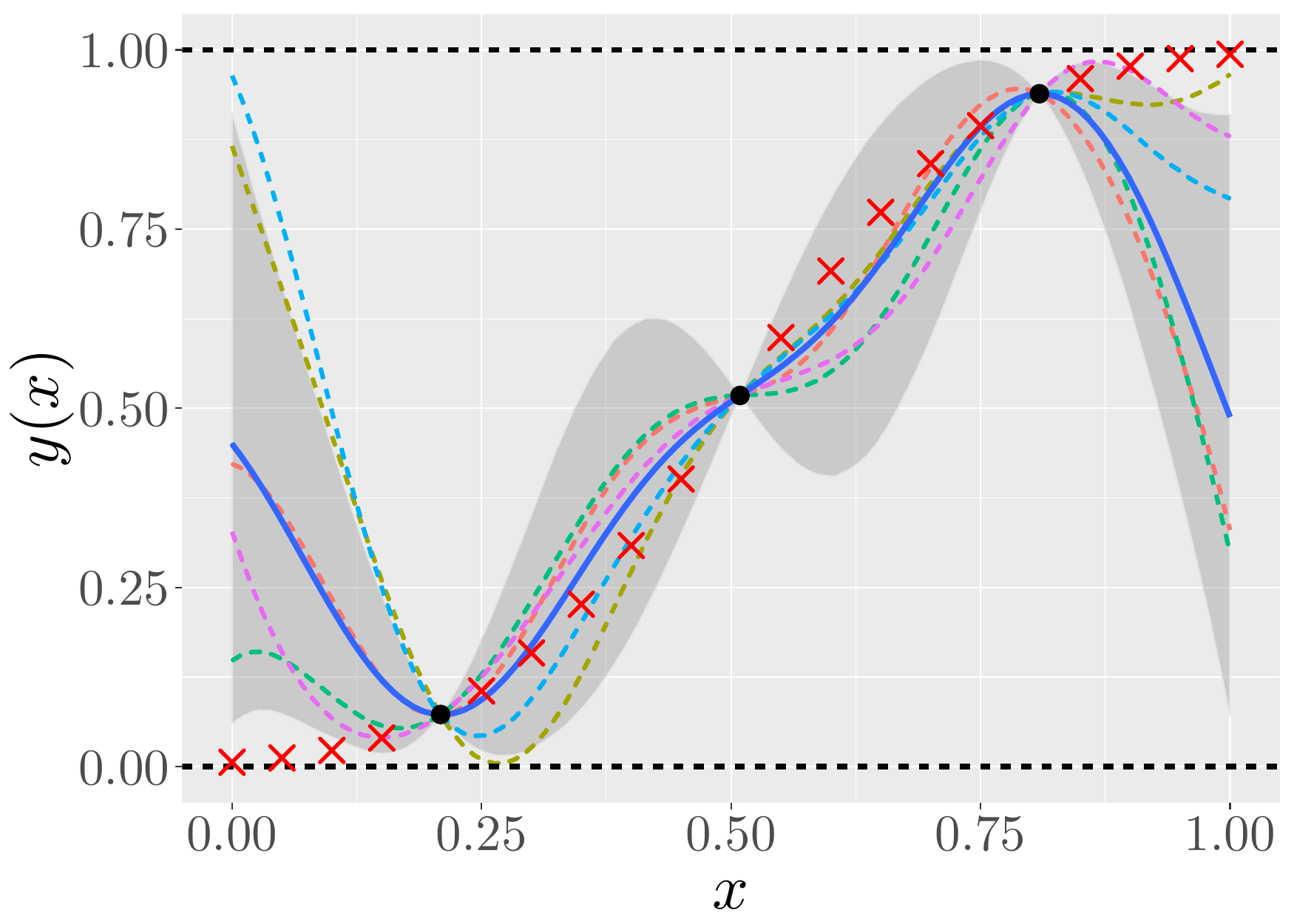}}
	
	\subfigure[\scriptsize \label{subfig:toyExample1b3} Monotonicity.]{\includegraphics[width=0.40\textwidth]{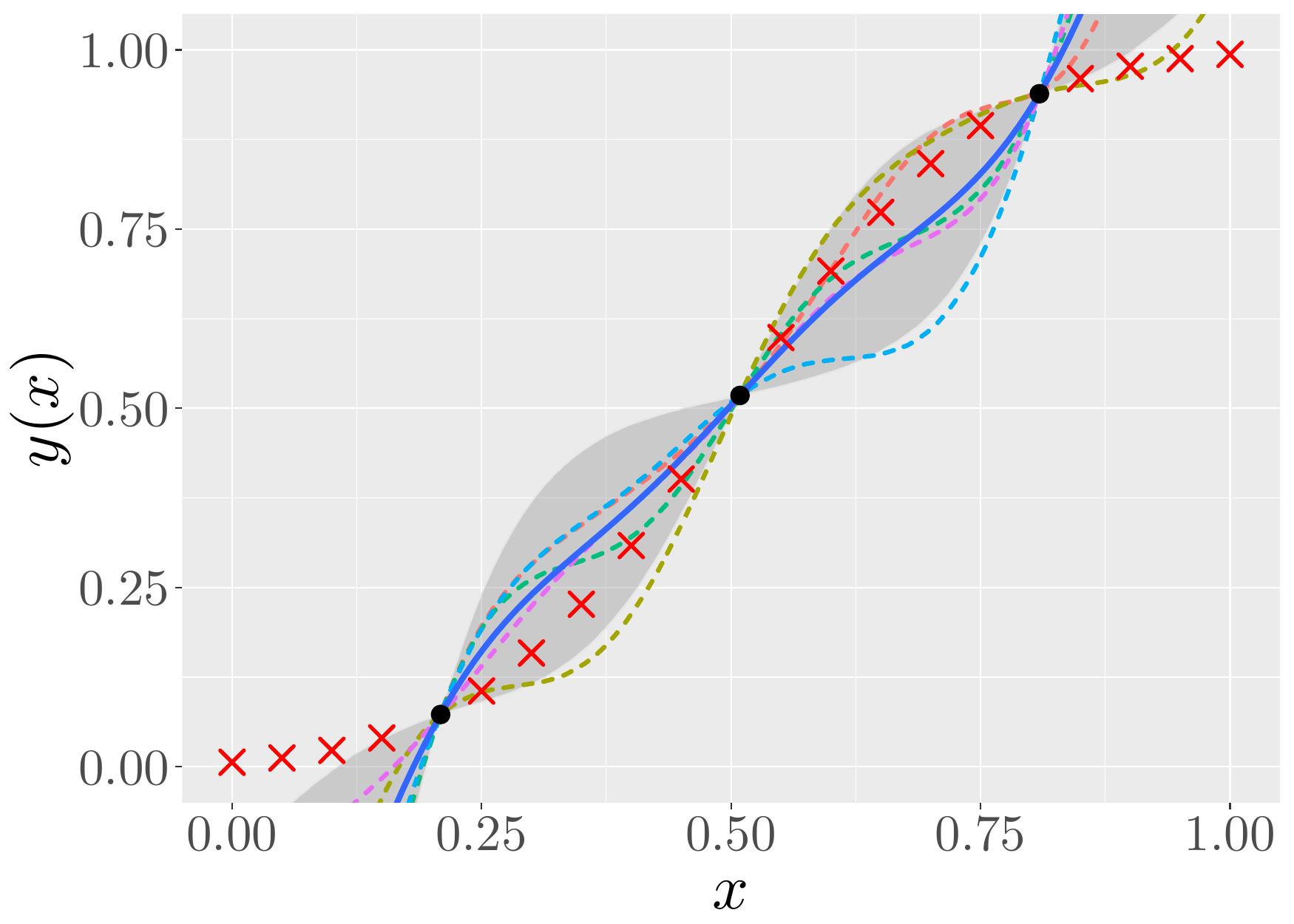}} \hspace{20pt}
	\subfigure[\scriptsize \label{subfig:toyExample1b4} Boundedness and monotonicity.]{\includegraphics[width=0.40\textwidth]{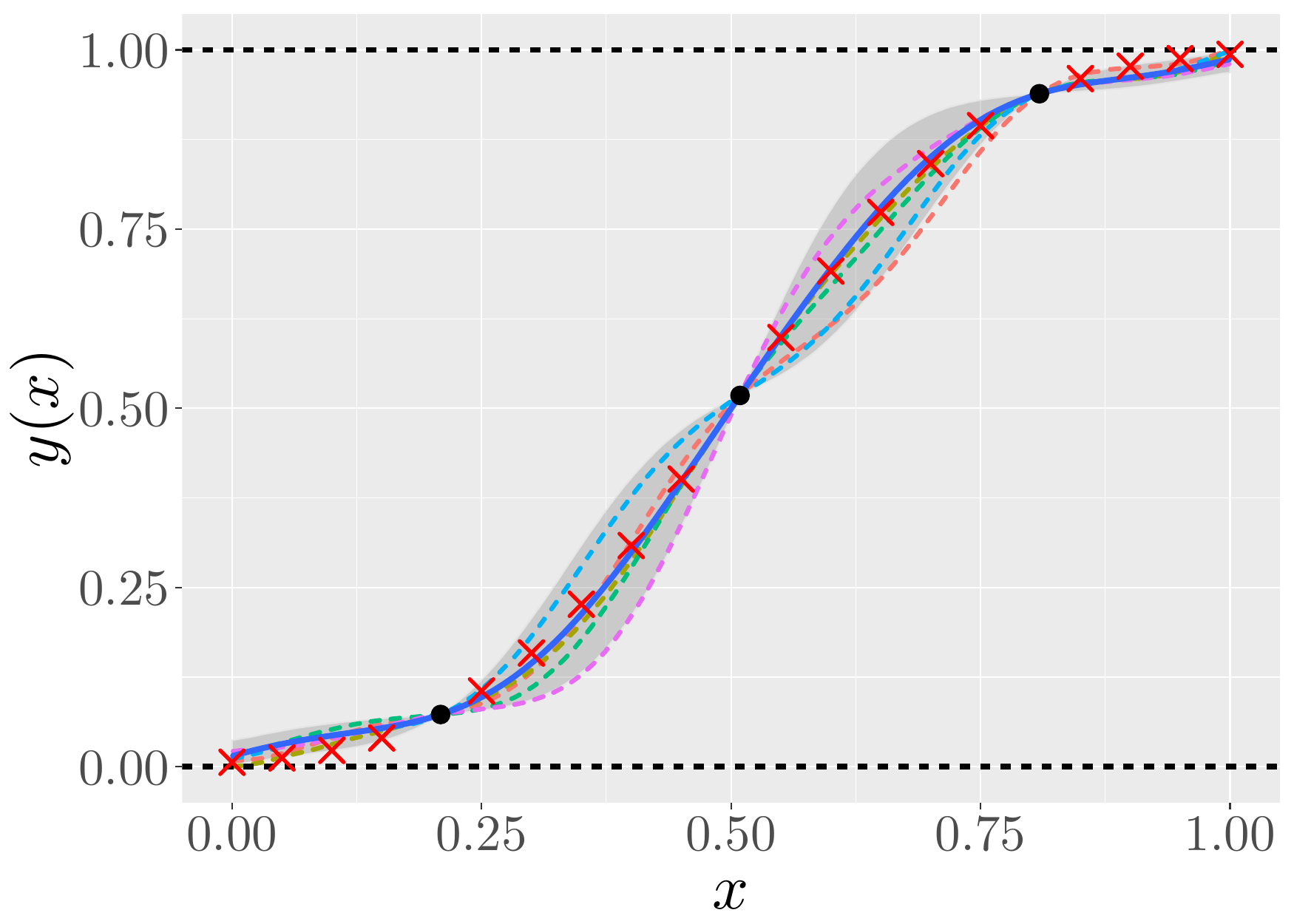}}
	\caption{Example of Gaussian models satisfying different types of inequality constraints for interpolating the function $x \mapsto \Phi(\frac{x-0.5}{0.2})$. Each panel shows: training and test points (black dots and red crosses, respectively), the conditional mean function (blue solid line), the $90\%$ confidence interval (grey region), and conditional realizations (dashed multi colour lines). For boundedness constraints, bounds are showed in black dashed lines.}
	\label{fig:toyExample1b}
\end{figure}

\section{Finite-dimensional Gaussian approximation with linear inequality constraints}
\label{sec:GPwithICLinear}
Now, we consider the case where $\mathcal{C}$ is composed by a set of $q$ linear inequalities of the form
\[
\mathcal{C} = \bigg\{\Bc \in \realset{m}; \ \forall \; j = 1, \dots, m , \forall \; k = 1, \dots, q, 
\lambda_{k,j} \in \realset{} \; : \; \ell_k \leq \sum_{j = 1}^{m} \lambda_{k,j} c_j \leq u_k \bigg\},
\]
where the $\lambda_{k,j}$'s encode the linear operations, the $l_k$'s and $u_k$'s represent the lower and upper bounds. Notice that the convex sets $\mathcal{C}_\kappa$ of \Cref{eq:convexsetKnots} are particular cases of $\mathcal{C}$. Denote $\BLambda = (\lambda_{k,j})_{1 \leq k \leq q, 1 \leq j \leq m}$, $\Bl = (\ell_{k})_{1 \leq k \leq q}$, and $\Bu = (u_{k})_{1 \leq k \leq q}$. Hence, \Cref{eq:trGP1} is written
\begin{equation}
\Bxi \sim \normrnd{\Bzero}{\BGamma} \quad \mbox{s.t.} \quad  \begin{cases} \BPhi \Bxi = \By & \mbox{(interpolation conditions)}, \\ \Bl \leq \BLambda \Bxi \leq \Bu & \mbox{(inequality conditions)}. \end{cases}
\label{eq:trGP2}	
\end{equation}
We further assume that $q \geq m$ and that $\BLambda$ has rank $m$. By the rank-nullity theorem (see e.g. \citep{Meyer2000Algebra}), it implies that $\BLambda$ is injective. In particular a linear system of the form $\BLambda \Bxi = \Beta$ admits a unique solution $\Bxi$ when $\Beta$ is in the image space of $\BLambda$.  This assumption is verified in many practical situations, up to adding inactive constraints. For instance, the monotonicity condition $\Bxi_1 \leq \cdots \leq \Bxi_m$, which involves only $q=m-1$ (linearly independent) conditions, can be made compatible by adding the condition $-\infty \leq \Bxi_1$ (and/or $\Bxi_m \leq \infty$).

We now explain how to sample $\Bxi$ from \Cref{eq:trGP2}. First, we compute the conditional distribution given the interpolation constraints $\Bxi|\{\BPhi \Bxi = \By\}$. Since $\Bxi \sim \normrnd{\Bzero}{\BGamma}$, then $\BPhi \Bxi \sim \normrnd{\Bzero}{\BPhi \BGamma \BPhi^\top}$ and the conditional distribution $\Bxi |\{\BPhi \Bxi = \By\}$ is also Gaussian $\normrnd{\Bmu}{\BSigma}$ \citep{Rasmussen2005GP}, with
\begin{equation}
\Bmu = \BGamma \BPhi^\top [\BPhi \BGamma \BPhi^\top]^{-1} \By, \quad \mbox{and} \quad 
\BSigma = \BGamma -  \BGamma \BPhi^\top [\BPhi \BGamma \BPhi^\top]^{-1} \BPhi \BGamma.
\label{eq:posterior}
\end{equation}
Therefore, we have $\BLambda \Bxi |\{\BPhi \Bxi = \By\} \sim \normrnd{\BLambda \Bmu}{\BLambda \BSigma \BLambda^\top}$. 
Let $\tnormrnd{\textbf{m}}{\textbf{C}}{\textbf{a}}{\textbf{b}}$ be the truncated multinormal distribution with mean vector $\textbf{m}$, covariance matrix $\textbf{C}$, and bound vectors $(\textbf{a},\textbf{b})$ such that $\textbf{a} \leq \textbf{b}$. Thus, the posterior distribution of \Cref{eq:trGP2} is obtained from
\begin{equation}
\BLambda \Bxi|\{\BPhi \Bxi = \By, \Bl \leq \BLambda\Bxi \leq \Bu\} \sim \tnormrnd{\BLambda \Bmu}{\; \BLambda \BSigma \BLambda^\top}{\; \Bl}{\; \Bu}.
\label{eq:trposterior}
\end{equation}
Notice that the inequality conditions are encoded in the posterior mean $\BLambda \Bmu$, the posterior covariance $\BLambda \BSigma \BLambda^\top$, and bounds $(\Bl,\Bu)$. Finally, the posterior mode is given by $\Bnu_\Bxi^\ast = \BLambda \Bmu_\Bxi^\ast$ where $\Bmu_\Bxi^\ast$ is the solution provided in \Cref{subsec:GPwithICMaatouk}. The truncated multinormal of \Cref{eq:trposterior} can be approximated using Markov Chain Monte Carlo (MCMC) algorithms. Denoting $\Beta = \BLambda \Bxi$, notice that the samples for $\Bxi$ can be obtained by using the ones obtained for $\Beta$ if the linear system is solved. Indeed, as mentioned above, we assumed that $\BLambda$ has rank $m$, which implies that the solution of $\BLambda \Bxi= \Beta$ exists and is unique. 

The whole sampling scheme is summarized in \cref{alg:tmKriging}. Now, we illustrate some examples where the proposed framework satisfies different types of inequality conditions. The posterior is approximated via HMC (more details about HMC are given in \Cref{sec:simPosterior}).
\begin{algorithm}[t!]
	\small	
	\caption{Sampling from the finite-dimensional GP with linear inequality constraints.}\label{alg:tmKriging}
	\begin{algorithmic}[1]
		\Procedure{Sampling from $\Bxi |\{ \BPhi \Bxi = \By, \Bl \leq \BLambda \Bxi \leq \Bu \}$, where $\Bxi \sim \normrnd{\Bzero}{\BGamma}$}{}
		\BState \textbf{Input:} $\By$, $\BGamma \in \realset{m\times m}$, $\BPhi \in \realset{n\times m}$, $\mathcal{C}$.
		\BState Compute the conditional mean and covariance of $\Bxi |\{ \BPhi \Bxi = \By \}$
		\State $\Bmu = \BGamma \BPhi^\top (\BPhi \BGamma \BPhi^\top)^{-1} \By$, and
		\State $\BSigma = \BGamma - \BGamma \BPhi^\top (\BPhi \BGamma \BPhi^\top)^{-1} \BPhi \BGamma$.
		\BState Solve the quadratic problem in $\realset{m}$: $\Bmu_\Bxi^\ast = \min{_{\Bxi \in \realset{m}}} \{ \Bxi^\top \BGamma^{-1} \Bxi| \ \BPhi \Bxi = \By, \Bl \leq \BLambda \Bxi \leq \Bu \}$.
		\BState Sample from the truncated multinormal distribution
		\State $\BLambda \Bxi |\{ \BPhi \Bxi = \By, \textbf{\textit{l}} \leq \BLambda \Bxi \leq \textbf{\textit{u}} \} \sim \tnormrnd{\BLambda \Bmu}{\; \BLambda \BSigma\BLambda^\top}{\; \Bl}{\; \Bu}.$
		\BState Define $\Beta = \BLambda \Bxi$, and solve the linear system to obtain the sample $\Bxi$.
		\BState \textbf{Remark:} use the posterior mode $\Bnu_\Bxi^\ast = \BLambda \Bmu_\Bxi^\ast$ as a starting state for an MCMC sampler (see \Cref{sec:simPosterior}).
		\EndProcedure
	\end{algorithmic}
\end{algorithm}
\vspace{-10pt}
\paragraph{Example 1} We continue with the example of \cref{fig:toyExample1a}. As we can fix the structure of the linear inequalities $(\BLambda,\Bl,\Bu)$, we can impose both boundedness and monotonicity conditions in the constrained GP. One way to do this is to encode them individually. Let $\Bl_1 \leq \BLambda_1 \Bxi \leq \Bu_1$ and $\Bl_2 \leq \BLambda_2  \Bxi \leq \Bu_2$ be the sets of conditions to satisfy boundedness and monotonicity constraints, respectively. Then, we can build an extended set of inequalities $\Bl \leq \BLambda  \Bxi \leq \Bu$ by stacking the constraints (i.e. $\BLambda = [\begin{smallmatrix} \BLambda_1, & \BLambda_2 \end{smallmatrix}]^\top, \Bl = [\begin{smallmatrix} \Bl_1, & \Bl_2 \end{smallmatrix}]^\top, \Bu = [\begin{smallmatrix}\Bu_1, & \Bu_2 \end{smallmatrix}]^\top$), so that \cref{alg:tmKriging} can be used. Notice that one can encode the same information in a reduced set of linear inequalities. Instead of encoding independently the boundedness and monotonicity constraints, which requires $q = 2m - 1$ inequalities, one can impose boundedness conditions only for the first and last knot, and monotonicity conditions for all the knots except the first one. Due to monotonicity, the intermediate knots will also satisfy the boundaries. In this way, we only need $q = m + 1$ conditions. In many other cases, the size of specific sets of linear constraints can be reduced. However for general discussions, we will use the full extended set and we will apply efficient samplers to approximate the posterior.
\vspace{-10pt}
\paragraph{Example 2} Notice from the previous example that the extension for more than two sets of inequalities is straightforward. Consider for instance $Q$ different sets of conditions. We can build the posterior from \Cref{eq:trposterior} with $\BLambda = [\begin{smallmatrix} \BLambda_1, & \cdots, & \BLambda_Q \end{smallmatrix}]^\top$, $\Bl = [\begin{smallmatrix} \Bl_1, & \cdots, & \Bl_Q \end{smallmatrix}]^\top$, and $\Bu = [\begin{smallmatrix}\Bu_1, & \cdots, & \Bu_Q \end{smallmatrix}]^\top$, and apply \cref{alg:tmKriging}. \Cref{fig:toyExample2} shows an example with the target function $y(x) = x^2$, satisfying three types of inequality constraints: boundedness, monotonicity and convexity. We proposed different models satisfying one or more inequality constraints. We used a SE covariance function with parameters $(\sigma^2 = 1.0,\; \theta =0.2)$. By imposing the three conditions, we obtain samples that also satisfy the three types of constraints.
\begin{figure}[t!]
	\centering
	\subfigure[\scriptsize \label{subfig:toyExample2a1} Adding boundedness constraint.]{\includegraphics[width=0.325\textwidth]{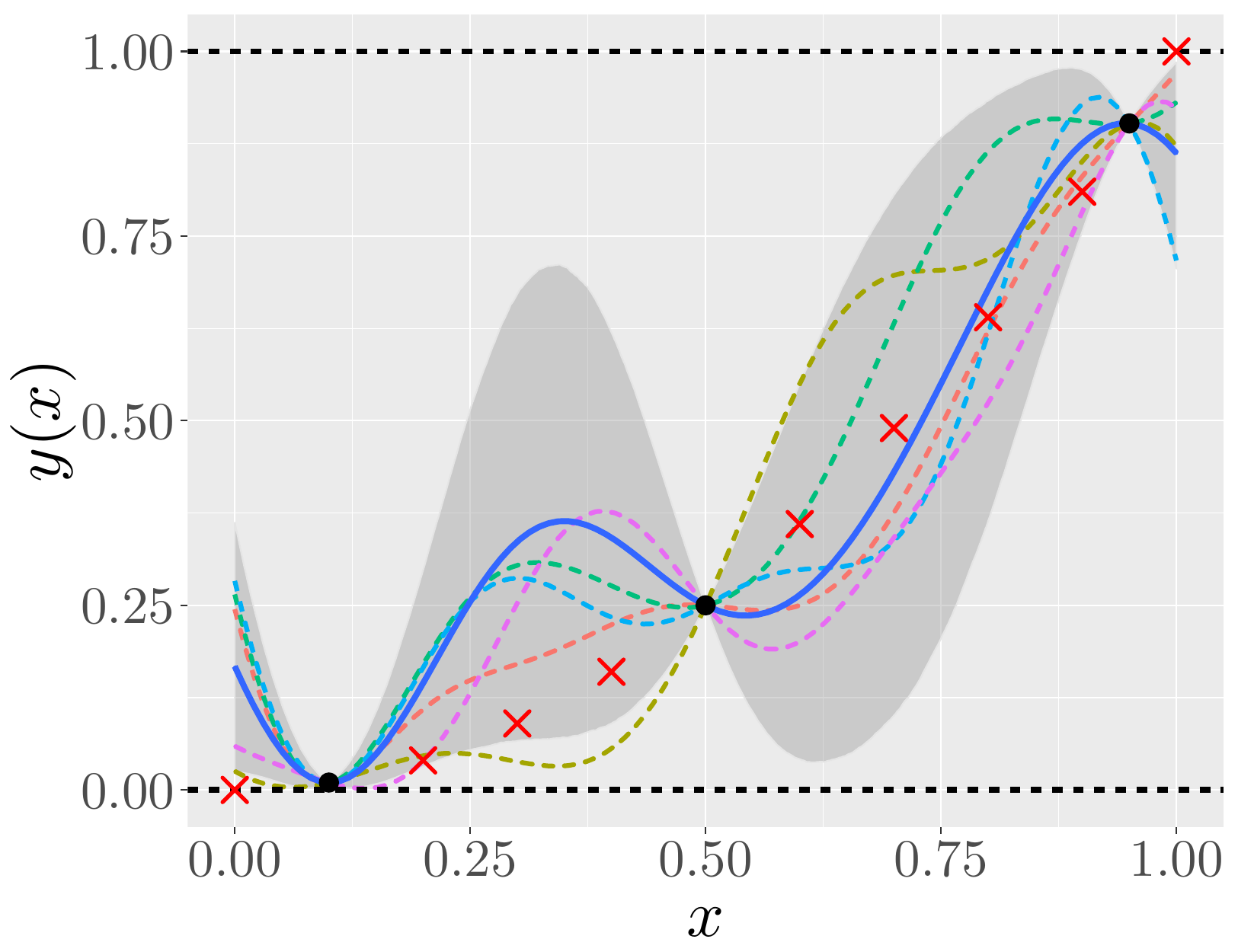}}	
	\subfigure[\scriptsize \label{subfig:toyExample2a4} Adding monotonicity constraint.]{\includegraphics[width=0.325\textwidth]{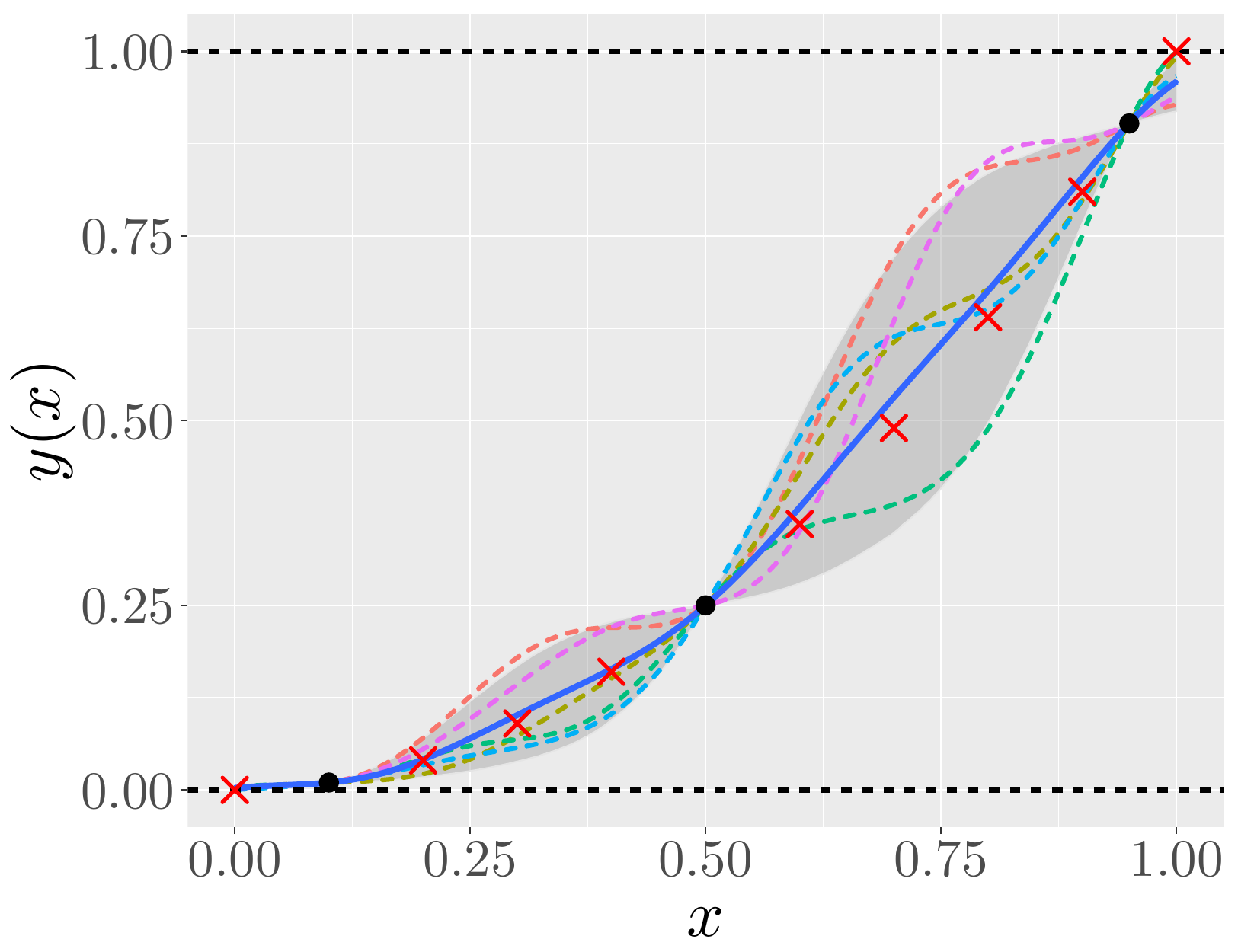}}	
	\subfigure[\scriptsize \label{subfig:toyExample2a7} Adding  convexity constraint.]{\includegraphics[width=0.325\textwidth]{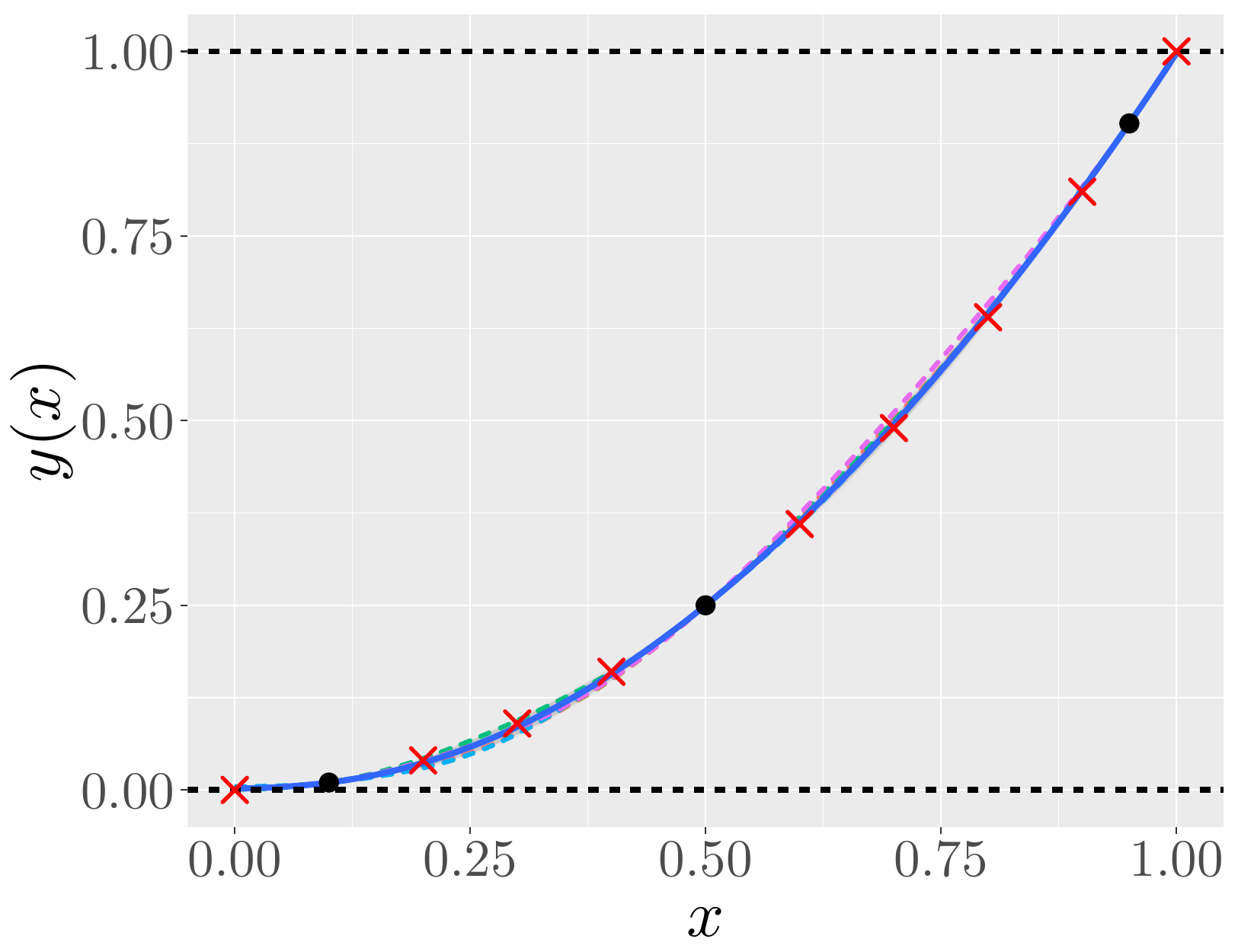}}
	\caption{Examples of Gaussian models satisfying one or several types of inequality constraints for interpolating the square function $x \mapsto x^2$. Panel description is the same as in \Cref{fig:toyExample1b}.}
	\label{fig:toyExample2}
\end{figure}
\begin{figure}[t!]
	\subfigure[\scriptsize \label{subfig:toyExample3a1} Unconstrained.]{\includegraphics[width=0.325\textwidth]{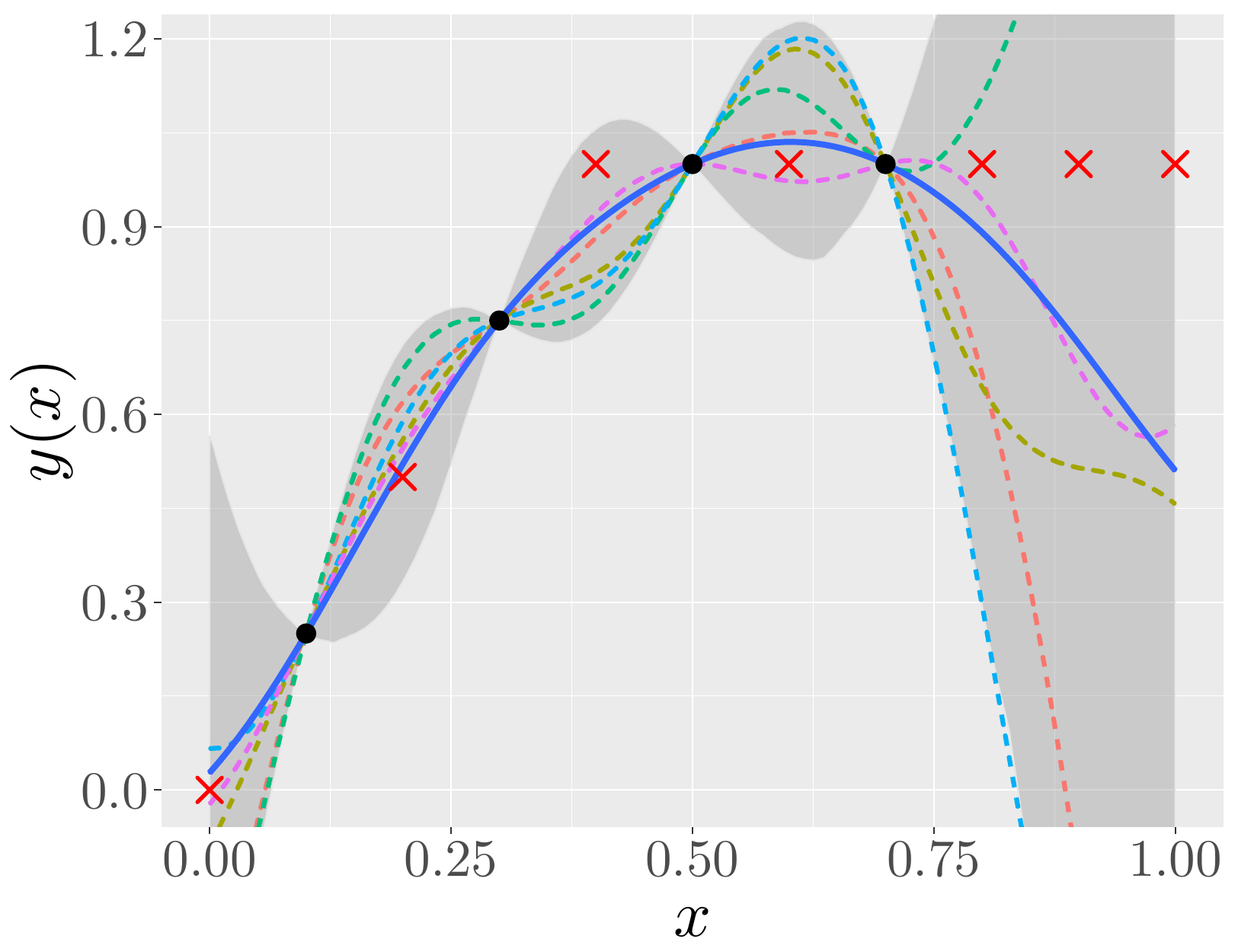}}	
	\subfigure[\scriptsize \label{subfig:toyExample3a4} Multiple contraints.]{\includegraphics[width=0.325\textwidth]{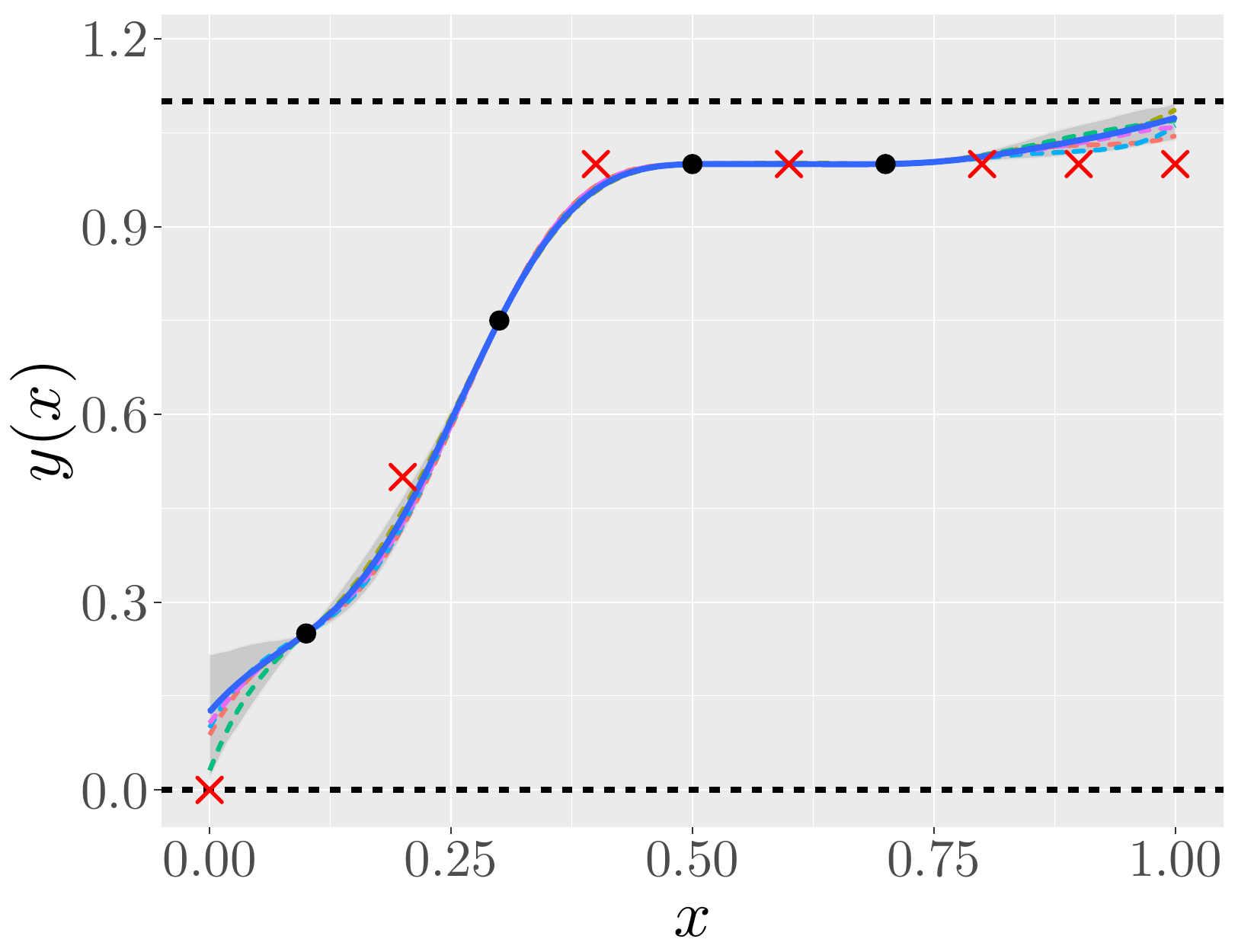}}	
	\subfigure[\scriptsize \label{subfig:toyExample3a5} Sequential constraints.]{\includegraphics[width=0.325\textwidth]{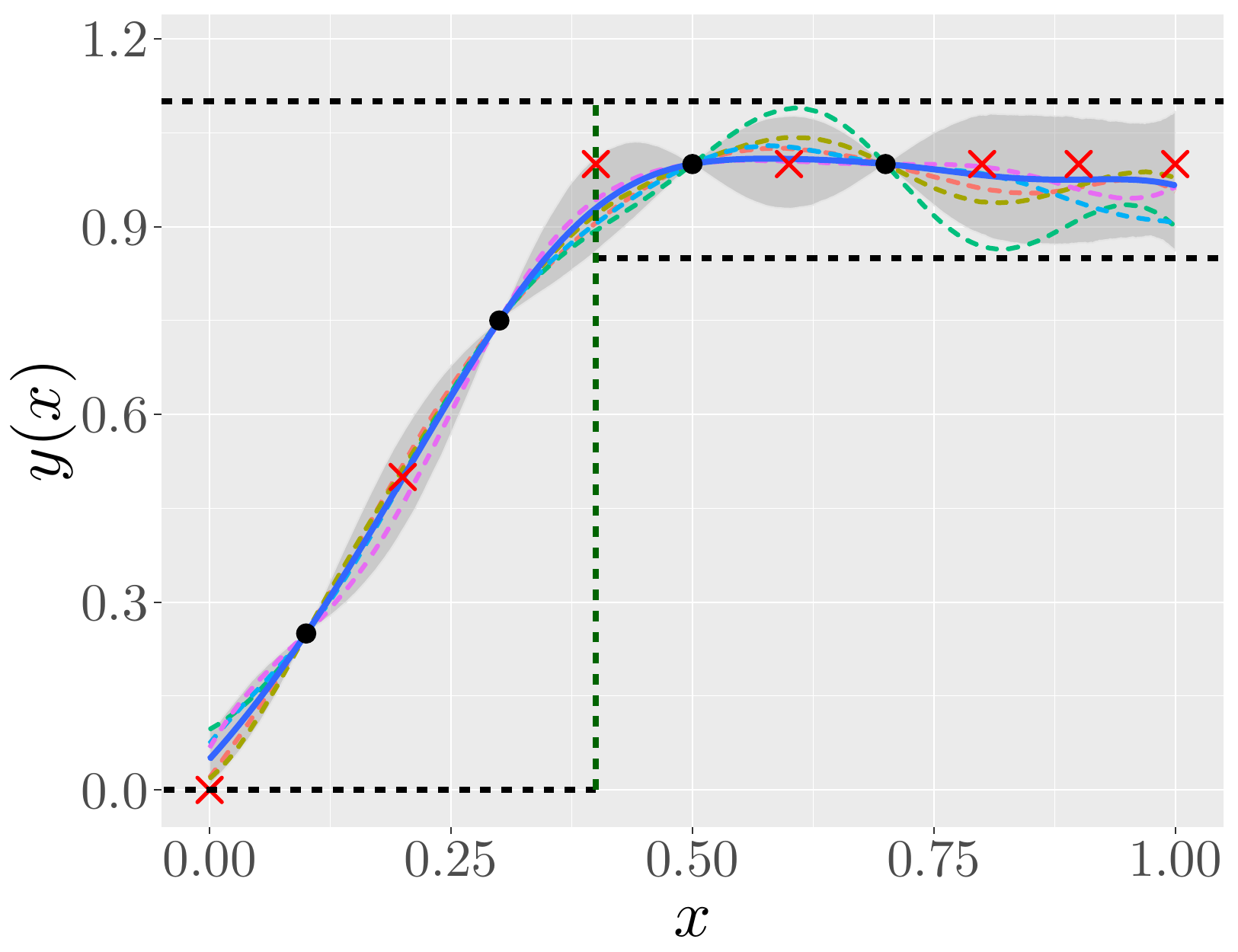}}
	\caption{Examples of Gaussian models with different types of constraints for the example 3 from \Cref{sec:GPwithICLinear}. \subref{subfig:toyExample3a4} Boundedness and monotonicity constraints are imposed. \subref{subfig:toyExample3a5} The two non-overlapping intervals are divided by a vertical dashed line at 0.4. In the first interval, boundedness and monotonicity constraints are taken into account. In the second interval, only boundedness is imposed. Panel description is the same as in \Cref{fig:toyExample1b}.}
	\label{fig:toyExample3}
\end{figure}
\vspace{-10pt}
\paragraph{Example 3} 
Since the bounds $(\Bl,\Bu)$ are not forced to be the same everywhere, it is possible to fix specific constraints over non-overlapping intervals. For instance if the interval is partitioned in $G$ subintervals, we consider the corresponding partition $\Bxi = [\begin{smallmatrix} \Bxi_1, & \cdots, & \Bxi_G \end{smallmatrix}]^\top$. Then, we can impose different types of inequality conditions in each group by considering the same structure used in Example 2. \cref{fig:toyExample3} shows an example where the function $y$ satisfies different behaviours in two non-overlapping intervals. The output increases monotonically and peaks at $y(0.4) = 1.0$. This kind of profile is met in different applications (e.g. step responses in control theory, protein profiles in molecular biology) \citep{Murphy2012ML,Kocijan2016GPControl}. We trained three models satisfying different conditions. For the case of multiple constraints, we imposed boundedness and monotonicity. For the case of sequential conditions, we divided the profile in two non-overlapping intervals satisfying different types of constraints. We used a SE covariance function with parameters $(\sigma^2 = 1.0, \; \theta =0.2)$. By imposing sequentially the constraints, we obtain less restricted uncertainties and more accurate models for data fitting.

\section{Simulating from the posterior distribution}
\label{sec:simPosterior}

As shown in \Cref{eq:trposterior}, the posterior distribution $\BLambda \Bxi|\{\BPhi \Bxi = \By, \Bl \leq \BLambda \Bxi \leq \Bu\}$ is truncated multinormal. It is supported on $\mathbb{R}^m$, where $m$ is the number of basis functions. Notice that $m$ should be chosen large enough for better approximations. A Monte Carlo (MC) algorithm based on rejection sampling was proposed in \citep{Maatouk2016RSM} using the posterior mode. This method, called rejection sampling from the mode (RSM), is an exact sampler that provides independent and identically distributed (iid) sample paths. However, the acceptance rate from RSM decreases when $m$ gets larger, providing a poor performance for high dimensional spaces. Another MC-based exact sampler was introduced by \citep{Botev2017MinimaxTilting} to deal with truncated multinormals in higher dimensions. It can both simulate multinormals under linear constraints, and estimate the probabilities that these constraints are satisfied, via minimax exponential tilting (ET). As RSM and ET are exact methods, we will use them as gold standards to evaluate the performance of the MCMC techniques that we describe now.

\subsection{MCMC for truncated multinormal distributions}
\label{subsec:MCMC}
MCMC approaches assume that samples follow a Markov chain, providing correlated samples but with a higher acceptance rate. Recently, efficient algorithms have been proposed for truncated multinormal distributions such as Gibbs sampling \citep{Benjamini2017fastGibbs}, Metropolis-Hastings \citep{Murphy2012ML}, and Hamiltonian Monte Carlo \citep{Pakman2014Hamiltonian}. In this section, we apply them to simulate from the posterior distribution of \Cref{eq:trposterior}.
\vspace{-10pt}
\paragraph{Gibbs sampling}
Algorithms based on Gibbs sampling are widely used to sample from truncated multinormals due to their easy implementation, and their reliable performances \citep{Murphy2012ML,Brooks2011handbook}. They sample each variable in turn conditionally on the values of the other ones \citep{Murphy2012ML}. Therefore, sampling from a truncated multinormal is reduced to sampling sequentially from conditional truncated (univariate) normals. Unlike RSM, there is no rejection step. However, the ``single site updating'' property may produce strong correlations, requiring to discard intermediate samples (thinning effect). Several studies have proposed efficient algorithms to obtain less correlated sample paths (e.g. collapsed Gibbs sampling, blocked Gibbs sampling) \citep{Murphy2012ML}. In this paper, we will use the fast Gibbs sampler proposed in \citep{Benjamini2017fastGibbs}.
\vspace{-10pt}
\paragraph{Metropolis Hastings (MH)}
MH-based algorithms propose to move all the coordinates at a time in each step to obtain less correlated simulations. Given a proposed state $\Bx'$, we either accept or reject the new state according to a given acceptance rule \citep{Murphy2012ML}. If the proposal is accepted, the new state is $\Bx'$, otherwise the new state remains at the previous state $\Bx$. For multinormal distributions, a symmetric Gaussian proposal is commonly used, i.e. $q(\Bx'|\Bx) = \normrnd{\Bx}{\eta \BSigma_{\Bx'|\Bx}}$ where $\eta$ is a scale factor. This approach is known as random walk Metropolis algorithm \citep{Murphy2012ML}. One can increase the acceptance rate by tuning properly the value of $\eta$.
\vspace{-10pt}
\paragraph{Hamiltonian Monte Carlo (HMC)}
Nowadays, hybrid methods have been subject to great attention from the statistical community due to the inclusion of physical interpretation that may provide useful intuition \citep{Brooks2011handbook,Neal1996BayesianLearning}. In \citep{Duane1987hybridHM}, an efficient hybrid approach was introduced using the properties of Hamiltonian dynamics. Later in \citep{Neal1996BayesianLearning}, the hybrid approach from \citep{Duane1987hybridHM} was extended to statistical applications, and was introduced formally as Hamiltonian Monte Carlo (HMC). The Hamiltonian dynamics provides distant proposal distributions producing less correlated sample paths without diminishing the acceptance rate. In this paper, we use the HMC-based approach for truncated multinormals introduced in \citep{Pakman2014Hamiltonian}. 

\subsection{Results}
In \Cref{tab:ess}, we evaluate the efficiency of the MC and MCMC approaches described in \Cref{subsec:MCMC} on the examples from \cref{fig:toyExample1b}. In order to reduce the simulation cost, we used $m = 30$ hat basis functions. Hence, the problem is to sample a vector of length 30 from a truncated multinormal distribution. We set the tuning hyperparameters such that the effective sample size (ESS) is within the ranges produced by both RSM and ET (grey columns). The ESS is a heuristic used commonly to evaluate the quality of correlated sample paths, and it gives an intuition on how many samples from the path can be considered independent \citep{Gong2016ESS}. A standard ESS is given by $\operatorname{ESS} = n_s/(1+2\sum_{k=1}^{n_s} \rho_k)$ where $n_s$ is the size of the sample path and $\rho_k$ is the sample autocorrelation with lag $k$. However, the drawback of this indicator is that it accepts negative correlations to evaluate the quality of mean estimators (e.g. for variance reduction). Thus, we suggest an alternative ESS which penalizes both positive and negative correlations: $\operatorname{ESS} = n_s/(1+2|\sum_{k=1}^{n_s} \rho_k|)$. Notice that it is equal to $n_s$ for iid sample paths, or smaller otherwise. We compute the ESS indicator for each coordinate of $\Bxi$, i.e. $\operatorname{ESS}_j = \operatorname{ESS}(\xi_j^1, \cdots, \xi_j^{n_s})$ for the $j$-th component of $\Bxi$ with $j = 1, \cdots, m$. We then compute quantiles $(q_{10\%},q_{50\%},q_{90\%})$ over the 30 resulting $\operatorname{ESS}$ values. To take into account cross-correlations from multivariate MCMC, we also compute the multivariate ESS (mvESS) proposed in \citep{Vats2015multiMCMC}. For mvESS, values higher than $n_s$ indicate the presence of negative correlations. In our case, we are interested in being around $n_s$. The size $n_s = 10^{4}$ is chosen to be larger than the minimum ESS required to obtain a proper estimation of the vector $\Bxi \in \realset{30}$: $\operatorname{minESS}(30) = 8563$ \citep{Gong2016ESS,Vats2015multiMCMC}. Finally, using the procedure proposed in \citep{Lan2016Sampling}, we test the efficiency of each method by computing the time normalised ESS (TN-ESS) at $q_{10\%}$ (worst case) using the CPU time in seconds, i.e. $\operatorname{TN-ESS} = q_{10\%}(\operatorname{ESS})/\operatorname{(CPU \ Time)}$. 

\Cref{tab:ess} shows the efficiency of MC/MCMC algorithms in terms of ESS indicators. Notice that for the two examples of \cref{subfig:toyExample1b2,subfig:toyExample1b3}, the MC/MCMC techniques tend to produce similar ESS intervals, but RSM and MH are the most expensive procedures due to their high rejection rates. Although the Gibbs sampler requires to discard a large amount of simulations in order to be within reasonable ESS ranges, it also presents accurate results in both efficiency and CPU time. In general, both ET and HMC methods present more efficient results than the other samplers in the first two examples. For more complex constraints as in the example of \cref{subfig:toyExample1b4}, the efficiency is reduced dramatically for all the methods. For example, the acceptance rates of both RSM and MH are so small that sampling was not feasible in a reasonable time. For the other methods, the TN-ESS rates are smaller but HMC still gives a reasonable value (almost three times larger than for ET), concluding that HMC is an efficient sampler for the proposed framework.
\begin{table}
	\centering
	\scriptsize
	\caption{Efficiency of MC/MCMC samplers (by rows) in term of ESS-based indicators (by columns). \textbf{Samplers:} Rejection Sampling from the Mode (RSM) \citep{Maatouk2016RSM}, Exponential Tilting (ET) \citep{Botev2017MinimaxTilting}, Gibbs Sampling (Gibbs) \citep{Benjamini2017fastGibbs}, Metropolis-Hasting (MH) \citep{Murphy2012ML}, Hamiltonian Monte Carlo (HMC) \citep{Pakman2014Hamiltonian}. \textbf{Indicators:} effective sample size (ESS): $\operatorname{ESS} = n/(1+2|\sum_{\forall k} \rho_k|)$, multivariate ESS (mvESS) \citep{Vats2015multiMCMC}, time normalised ESS (TN-ESS) \citep{Lan2016Sampling}.}
	\label{tab:ess}	
	\begin{tabular}{c|c|cggc|c}
		\toprule
		\multirow{2}{*}{Toy Example} & \multirow{2}{*}{Method} & CPU Time & ESS $[\times10^4]$& mvESS & TN-ESS & \multirow{2}{*}{Hyperparameter} \\
		& & $[s]$ & $(q_{10\%},q_{50\%},q_{90\%})$ & $[\times10^4]$ &  $[\times10^4 s^{-1}]$ & \\
		\midrule
		& RSM & $99.06$ & $(0.81,0.89,0.96)$ & $1.22$ & $0.01$ & - \\
		Toy & ET & $\textbf{0.44}$ & $(0.83,0.90,0.99)$ & $1.17$ & $\textbf{1.88}$ & - \\		
		\Cref{subfig:toyExample1b2} & Gibbs & $3.54$ & $(0.81,0.91,0.93)$ & $1.16$ & $0.23$ & $\operatorname{thinning} = 200$ \\
		(Boundedness) & MH & $52.21$ & $(0.77,0.87,0.95)$ & $1.21$ & $0.01$ & $\eta = 1$ \\
		& HMC & $\textbf{0.44}$ & $(0.72,0.76,0.91)$ & $1.26$ & $1.64$ & - \\
		\midrule
		& RSM & $190.62$ & $(0.86,0.90,0.93)$ & $1.21$ & $0.00$ &- \\
		Toy & ET & $0.77$ & $(0.84,0.91,0.97)$ & $1.18$ & $1.09$ & - \\	
		\Cref{subfig:toyExample1b3} & Gibbs & $3.04$ & $(0.80,0.93,0.99)$ & $1.15$ & $0.26$ & $\operatorname{thinning} = 200$ \\
		(Monotonicity) & MH & $96.64$ & $(0.81,0.91,0.99)$ & $1.23$ & $0.01$ & $\eta  = 1$ \\
		& HMC & $\textbf{0.33}$ & $(0.73,0.79,0.88)$ & $1.28$ & $\textbf{2.22}$ & - \\
		\midrule
		& RSM & - & - & - & - & - \\
		Toy & ET & $41.16$ & $(0.80,0.88,0.95)$ & $1.23$ & $0.02$ & - \\
		\Cref{subfig:toyExample1b4} & Gibbs & $40.28$ & $(0.28,0.44,0.77)$ & $1.09$ & $0.01$ & $\operatorname{thinning} = 1000$ \\
		(Bounded Monotonicity) & MH & - & - & - & - & - \\		
		& HMC & $\textbf{12.92}$ & $(0.72,0.85,0.99)$ & $1.26$ & $\textbf{0.06}$ & - \\
		\bottomrule
	\end{tabular}
\end{table}

\section{Covariance parameter estimation with inequality constraints}
\label{sec:ml}

\subsection{Conditional maximum likelihood}
\label{subsec:cml}
Let $\{k_\Btheta; \Btheta \in \BTheta\}$, with $\BTheta \subset \realset{p}$, be a parametric family of covariance functions. We assume in this section that the zero-mean GP $Y$ has covariance function $k_{\Btheta^\ast}$ for an unknown $\Btheta^\ast \in \BTheta$. We consider the problem of estimating $\Btheta^\ast$. Commonly, $\Btheta^\ast$ is estimated by maximising the unconstrained Gaussian likelihood $p_\Btheta(\BY_m)$ with respect to $\Btheta \in \BTheta$ (maximum likelihood, ML), with $\BY_m = [\begin{smallmatrix} Y_m(x_1), & \cdots, & Y_m(x_n) \end{smallmatrix}]^\top$. Let $\mathcal{L}_{m}(\Btheta)$ be the log likelihood of $\Btheta$
\begin{equation}
\mathcal{L}_{m}(\Btheta) =  \log p_\Btheta(\BY_m) = - \frac{1}{2} \log (\det (\BK_\Btheta)) - \frac{1}{2} \BY_m^\top \BK_\Btheta^{-1} \BY_m - \frac{n}{2} \log 2 \pi,
\label{eq:ML} 
\end{equation}	
with $\BK_\Btheta = \BPhi \BGamma_\Btheta \BPhi^\top$ and $\BGamma_\Btheta = (k_\Btheta(t_i,t_j))_{1 \leq i,j \leq m}$. Then, the ML estimation (MLE) is
\begin{equation}
\widehat{\Btheta}_{\text{MLE}} = \operatorname{\arg \max}\limits_{\Btheta \in \BTheta} \ \mathcal{L}_{m}(\Btheta).
\label{eq:MLE}
\end{equation}	

When we maximise the likelihood of \Cref{eq:MLE}, we are looking for a parameter $\Btheta$ that improves the ability of our model to explain the data \citep{Rasmussen2005GP}. However, because the unconstrained ML itself does not take into account the constraints $\Bxi \in \mathcal{C}$, the estimated $\widehat{\Btheta}_{\text{MLE}}$ may produce less realistic models. Here, we suggest to use the constrained likelihood. Let $p_\Btheta(\BY_m| \Bxi \in \mathcal{C})$ be the conditional probability density function of $\BY_m$ given $\Bxi \in \mathcal{C}$, when $Y$ has covariance function $k_\Btheta$. By using Bayes' theorem, the constrained log likelihood $\mathcal{L}_{\mathcal{C},m}(\Btheta) = \log p_\Btheta(\BY_m|\Bxi \in \mathcal{C})$ is
\begin{equation}
\label{eq:CML} \mathcal{L}_{\mathcal{C},m}(\Btheta)
= \log p_\Btheta(\BY_m) + \log P_\Btheta(\Bxi \in \mathcal{C}|\BPhi \Bxi = \BY_m) - \log P_\Btheta(\Bxi \in \mathcal{C}),
\end{equation}
where the first term is the unconstrained log-likelihood, and the last two terms depend on the inequality constraints. Then, the constrained ML (CML) estimator is given by
\begin{equation}
\widehat{\Btheta}_{\text{CMLE}} = \operatorname{\arg \max}\limits_{\Btheta \in \BTheta}  \mathcal{L}_{\mathcal{C},m}(\Btheta).
\label{eq:CMLE}
\end{equation}
Notice that $P_\Btheta(\Bxi \in \mathcal{C} |\BPhi \Bxi = \BY_m)$ and $P_\Btheta(\Bxi \in \mathcal{C})$ are Gaussian orthant probabilities. As they have no explicit expressions, numerical procedures have been investigated \citep{Botev2017MinimaxTilting,Genz1992numericalcomputation}. Hence, the likelihood evaluation and optimisation of \Cref{eq:CML,eq:CMLE} have to be done numerically.

\subsection{Simulation study}
\label{subsec:cmlresults}
To assess the performance of the estimator of \Cref{eq:CMLE}, we simulated sample paths from a zero-mean constrained GP $Y$ using a Mat\'ern 5/2 covariance function with $\Btheta^\ast = (1, 0.2)$.\footnote{Mat\'ern 5/2 kernel function: $k_\Btheta(x-x') = \sigma^2 \left(1 + \frac{\sqrt{5} |x-x'|}{\theta} + \frac{5}{3} \frac{(x-x')^2}{\theta^2}\right) \exp\left\{-\frac{\sqrt{5}|x-x'|}{\theta} \right\}$ with $\Btheta = (\sigma^2,\theta)$.} We sampled 100 realizations of $Y$ on $\mathcal{D} = [0, 1]$ such that $Y \in [-1, 1]$. Then, for each realization, we trained a constrained model assuming boundedness conditions with bounds $[-1, 1]$. We used 10 training points regularly spaced in $\mathcal{D}$ and $m = 50$ hat basis functions. For ML and CML optimisations, we used multistart with ten initial vectors of covariance parameters located on a maximin Latin hypercube DoE with $\sigma^2 \in [0, 2]$ and $\theta \in [0.04, 0.40]$. As the parameters of the Mat\'ern 5/2 covariance function are non-microergodic for one-dimensional input spaces, they cannot be estimated consistently \citep{Zhang2004InconsistentEst}. Therefore, we evaluated the quality of the likelihood estimators using the consistently estimable ratio $\rho = \sigma^2/\theta^5$. In \cref{subfig:cmleToyExample1fig2}, we show the boxplots of the estimated ratios obtained with the 100 simulations drawn from the GP. Notice that the estimated logged ratios $\log \widehat{\rho}_{\text{MLE}}$ and $\log \widehat{\rho}_{\text{CMLE}}$ are reasonably close to the true value $\log \rho^\ast = \log  (1^2/0.2^5)$, but the one using CMLE is slightly better in terms of variance and bias. 

\begin{figure}[t!]
	\centering
	\subfigure[\scriptsize \label{subfig:cmleToyExample1fig2}]{\includegraphics[width=0.32\textwidth]{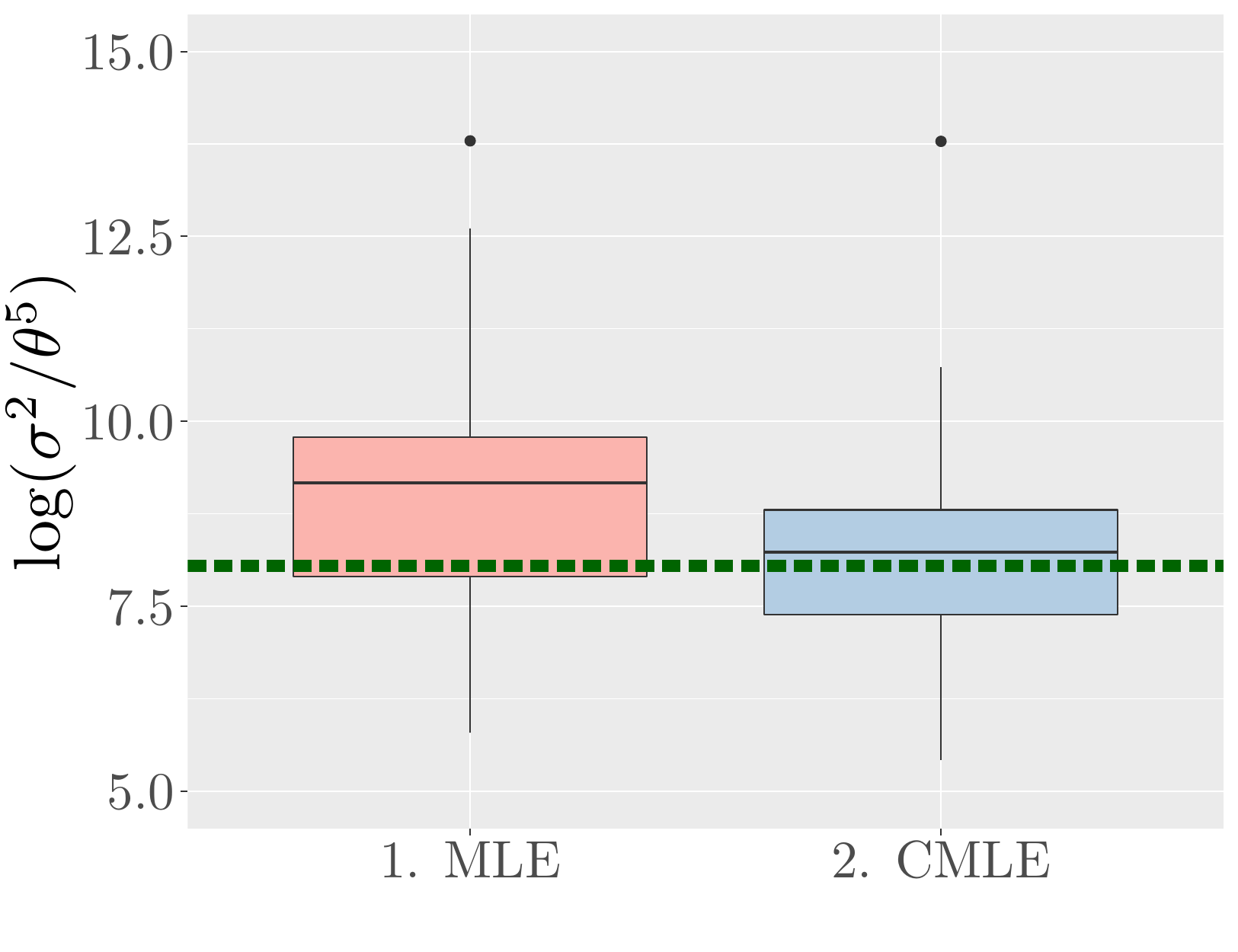}}
	\subfigure[\scriptsize \label{subfig:cmleToyExample1fig4}]{\includegraphics[width=0.32\textwidth]{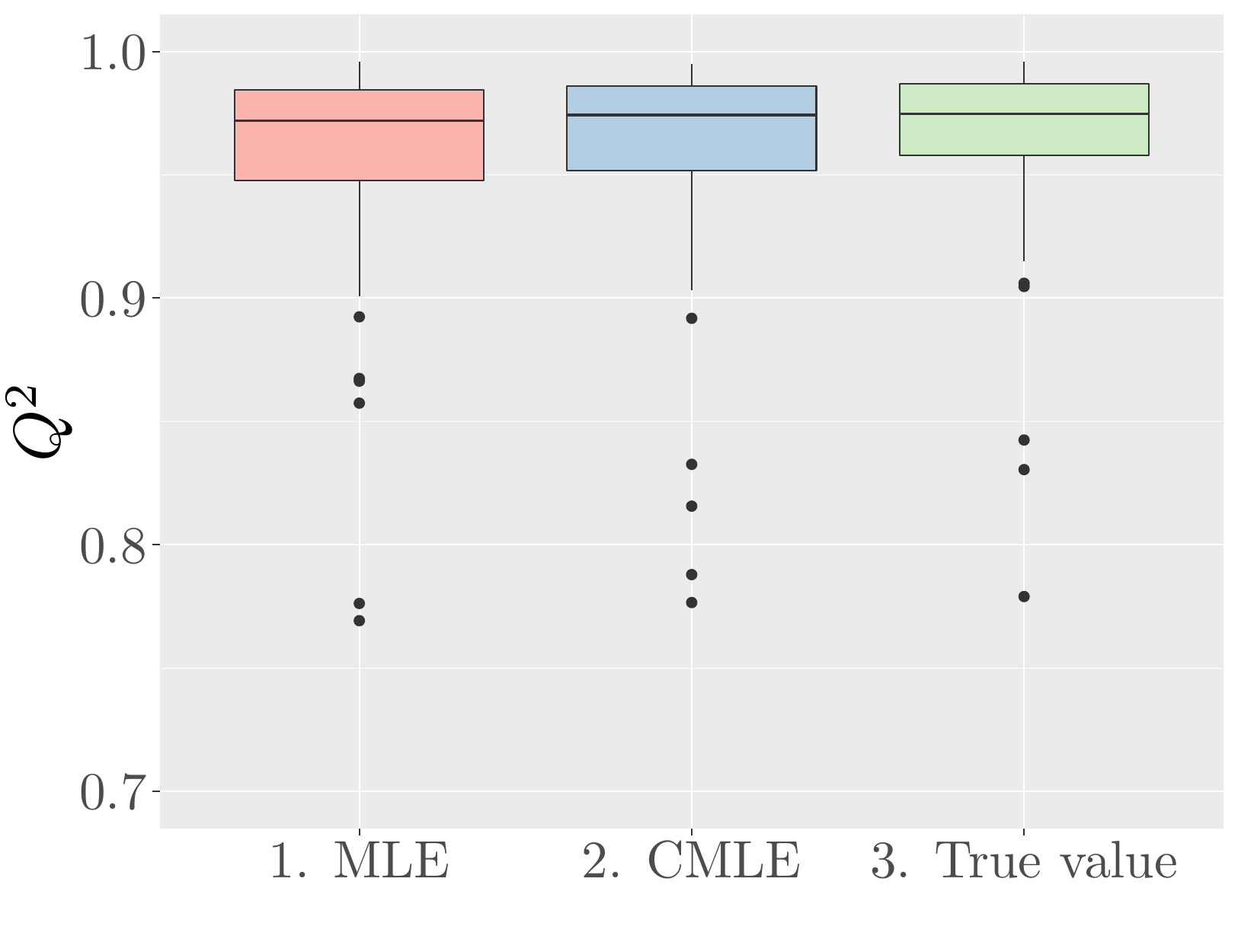}}
	\subfigure[\scriptsize \label{subfig:cmleToyExample1fig5}]{\includegraphics[width=0.32\textwidth]{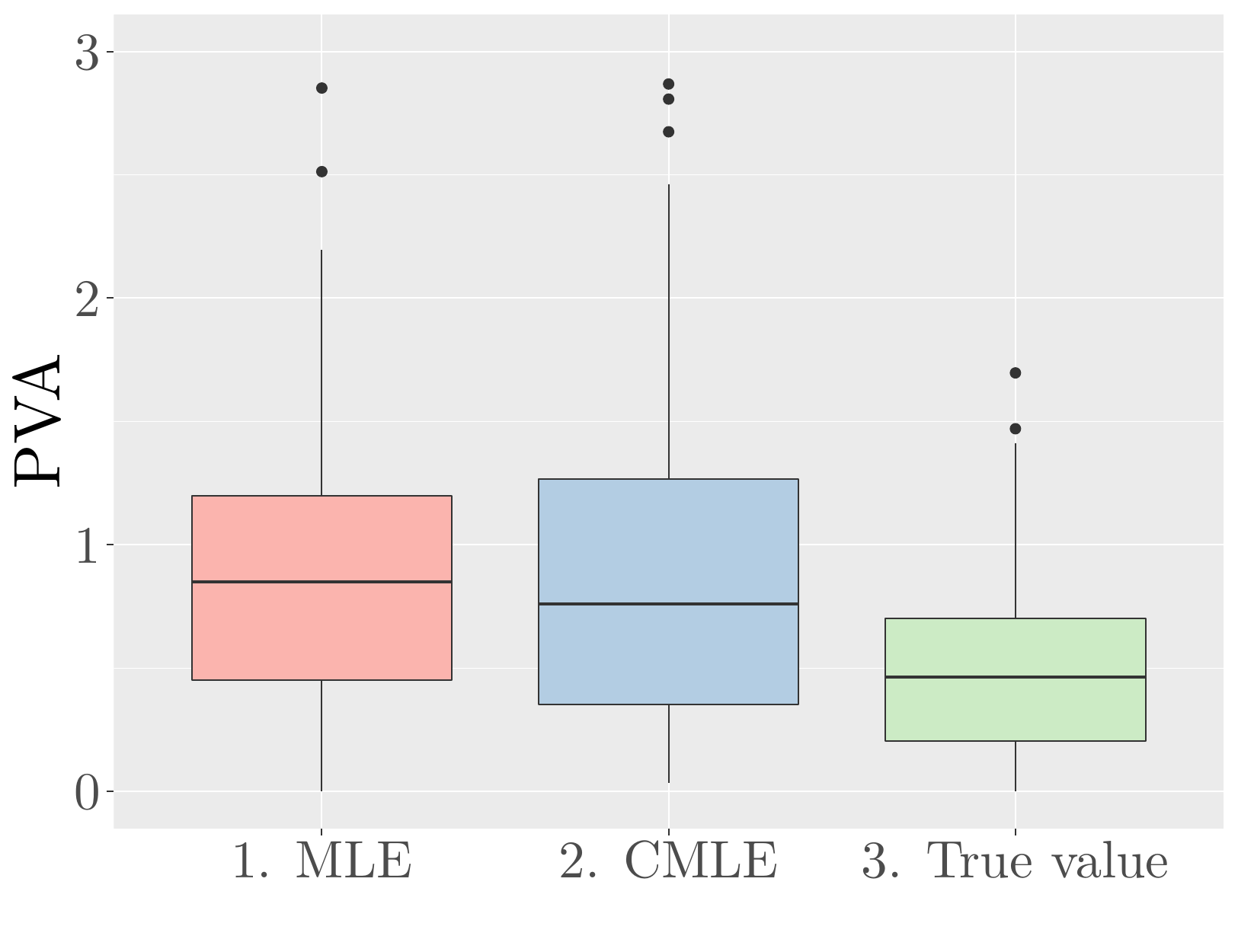}}
	
	\subfigure[\scriptsize \label{subfig:cmleToyExample1fig3d} $\widehat{\Btheta}_{\text{MLE}}$]{\includegraphics[width=0.32\textwidth]{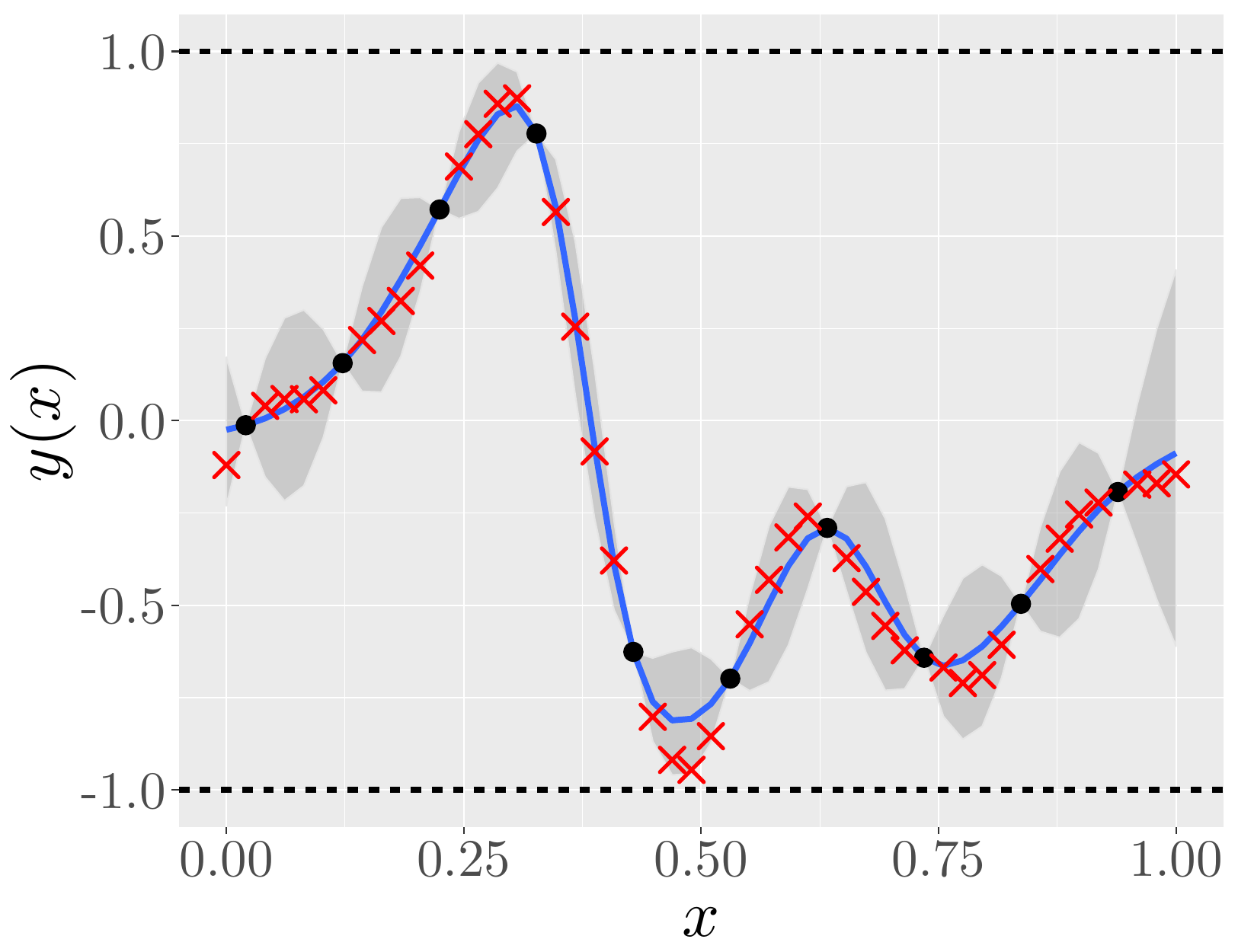}}
	\subfigure[\scriptsize \label{subfig:cmleToyExample1fig3e} $\widehat{\Btheta}_{\text{CMLE}}$]{\includegraphics[width=0.32\textwidth]{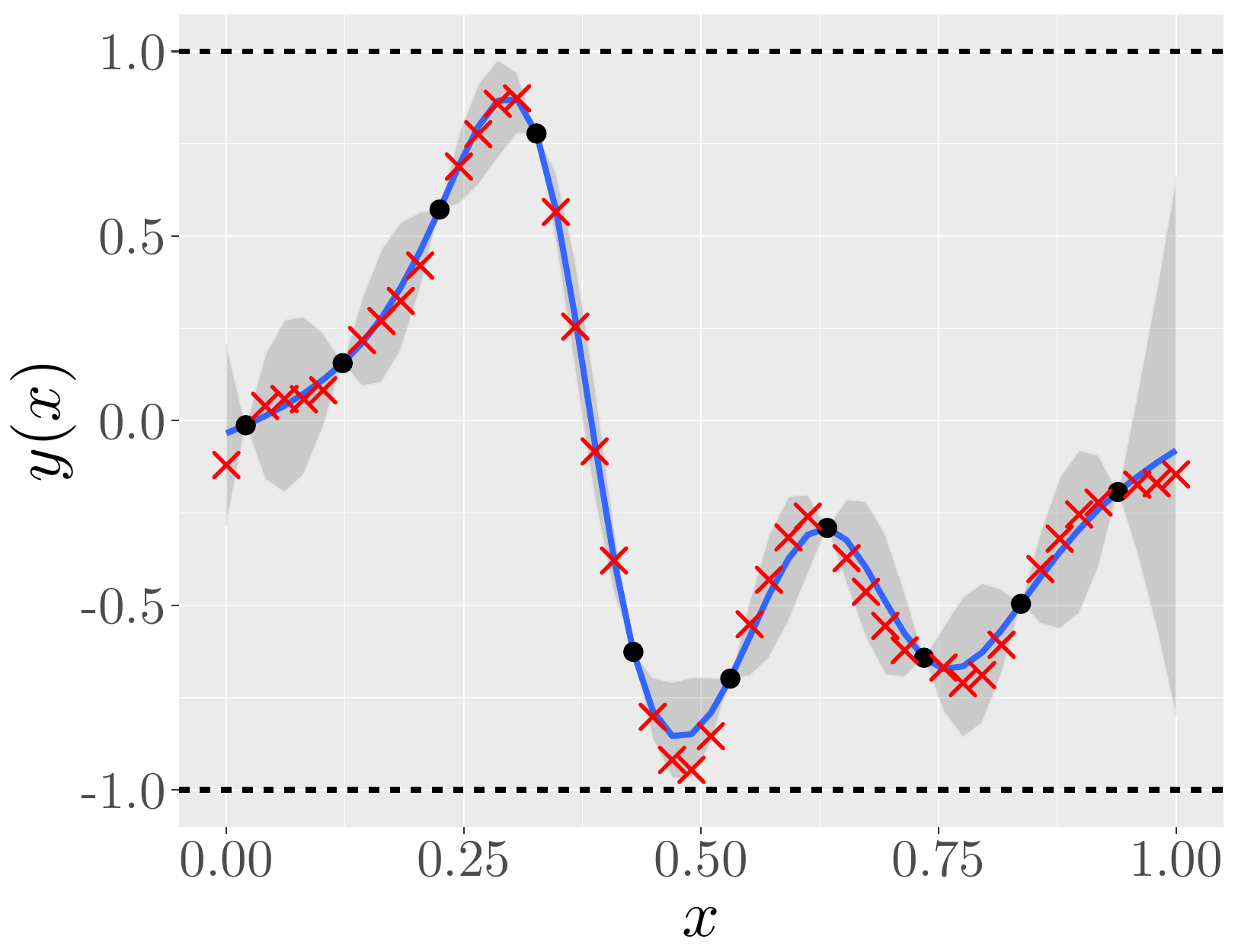}}	
	\subfigure[\scriptsize \label{subfig:cmleToyExample1fig3f} $\Btheta^\ast$]{\includegraphics[width=0.32\textwidth]{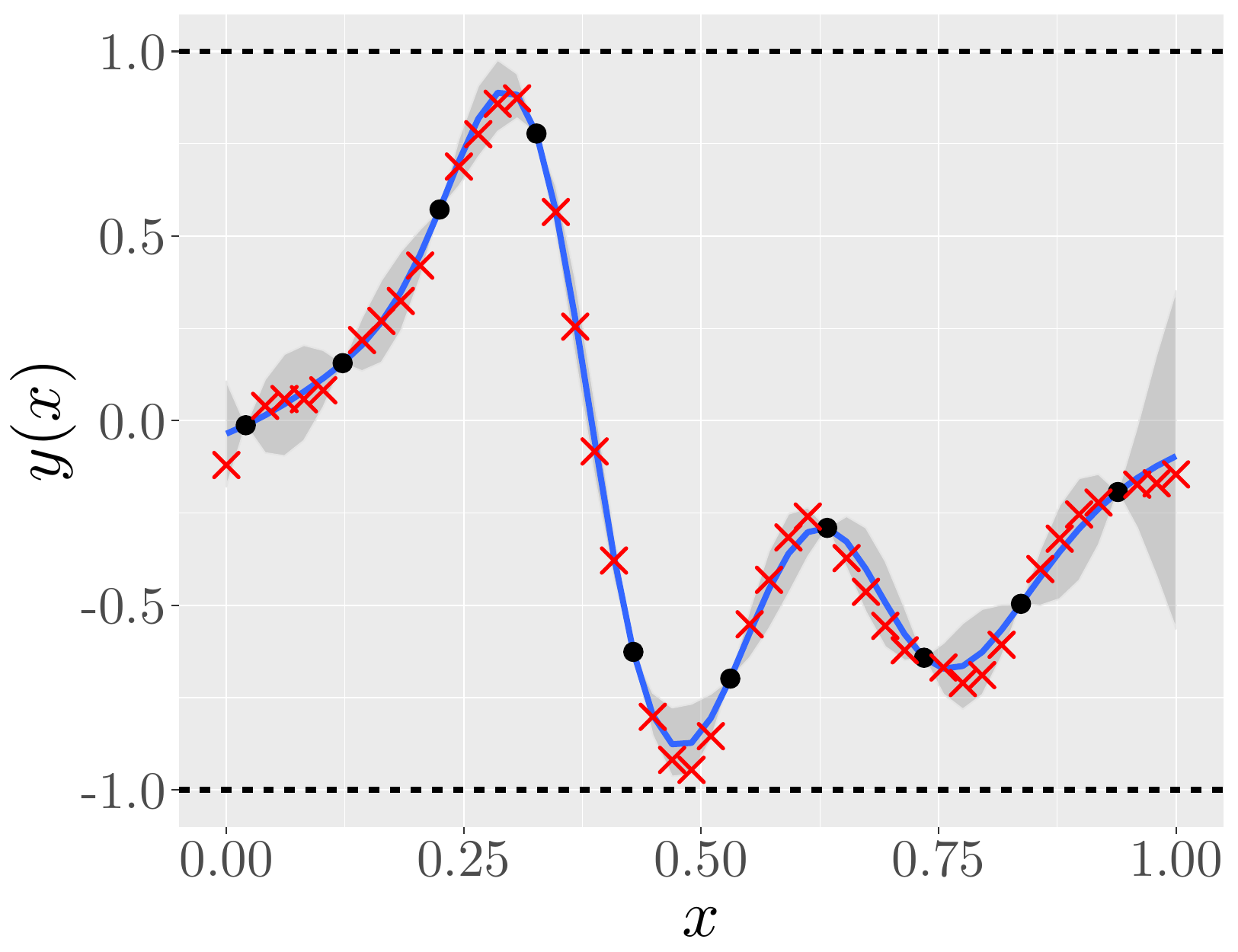}}
	
	\caption{Assessment of the likelihood (ML) and conditional likelihood (CML) estimators for 100 samples drawn from a GP with true parameters $\Btheta^\ast = (1, 0.2)$, and satisfying the bounds $[-1,1]$. \subref{subfig:cmleToyExample1fig2} Estimated values of the log-ratio $\log \rho^\ast = \log (1^2/0.2^5)$ (dashed green line) using MLE and CMLE. Predictive accuracies are evaluated using the \subref{subfig:cmleToyExample1fig4} $Q^2$ and \subref{subfig:cmleToyExample1fig5} PVA criteria. Predictions are showed for one sample using \subref{subfig:cmleToyExample1fig3d} $\widehat{\Btheta}_{\text{MLE}}$, \subref{subfig:cmleToyExample1fig3e} $\widehat{\Btheta}_{\text{CMLE}}$, and \subref{subfig:cmleToyExample1fig3f} $\Btheta^\ast$. For the predictions, panel description is the same as \Cref{fig:toyExample1b}.}
	\label{fig:cmleToyExample1fig2}
\end{figure}

We also evaluate the efficiency of the two estimators in terms of prediction accuracy. For each realization, we estimated the covariance parameters $\Btheta^\ast$ by MLE and CMLE. We then simulated the posterior at 50 new regularly spaced locations using the estimated covariance parameter $\widehat{\Btheta}$. The conditional sample paths were simulated via HMC. We used the $Q^2$ and predictive variance adequation (PVA) criteria to assess the quality of predictions over the 50 new values. Denoting by $n_t$ the number of test points, $z_1, \cdots, z_{n_t}$ and $\widehat{z}_1, \cdots, \widehat{z}_{n_t}$ the sets of test and predicted observations (respectively), then $Q^2 = 1 - \sum_{i = 1}^{n_t} (\widehat{z}_i - z_i)^2/\sum_{i = 1}^{n_t} (\overline{z} - z_i)^2$, where $\overline{z}$ is the mean of the test data. Notice that for noise-free observations, the $Q^2$ indicator is equal to one if the predictors $\widehat{z}_1, \cdots, \widehat{z}_{n_t}$ are exactly equal to the test data (ideal case), zero if they are equal to the constant prediction $\overline{z}$, and negative if they perform worse than $\overline{z}$. On the other hand, PVA assesses the quality of predictive variances $\widehat{\sigma}_i^2$, for all $i = 1, \cdots, n_t$, using the prediction errors: $\operatorname{PVA} = \big|\log \big( \frac{1}{{n_t}} \sum_{i=1}^{n_t} (z_i - \widehat{z}_i)^2/ \widehat{\sigma}_i^2 \big) \big|$ \citep{Bachoc2013CrossValidation}. Notice that, if the standardized residuals $(z_i - \widehat{z_i})/\widehat{\sigma_i}$ were an iid sample, then the PVA value would be close to $0$ by the law of large numbers. Departure from $0$ may indicate either a lack of independence, or a biased estimation of the prediction uncertainty $\sigma_i$. In that sense, smaller PVA values may correspond to more reliable confidence intervals.

\Cref{fig:cmleToyExample1fig2} shows the inferred sample paths for one realization using \ref{subfig:cmleToyExample1fig3d} $\widehat{\Btheta}_{\text{MLE}}$, \ref{subfig:cmleToyExample1fig3e} $\widehat{\Btheta}_{\text{CMLE}}$, and \ref{subfig:cmleToyExample1fig3f} $\Btheta^\ast$. We observe that, in the three cases, the models tend to fit properly the test data with accurate confidence intervals. According to \cref{subfig:cmleToyExample1fig4,subfig:cmleToyExample1fig5}, we see they provide $Q^2$ and PVA median values close to the ones obtained when the true $\Btheta^\ast$ is used. Although the predictive accuracies obtained using CMLE are slightly better than for MLE in terms of bias, we observe larger variances in the PVA criterion. Since the orthant Gaussian terms from the conditional likelihood of \Cref{eq:CML} have to be approximated, we believe that this affects the effectiveness of CMLE. Furthermore, existing estimators of orthant Gaussian probabilities present some numerical instabilities limiting the CML optimisation routine and providing suboptimal results. Finally, notice that MLE also provides reliable predictions. This suggests that, if we properly take into account the inequality constraints in the posterior distribution, the unconstrained ML optimisation can be used for practical implementation.

\subsection{Asymptotic properties}
\label{subsec:asymptotic}
Now, we study the asymptotic properties of likelihood-based estimators for constrained GPs. We consider the fixed-domain asymptotic setting \citep{stein1999interpolation}, with a dense sequence of observation points in a bounded domain. It should be noted that, when the GP is not constrained, significant contributions have been provided to study the consistency or asymptotic normality of the ML estimator \citep{Zhang2004InconsistentEst,Du2009AsympMLE,Ying1993ExpCov,Loh2005Matern,Loh2000Matern32}. In this paper, we show that, loosely speaking, any consistency result for ML with unconstrained GPs, is preserved when adding either boundedness, monotonicity or convexity constraint. Furthermore, this consistency occurs for both the unconditional and conditional likelihood functions.

For $\kappa \in \{0,1,2\}$, let $Y$ be a GP with $C^\kappa$ trajectories on a bounded set $\mathbb{X} \subset \realset{d}$. Let $\mathcal{E}_\kappa$ be one of the following convex set of functions
\begin{equation}
	\mathcal{E}_\kappa = 
	\begin{cases}
	f \ : \ \mathbb{X} \to \realset{}, f \mbox{ is } C^0 \mbox{ and } \forall \Bx \in \mathbb{X}, \ \ell \leq f(\Bx) \leq u & \mbox{if } \kappa = 0, \\
	f \ : \ \mathbb{X} \to \realset{}, f \mbox{ is } C^1 \mbox{ and } \forall \Bx \in \mathbb{X}, \ \forall i = 1, \cdots, d, \ \frac{\partial}{\partial x_i} f(\Bx) \geq 0 & \mbox{if } \kappa = 1, \\
	f \ : \ \mathbb{X} \to \realset{}, f \mbox{ is } C^2 \mbox{ and } \forall \Bx \in \mathbb{X}, \ \frac{\partial^2}{\partial \Bx^2} f(\Bx) \mbox{ is a non-negative definite matrix} & \mbox{if } \kappa = 2.
	\end{cases}
	\label{eq:convexsetFun2}
\end{equation}
For the purpose of asymptotic analysis, we do not consider the hat basis functions anymore, and we focus on the GP $Y$ and the observation vector $\BY_n = [\begin{smallmatrix} Y(x_1), & \cdots, & Y(x_n) \end{smallmatrix}]^\top$. We study the (unconstrained) likelihood function based on $p_\Btheta (\BY_n)$ and the constrained likelihood function based on $p_\Btheta (\BY_n| Y \in \mathcal{E}_\kappa)$. Notice that these quantities are more challenging to evaluate in practice than for \Cref{subsec:cml,subsec:cmlresults}, but the purpose is a theoretical analysis.

In \cref{prop:ml}, we prove that if ML is consistent, when considering the (unconditional) distribution of $Y$, then it remains consistent when conditioning to $Y \in \mathcal{E}_\kappa$. In \cref{prop:cml}, we prove that, under mild conditions, implying the consistency of the ML estimator with the (unconditional) distribution of $Y$, the CML remains consistent when adding the constraint $Y \in \mathcal{E}_\kappa$. The proofs of \cref{prop:ml,prop:cml} require supplementary conditions and lemmas, which are given in \cref{app:asymtotic}.
\begin{proposition}
	\label{prop:ml}
	Let $Y$ be a zero-mean GP on a bounded set $\mathbb{X} \subset \realset{d}$ with covariance function $k$ satisfying Condition \ref{cond:Balls}. 
	Let $\BTheta$ be a compact set on $(0,\infty)^{d+1}$.
	Let $k_\Btheta$ be the covariance function of $x \to \sigma Y(\theta_1 x_1, \cdots, \theta_d x_d)$ for $\Btheta = (\sigma^2, \theta_1, \cdots, \theta_d) \in \BTheta$. 
	Let $\Btheta^\ast = (1, \cdots, 1)$. Remark that $k = k_{\Btheta^\ast}$ and assume that $\Btheta^\ast \in \BTheta$.
	Let $(\Bx_i)_{i \in \mathds{N}}$ be a dense sequence in $\mathbb{X}$.
	Let $\BY_n = [\begin{smallmatrix} Y(x_1), & \cdots, & Y(x_n) \end{smallmatrix}]^\top$.  
	Let 
	\begin{displaymath}
	\mathcal{L}_{n}(\Btheta) = - \frac{1}{2} \log (\det (\BR_\Btheta)) - \frac{1}{2} \BY_n^\top \BR_\Btheta^{-1} \BY_n - \frac{n}{2} \log 2 \pi,
	\end{displaymath}
	with $\BR_\Btheta = (k_\Btheta(x_i,x_j))_{1\leq i,j\leq n}$.
	Let $\widehat{\Btheta} \in \operatorname{\arg \max}\limits_{\Btheta \in \BTheta} \ \mathcal{L}_{n}(\Btheta)$.
	Assume that $\forall \varepsilon > 0$, 
	\begin{displaymath}
	P(\|\widehat{\Btheta}- \Btheta^\ast \|\geq \varepsilon) \xrightarrow[n \to \infty]{} 0.
	\end{displaymath}
	Let $\kappa \in \{0,1,2\}$. Let $\mathcal{E}_\kappa$ be as in \Cref{eq:convexsetFun2}. Then, we have $P(Y \in \mathcal{E}_\kappa) > 0$ from \Cref{lem:nonzero,lem:nonzeroC1,lem:nonzeroC2}, and thus
	\begin{displaymath}
	P(\|\widehat{\Btheta}- \Btheta^\ast \|\geq \varepsilon \ |\ Y \in \mathcal{E}_\kappa) \xrightarrow[n \to \infty]{} 0.
	\end{displaymath}
\end{proposition}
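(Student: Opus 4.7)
The plan is to reduce the claim to an elementary conditional-probability bound, using that $P(Y \in \mathcal{E}_\kappa)>0$ by the three lemmas cited in the statement.

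First, I would invoke \Cref{lem:nonzero,lem:nonzeroC1,lem:nonzeroC2} to get that $p_\kappa := P(Y \in \mathcal{E}_\kappa) > 0$ for each $\kappa \in \{0,1,2\}$. Crucially, $p_\kappa$ is a fixed positive constant that does not depend on $n$, since $\mathcal{E}_\kappa$ is a property of the trajectories of $Y$ and is independent of the observation design $(\Bx_i)_{i\in\mathds{N}}$.

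Second, writing $A_{n,\varepsilon} := \{\|\widehat{\Btheta}-\Btheta^\ast\| \geq \varepsilon\}$, and observing that $\widehat{\Btheta}$ is a measurable function of $\BY_n = (Y(x_1),\ldots,Y(x_n))^\top$ and hence of $Y$, the events $A_{n,\varepsilon}$ and $\{Y \in \mathcal{E}_\kappa\}$ both live on the underlying probability space and the elementary inequality
\begin{align*}
P(A_{n,\varepsilon} \mid Y \in \mathcal{E}_\kappa)
= \frac{P(A_{n,\varepsilon} \cap \{Y \in \mathcal{E}_\kappa\})}{P(Y \in \mathcal{E}_\kappa)}
\leq \frac{P(A_{n,\varepsilon})}{p_\kappa}
\end{align*}
applies. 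Third, the consistency assumption gives $P(A_{n,\varepsilon}) \to 0$ as $n \to \infty$ for every $\varepsilon > 0$, and dividing by the fixed positive constant $p_\kappa$ preserves convergence to zero, yielding the stated conditional convergence.

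No step of \Cref{prop:ml} itself presents a genuine obstacle: the whole content of the proposition is the combination of the positivity results \Cref{lem:nonzero,lem:nonzeroC1,lem:nonzeroC2} with the one-line conditional-probability bound above. The real technical difficulty lies in those lemmas, which have to establish that the Gaussian trajectories of $Y$ place positive mass on being uniformly bounded in $[\ell,u]$, coordinate-wise non-decreasing, or having a pointwise positive semidefinite Hessian; these rely on the full-support structure of the Gaussian measure in the appropriate $C^\kappa$ space (via \textbf{Condition}~\ref{cond:Balls}) and are where the smoothness assumption on $Y$ enters.
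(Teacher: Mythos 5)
Your proposal is correct and follows exactly the paper's own argument: positivity of $P(Y \in \mathcal{E}_\kappa)$ from \Cref{lem:nonzero,lem:nonzeroC1,lem:nonzeroC2}, the elementary bound $P(A_{n,\varepsilon} \mid Y \in \mathcal{E}_\kappa) \leq P(A_{n,\varepsilon})/P(Y \in \mathcal{E}_\kappa)$, and the assumed unconditional consistency. Your closing remark that the real technical content lies in the positivity lemmas is also the right assessment.
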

\begin{proof}
	We have
	\begin{displaymath}
	P(\|\widehat{\Btheta}- \Btheta^\ast \| \geq \ \varepsilon | \ Y \in \mathcal{E}_\kappa) = \frac{P(\|\widehat{\Btheta}- \Btheta^\ast\| \geq \varepsilon, \ Y \in \mathcal{E}_\kappa)}{P(Y \in \mathcal{E}_\kappa)} \leq \frac{P(\|\widehat{\Btheta}- \Btheta^\ast\| \geq \varepsilon)}{P(Y \in \mathcal{E}_\kappa)}.
	\end{displaymath}
	Since $P(Y \in \mathcal{E}_\kappa) > 0 $ is fixed, and $P(\|\widehat{\Btheta}- \Btheta^\ast\| \geq \varepsilon) \xrightarrow[n \to \infty]{} 0$, the result follows.
\end{proof}

\begin{proposition}
	\label{prop:cml}	
	We use the same notations and assumptions as in \cref{prop:ml}.
	Let $\kappa \in \{0,1,2\}$ be fixed. Let $P_\Btheta$ be the distribution of $Y$ with covariance function $k_\Btheta$.
	Let 
	\begin{displaymath}
	\mathcal{L}_{\mathcal{C},n}(\Btheta) = \mathcal{L}_{n}(\Btheta) + \log P_\Btheta (Y \in \mathcal{E}_\kappa| \BY_n) - \log P_\Btheta (Y \in \mathcal{E}_\kappa).
	\end{displaymath}
	Assume that $\forall \varepsilon > 0$ and $\forall M < \infty$,
	\begin{displaymath}
	P \bigg( \sup\limits_{\|\Btheta-\Btheta^\ast\|\geq\varepsilon} (\mathcal{L}_{n}(\Btheta) - \mathcal{L}_{n}(\Btheta^\ast)) \geq -M  \bigg) \xrightarrow[n\to \infty]{} 0.
	\end{displaymath}
	Then,
	\begin{displaymath}
	P \bigg( \sup\limits_{\|\Btheta-\Btheta^\ast\|\geq\varepsilon} (\mathcal{L}_{\mathcal{C},n}(\Btheta) - \mathcal{L}_{\mathcal{C},n}(\Btheta^\ast)) \geq -M  \ \bigg| \ Y \in \mathcal{E}_\kappa \bigg) \xrightarrow[n\to \infty]{} 0.
	\end{displaymath}	
	Consequently
	\begin{displaymath}
	\operatorname{argmax}\limits_{\Btheta \in \BTheta} \ \mathcal{L}_{n}(\Btheta) \xrightarrow[n\to \infty]{P} \Btheta^\ast, \qquad \mbox{and} \qquad \operatorname{argmax}\limits_{\Btheta \in \BTheta} \ \mathcal{L}_{\mathcal{C},n}(\Btheta) \xrightarrow[n\to \infty]{P|Y \in \mathcal{E}_\kappa} \Btheta^\ast,
	\end{displaymath}	
	where $\xrightarrow[n\to \infty]{P}$ denotes the convergence in probability under the distribution of $Y$, and $\xrightarrow[n\to \infty]{P|Y \in \mathcal{E}_\kappa}$ denotes the convergence in probability under the distribution of $Y$ given $Y \in \mathcal{E}_\kappa$.
\end{proposition}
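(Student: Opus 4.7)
The strategy is to transfer the separation property from the unconstrained log-likelihood to the constrained one by writing, for any $\Btheta\in\BTheta$,
\[
\mathcal{L}_{\mathcal{C},n}(\Btheta) - \mathcal{L}_{\mathcal{C},n}(\Btheta^\ast) = \bigl(\mathcal{L}_n(\Btheta) - \mathcal{L}_n(\Btheta^\ast)\bigr) + A_n(\Btheta) + B(\Btheta),
\]
with $A_n(\Btheta)=\log P_\Btheta(Y\in\mathcal{E}_\kappa\mid\BY_n)-\log P_{\Btheta^\ast}(Y\in\mathcal{E}_\kappa\mid\BY_n)$ and $B(\Btheta)=-\log P_\Btheta(Y\in\mathcal{E}_\kappa)+\log P_{\Btheta^\ast}(Y\in\mathcal{E}_\kappa)$, and controlling $\sup_\Btheta A_n$ and $\sup_\Btheta B$ uniformly. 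For $B$: the appendix lemmas provide $P_\Btheta(Y\in\mathcal{E}_\kappa)>0$ for all $\Btheta\in\BTheta$; combined with continuity of $\Btheta\mapsto P_\Btheta(Y\in\mathcal{E}_\kappa)$ (or, for $\kappa=1,2$, the shape invariance of $\mathcal{E}_\kappa$) and compactness of $\BTheta$, this yields a uniform lower bound $c>0$, so $\sup_\Btheta B(\Btheta)\le\log(1/c)=:C_1<\infty$. For $A_n$: since $\log P\le 0$, one has the $\Btheta$-independent upper bound $\sup_\Btheta A_n(\Btheta)\le -\log P_{\Btheta^\ast}(Y\in\mathcal{E}_\kappa\mid\BY_n)=:U_n$.

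The central step is to prove that $U_n\to 0$ a.s.\ under $P(\cdot\mid Y\in\mathcal{E}_\kappa)$. Because $(\Bx_i)_{i\in\mathds{N}}$ is dense in $\mathbb{X}$ and $Y$ has $C^\kappa$ trajectories, the evaluation map $Y\mapsto(Y(\Bx_i))_i$ determines $Y$ pointwise by continuity, so $\{Y\in\mathcal{E}_\kappa\}\in\sigma(Y(\Bx_i),i\in\mathds{N})$. Levy's upward martingale convergence theorem then gives
\[
P_{\Btheta^\ast}(Y\in\mathcal{E}_\kappa\mid\BY_n)=\mathds{E}\bigl[\mathds{1}_{\{Y\in\mathcal{E}_\kappa\}}\bigm|\sigma(\BY_n)\bigr]\xrightarrow[n\to\infty]{\mathrm{a.s.}}\mathds{1}_{\{Y\in\mathcal{E}_\kappa\}}
\]
under $P$, and since $P(Y\in\mathcal{E}_\kappa)>0$ this a.s.\ convergence carries over to the conditional law $P(\cdot\mid Y\in\mathcal{E}_\kappa)$, in which the indicator equals $1$; hence $U_n\to 0$ a.s.\ conditionally. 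To finish, fix $\varepsilon>0$, $M<\infty$, $\delta>0$ and let $S_n=\sup_{\|\Btheta-\Btheta^\ast\|\ge\varepsilon}(\mathcal{L}_n(\Btheta)-\mathcal{L}_n(\Btheta^\ast))$. The decomposition and the bounds above yield the event inclusion $\{\sup(\mathcal{L}_{\mathcal{C},n}-\mathcal{L}_{\mathcal{C},n}(\Btheta^\ast))\ge -M\}\subseteq\{U_n\ge\delta\}\cup\{S_n\ge -M-C_1-\delta\}$, so
\[
P\bigl(\sup\ge -M\bigm| Y\in\mathcal{E}_\kappa\bigr)\le P(U_n\ge\delta\mid Y\in\mathcal{E}_\kappa)+\frac{P(S_n\ge -M-C_1-\delta)}{P(Y\in\mathcal{E}_\kappa)}\xrightarrow[n\to\infty]{}0,
\]
the first term by the preceding paragraph, the second by the hypothesis applied with the finite value $M+C_1+\delta$. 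The two argmax assertions then follow from the standard separation-of-maximum argument: any maximizer $\widehat\Btheta$ satisfies the corresponding log-likelihood difference at $\widehat\Btheta$ minus at $\Btheta^\ast$ being $\ge 0$, so $\{\|\widehat\Btheta-\Btheta^\ast\|\ge\varepsilon\}\subseteq\{\sup\ge -1\}$, and one applies the uniform statement with $M=1$ in the appropriate (unconditional or conditional) probability.

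The main obstacle is the measurability claim $\{Y\in\mathcal{E}_\kappa\}\in\sigma(Y(\Bx_i),i\in\mathds{N})$ for $\kappa=1,2$, since $\mathcal{E}_\kappa$ then involves derivatives of $Y$. This reduces to the fact that a $C^\kappa$ function is uniquely determined by its values on a dense set, so monotonicity or convexity, being properties of the $C^\kappa$ function, are deterministic functions of $(Y(\Bx_i))_i$; equivalently, one may express these properties as countable intersections of continuous conditions on the dense values (for $\kappa=1$, $Y(\Bx_i)\le Y(\Bx_j)$ for all comparable indices, which by continuity characterizes coordinate-wise non-decrease).
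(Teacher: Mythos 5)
Your proof is correct and follows essentially the same route as the paper's: the same three-term decomposition, the uniform lower bound $\inf_{\Btheta\in\BTheta}P_\Btheta(Y\in\mathcal{E}_\kappa)>0$ via continuity and compactness (the paper's \cref{lem:continuity}), and the martingale convergence $P_{\Btheta^\ast}(Y\in\mathcal{E}_\kappa\mid\BY_n)\to 1$ conditionally on $Y\in\mathcal{E}_\kappa$ (the paper's \cref{lem:lb,lem:lbd}), combined with the bound $\log P_\Btheta(\cdot\mid\BY_n)\le 0$. The only differences are presentational: you re-derive the lemma contents inline (with a slightly more direct L\'evy upward-convergence argument resting on the measurability of $\{Y\in\mathcal{E}_\kappa\}$ with respect to $\sigma(Y(\Bx_i),i\in\mathds{N})$, where the paper instead identifies the limiting conditional law via a result of Bect et al.) and you spell out the final argmax separation step, which the paper leaves implicit.
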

\begin{proof}
	We have from \cref{lem:lb,lem:lbd} that $\forall \varepsilon > 0$
	\begin{displaymath}
	P\{\log(P_{\Btheta^\ast}(Y \in \mathcal{E}_\kappa \ | \ \BY_n) ) \geq \log(1-\varepsilon) \ | \ Y \in \mathcal{E}_\kappa \} \xrightarrow[n \to \infty]{} 1.
	\end{displaymath}
	Hence $\forall \delta > 0$
	\begin{displaymath}
	P\bigg\{\sup\limits_{\|\Btheta-\Btheta^\ast\|\geq\varepsilon} \ \log(P_{\Btheta}(Y \in \mathcal{E}_\kappa \ | \ \BY_n)) - \log(P_{\Btheta^\ast}(Y \in \mathcal{E}_\kappa \ | \ \BY_n) ) \geq \delta \ \bigg| \ Y \in \mathcal{E}_\kappa \bigg\} \xrightarrow[n \to \infty]{} 0.
	\end{displaymath}
	Also, from \cref{lem:continuity}, there exists $\Delta > 0$  so that we have 
	\begin{displaymath}
	\inf\limits_{\|\Btheta-\Btheta^\ast\|\geq\varepsilon} P_{\Btheta}(Y \in \mathcal{E}_\kappa) \geq \Delta > 0,
	\end{displaymath}
	so that 
	\begin{displaymath}
	\sup\limits_{\|\Btheta-\Btheta^\ast\|\geq\varepsilon} -\log(P_{\Btheta}(Y \in \mathcal{E}_\kappa)) + \log(P_{\Btheta^\ast}(Y \in \mathcal{E}_\kappa) ) \leq - \log(\Delta) < \infty.
	\end{displaymath}
	Hence, the proposition follows.
\end{proof}

\section{Extension to multidimensional input spaces}
\label{sec:2Dapp}

\subsection{2D dimensional case}
\label{subsec:2Dext}
The finite-dimensional Gaussian representation of \Cref{sec:GPwithICLinear} can be extended to $d$ dimensional input spaces by tensorisation. For readability, we focus on the case $d=2$ with $\mathcal{D} = [0, 1]^2$ and $m_1 \times m_2$ knots located on a regular grid. Then, the finite approximation is given by
\begin{equation}
	Y_{m_1,m_2}(x_1,x_2) := \sum_{j_1 = 1}^{m_1} \sum_{j_2 = 1}^{m_2} \xi_{j_2,j_1} \phi_{j_1}^{1}(x_1) \phi_{j_2}^{2}(x_2), \ \mbox{s.t.} \ \begin{cases} Y_{m_1,m_2}\left(x_1^{i},x_2^{i}\right) = y_i, \; {(i = 1, \cdots, n)}\\ \xi_{j_2,j_1} \in \mathcal{C}, \end{cases}
	\label{eq:finApprox2D}
\end{equation}
where $\xi_{j_2,j_1} = Y(t_{j_1},t_{j_2})$ and $(x_1^1, x_1^2), \cdots, (x_1^n, x_1^n)$ constitute a DoE. If we follow a similar procedure as in \Cref{sec:GPwithICLinear}, we observe that $\Bxi = [\begin{smallmatrix} \xi_{1,1}, & \cdots, & \xi_{1,m1}, \cdots, \xi_{m2,1}, \cdots \xi_{m2,m1} \end{smallmatrix}]^\top$ is a zero-mean Gaussian vector with covariance matrix $\BGamma$ as in \Cref{eq:trGP2}. Notice that each row of the new matrix $\BPhi$ is given by
\begin{equation*}
\BPhi_{i,:} = \begin{bmatrix} \phi_{1}^1(x_1^i) \phi_{1}^2(x_2^i) & \cdots & \phi_{m_1}^1(x_1^i) \phi_{1}^2(x_2^i) & \cdots & \phi_{1}^1(x_1^i) \phi_{m_2}^2(x_2^i) & \cdots & \phi_{m_1}^1(x_1^i) \phi_{m_2}^2(x_2^i) \end{bmatrix},
\end{equation*}
for all $i = 1, \cdots, n$. Finally, the posterior distribution of \Cref{eq:trposterior} can be computed, and the routine follows \cref{alg:tmKriging}. The module from \Cref{eq:finApprox2D} was also used in \citep{Maatouk2016GPineqconst} for monotonicity constraints in multidimensional cases.

\begin{figure}
	\centering
	\subfigure[\scriptsize \label{subfig:toyExampleB2Dfig1} Boundedness in 2D.]{\includegraphics[width=0.37\textwidth]{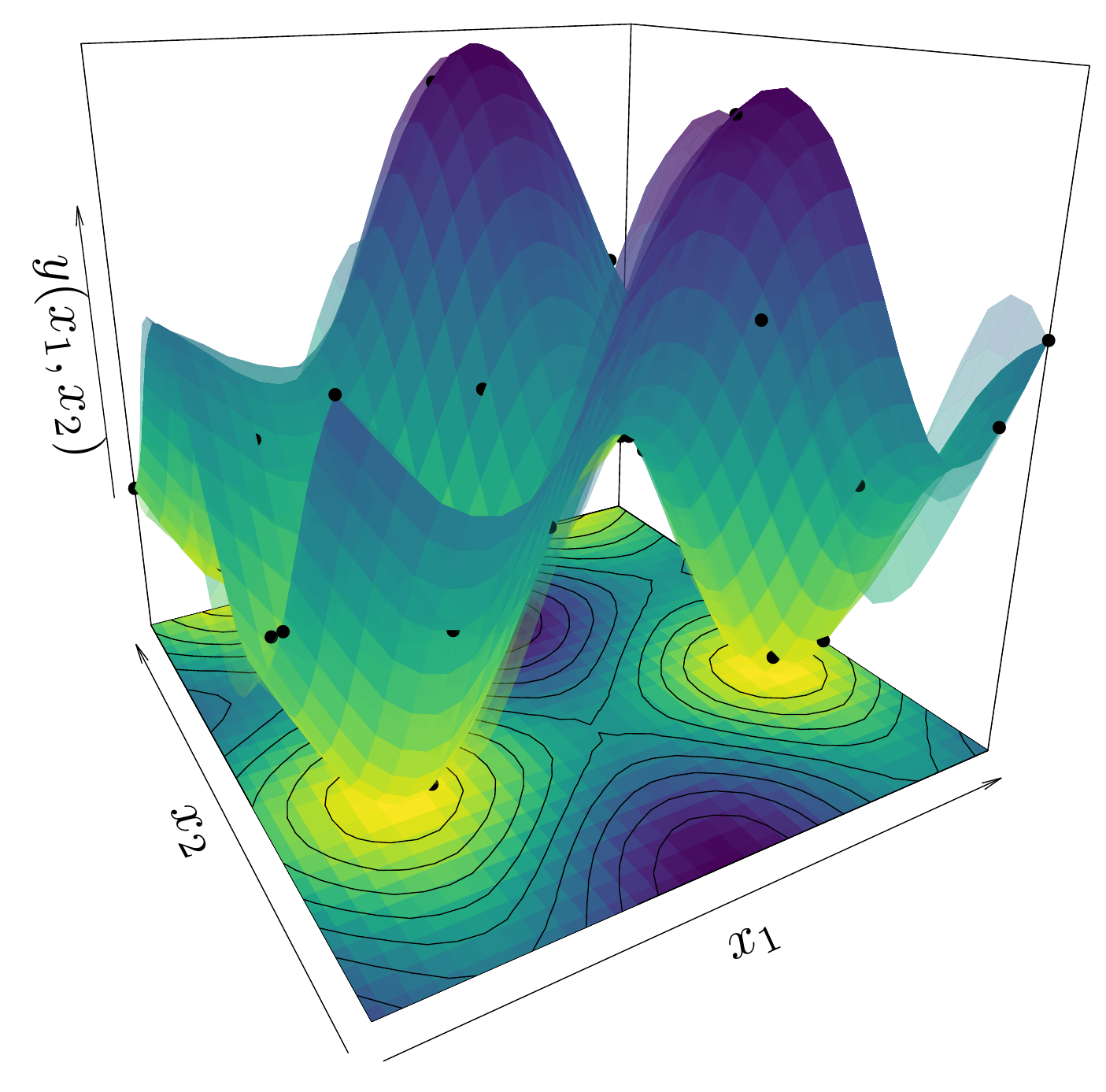}} \hspace{20pt}
	\subfigure[\scriptsize \label{subfig:toyExampleM2Dfig1} Monotonicity in 2D.]{\includegraphics[width=0.37\textwidth]{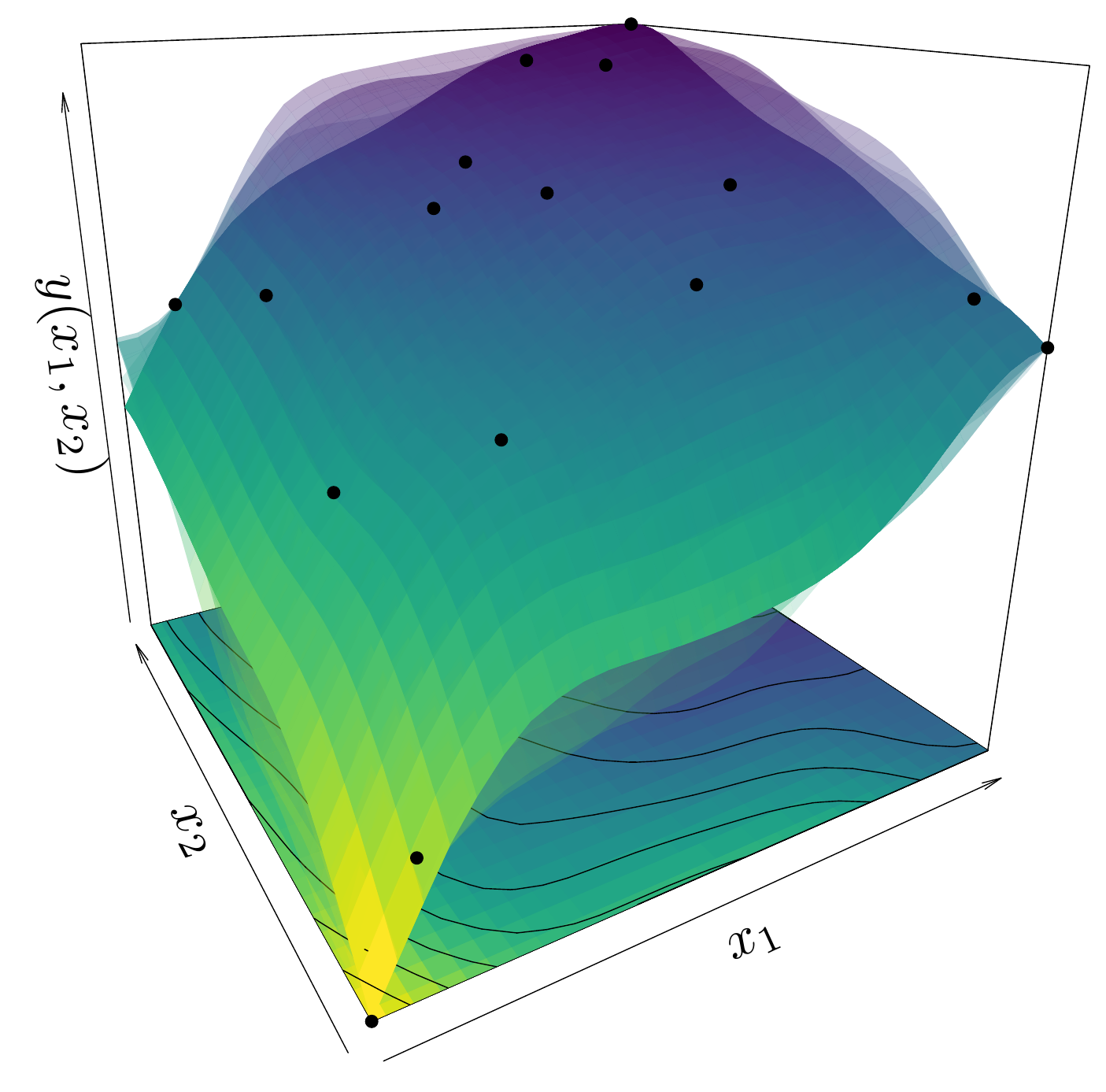}}
	\caption{Examples of 2D Gaussian models with different types of constraints for interpolating the toy examples from \Cref{subsec:2Dext}. Each panel shows: training points (black dots), the conditional  function (solid surface), and some conditional realizations (light surfaces).}
	\label{fig:2DToyExample1fig1}
\end{figure}

\Cref{fig:2DToyExample1fig1} shows two examples where boundedness or monotonicity are exhibited. We used a 2D SE covariance function with parameters $(\sigma^2 = 1.0, \; \theta_1 = 0.2, \; \theta_2 = 0.2)$.\footnote{2D SE covariance function: $k_\Btheta(\Bx - \Bx') = \sigma^2 \exp\left\{-\frac{(x_1-x_1')^2}{2 \theta_1^2} -\frac{(x_2-x_2')^2}{2 \theta_2^2} \right\}$ with $\Btheta = (\sigma^2,\theta_1,\theta_2)$.} The training points were generated with a maximin Latin hypercube DoE over $[0, 1]^2$. The functions are: \ref{subfig:toyExampleB2Dfig1} $y(x_1,x_2) = - \frac{1}{2} [\sin(9 x_1) - \cos(9 x_2)]$, and \ref{subfig:toyExampleM2Dfig1} $y(x_1,x_2) = \arctan(5 x_1) + \arctan(x_2)$.

\subsection{2D application: nuclear criticality safety}
\label{subsec:realdata}

For assessing the stability of neutron production in nuclear reactors, safety criteria based on the effective neutron multiplication factor $k_{\text{eff}}$ are commonly used \citep{IAEA2014,Fernex2005moret}. This factor is defined as the ratio of the total number of neutrons produced by a fission chain reaction to the total number of neutrons lost by absorption and leakage. Besides the geometry and composition of fissile materials (e.g. mass, density), $k_{\text{eff}}$ is sensitive to other types of parameters like the structure materials characteristics (e.g. concrete), and the presence of specific materials (e.g. moderators). Since the optimal control of an individual parameter or a combination of them can lead to safe conditions, the understanding of their influence in criticality safety assessment is crucial.

In this section, we applied the proposed framework to a dataset provided by the ``Institut de Radioprotection et de S\^uret\'e Nucl\'eaire'' (IRSN), France. The $k_{\text{eff}}$ factor was obtained from a nuclear reactor called ``Lady Godiva device'' originally situated at Los Alamos National Laboratory (LANL), New Mexico, U.S., where uranium materials were managed. Two input parameters of the uranium sphere are considered: its radius $r$ and its density $d$. The dataset contains 121 observations in a regular grid of $11 \times 11$ locations (see \Cref{fig:godivaData}). Notice that, on the domain considered for the input variables, $k_{\text{eff}}$ increases as the geometry and density of the uranium sphere increase.

We trained different Gaussian models whether the inequality constraints are considered or not. For all the models, we normalised the input space to be in $[0, 1]^2$. We used the same 2D SE covariance functions as for the example from \Cref{fig:2DToyExample1fig1}. For the unconstrained model, we used multistart for the ML optimisation with six initial vectors of covariance parameters $\BTheta = (\sigma^2,  \theta_1, \theta_2)$ with $\sigma^2 \in [0.2, 1]$ and $\theta_1,\theta_2 \in [0.1, 0.9]$. For the constrained models, since the $k_{\text{eff}}$ factor indicates the production rate of neutron population, the output of the constrained processes has to be positive. Taking also into account the non-decreasing behaviour, we also consider the monotonicity constraints. We estimated the covariance parameters by MLE or CMLE. We trained both unconstrained and constrained models with a fixed maximin Latin hypercube DoE at eight locations extracted from the normalised grid. We used the remaining data to asses the quality of prediction tasks. 
\begin{figure}
	\centering
	\includegraphics[width=0.40\textwidth]{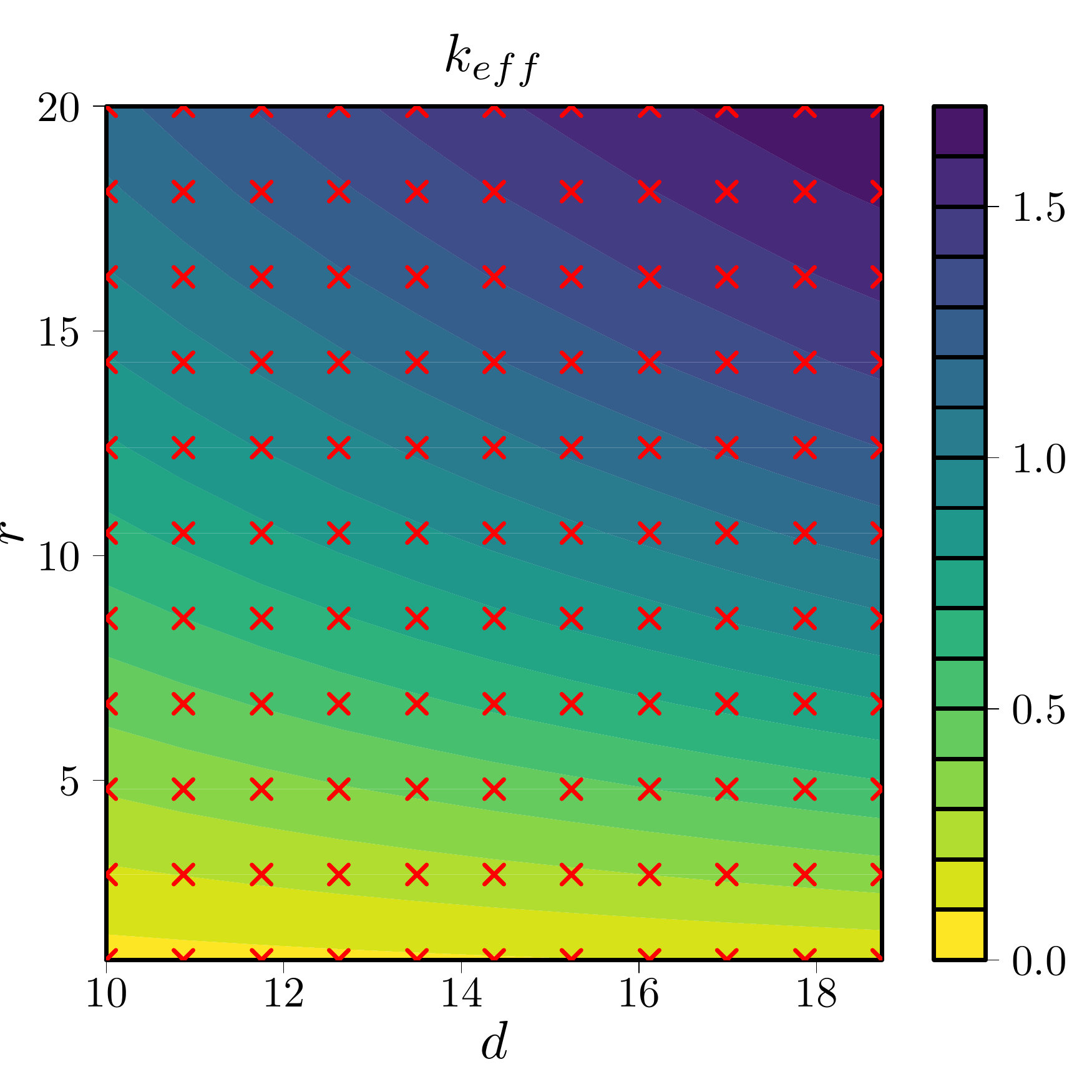} \hspace{20pt}
	\includegraphics[width=0.38\textwidth]{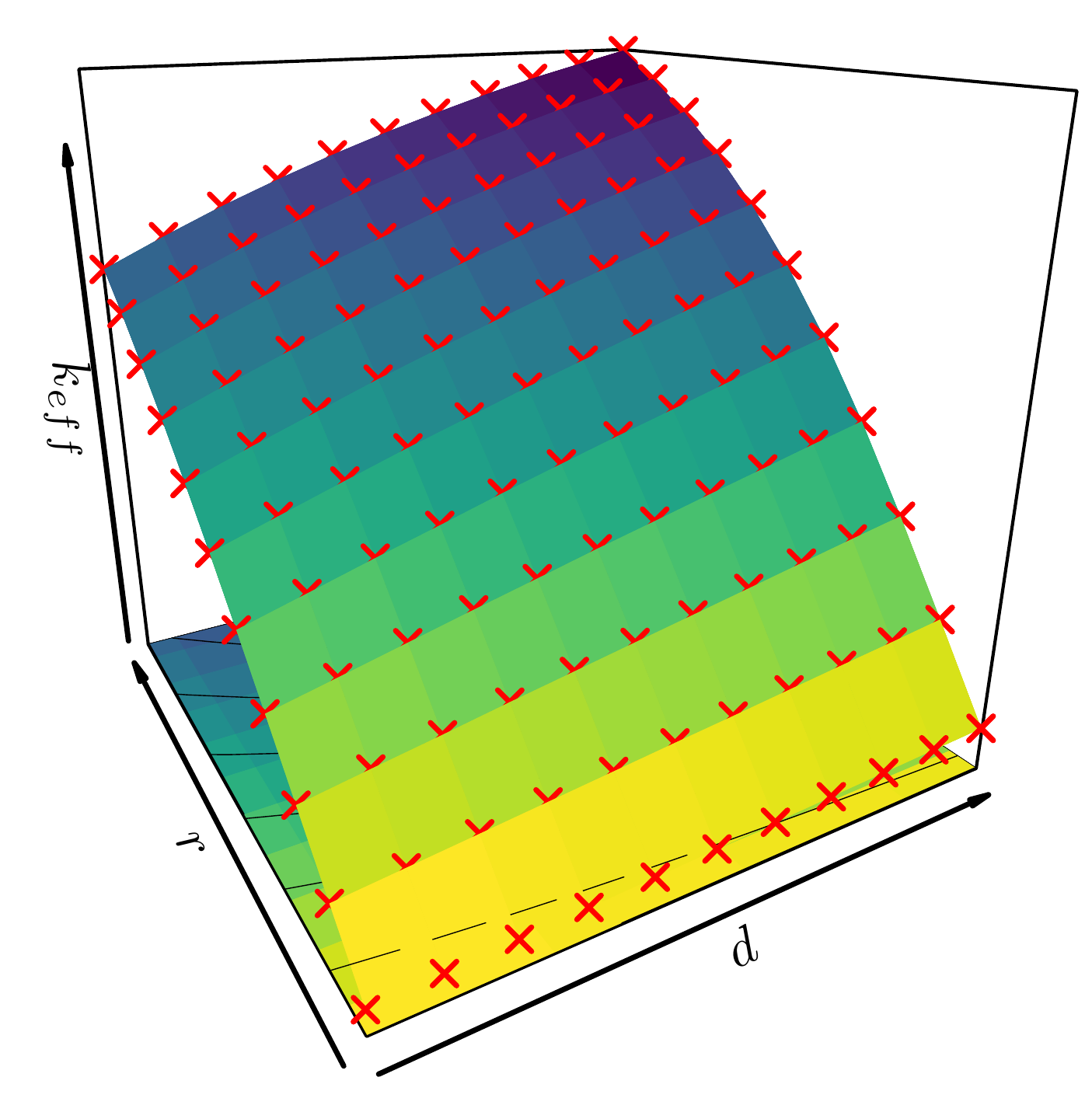}
	
	\caption{Nuclear criticality safety assessments: Godiva's dataset. (Left) 2D visualization of the $k_{\text{eff}}$ values measured over a regular grid. (Right) 3D visualization of the $k_{\text{eff}}$ data.}
	\label{fig:godivaData}
\end{figure}

\Cref{fig:godiva} shows the performance of the proposed models using four or eight points from the proposed fixed DoE. For the unconstrained models, we observe that the quality of the predictions depends strongly on both the amount of training data and their distribution in the input space. Notice from \Cref{subfig:godivaExampleSKMLEfig2} that if only few training points are available, predictions are poor and they do not satisfy positive and non-decreasing behaviours. In \Cref{subfig:godivaExampleSKMLEfig4}, we observe that if the training data are large enough and cover the input space, the unconstrained model behaves well and provides reliable predictions. On the other hand, we observe that the constrained models produce accurate prediction results also when the training set is small. 
\begin{figure}
	\centering
	\subfigure[\label{subfig:godivaExampleSKMLEfig2} $Q^2 = 0.07696$]{\includegraphics[width=0.32\textwidth]{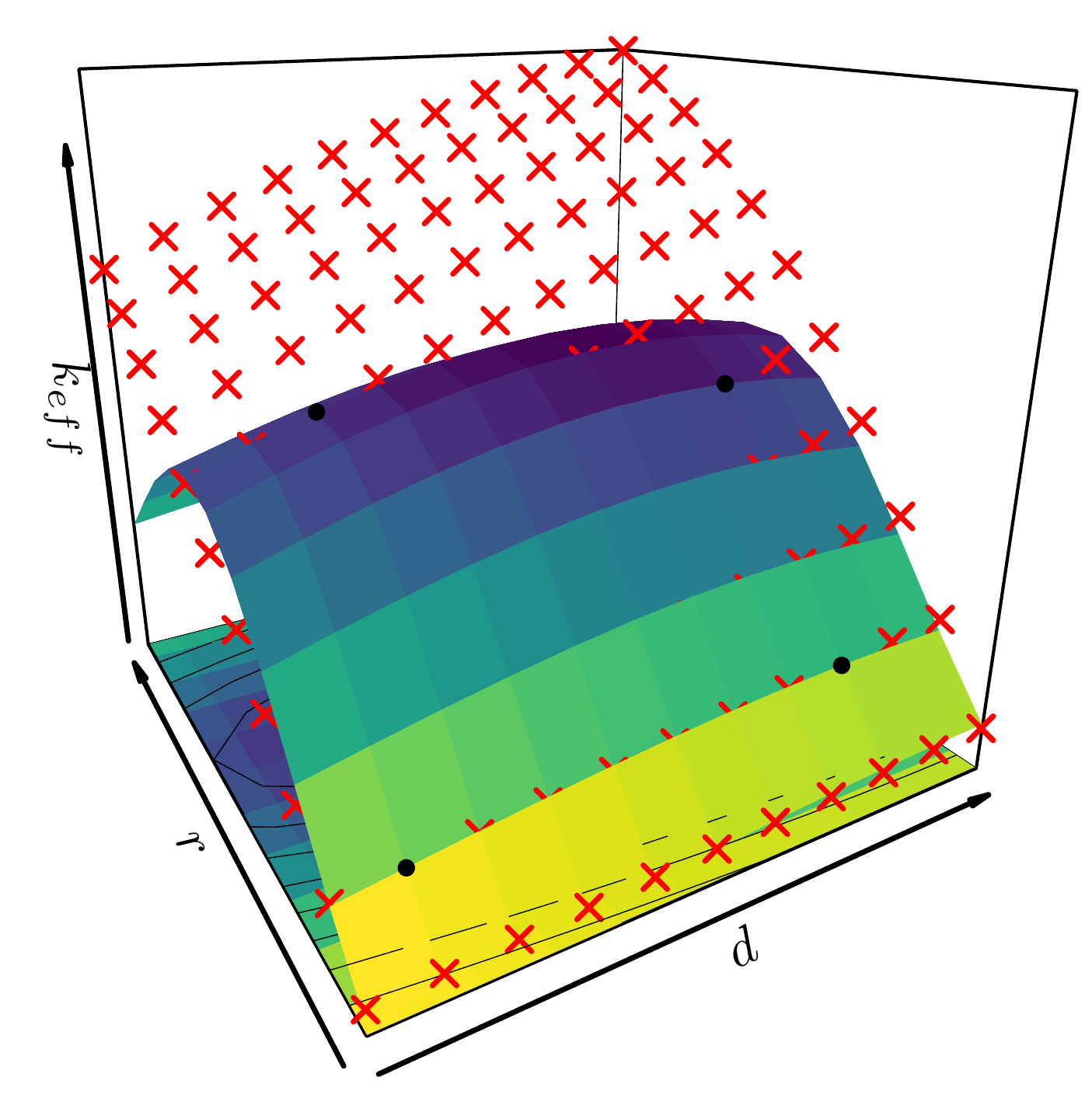}} 
	\subfigure[\label{subfig:godivaExampleCKMLEfig2} $Q^2 = 0.98225$]{\includegraphics[width=0.32\textwidth]{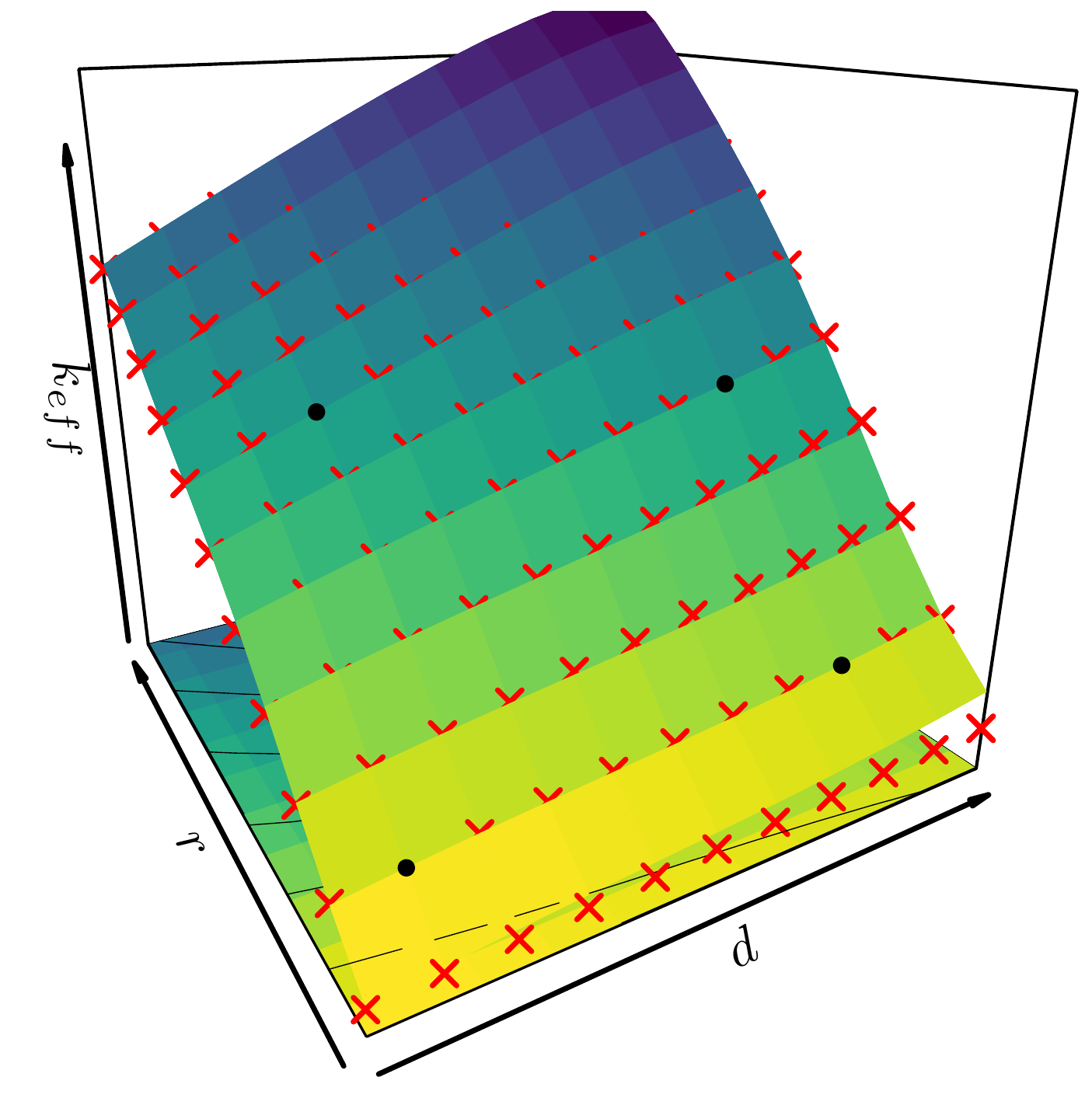}} 
	\subfigure[\label{subfig:godivaExampleCKCMLEfig2} $Q^2 = 0.90887$]{\includegraphics[width=0.32\textwidth]{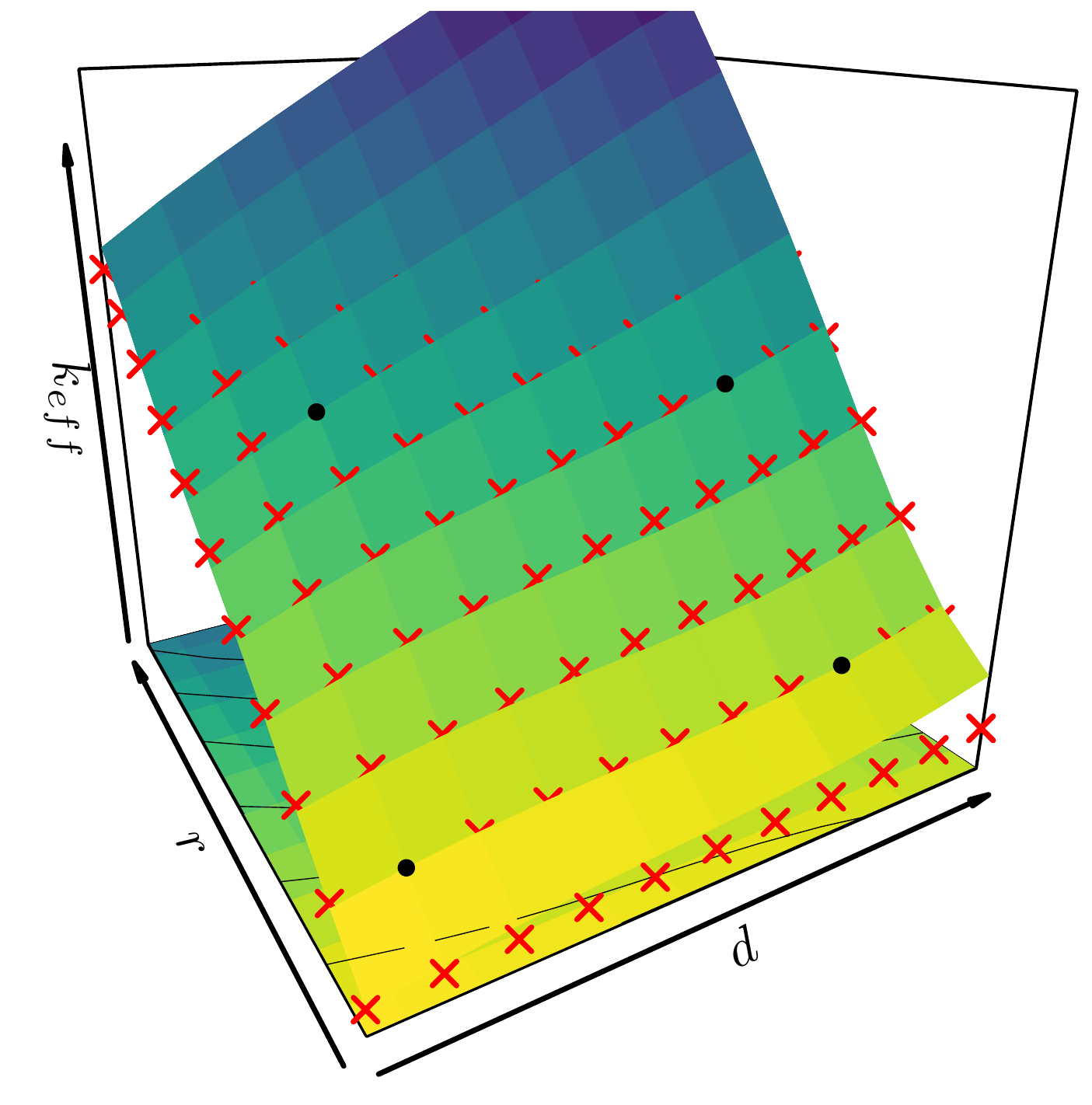}}
	
	\subfigure[\label{subfig:godivaExampleSKMLEfig4} $Q^2 = 0.99767$]{\includegraphics[width=0.32\textwidth]{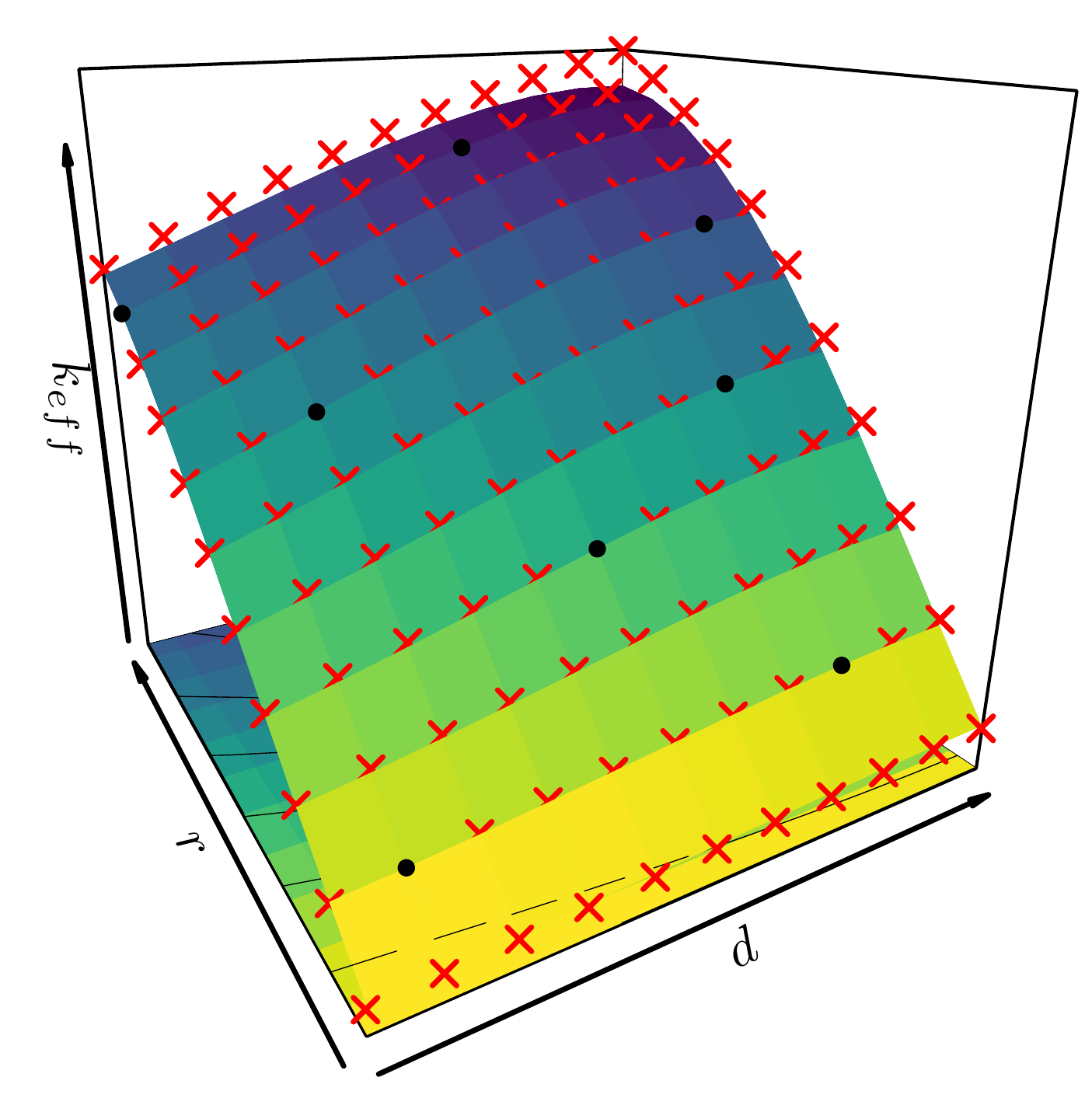}}
	\subfigure[\label{subfig:godivaExampleCKMLEfig4} $Q^2 = 0.99930$]{\includegraphics[width=0.32\textwidth]{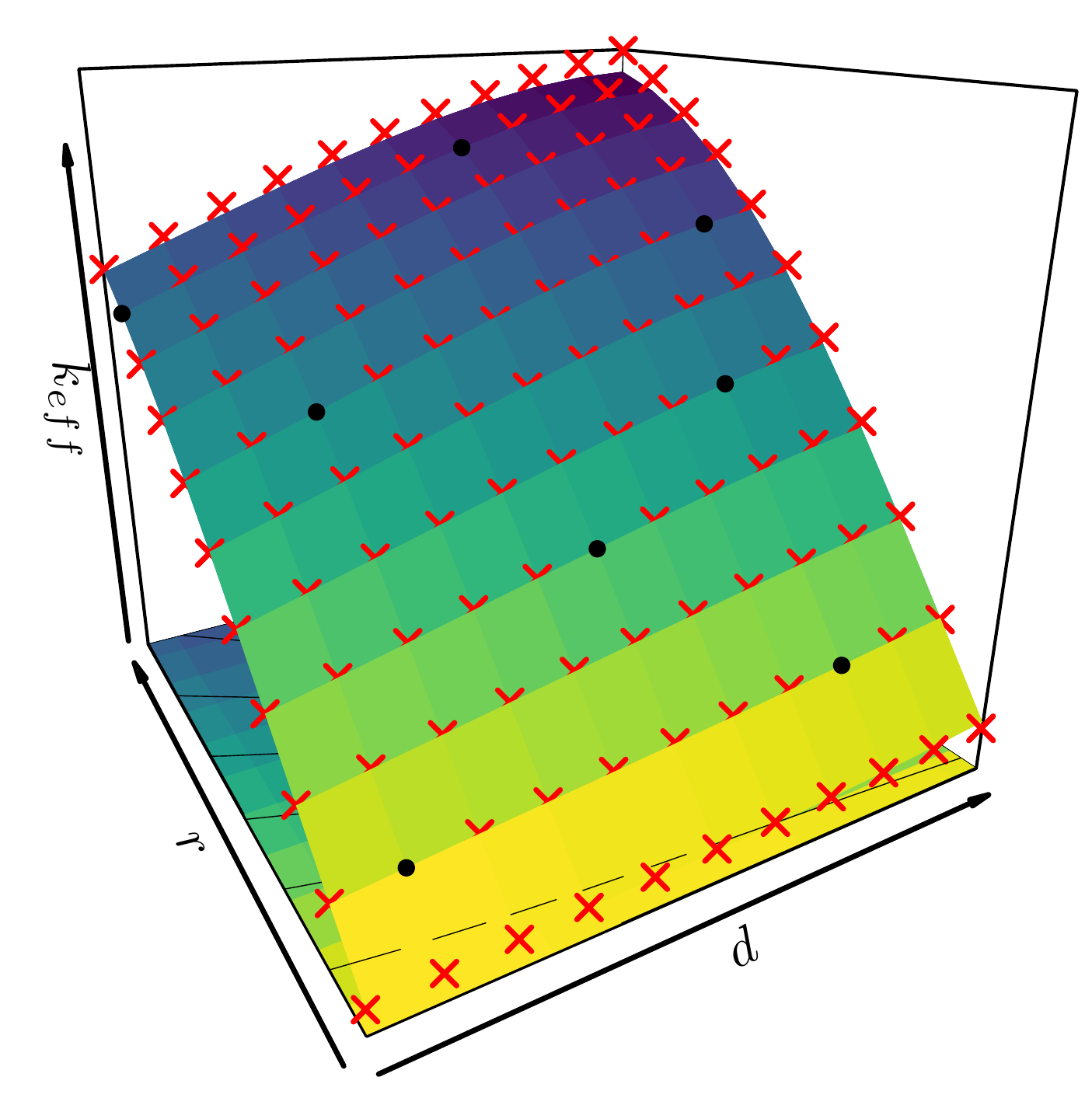}}
	\subfigure[\label{subfig:godivaExampleCKCMLEfig4} $Q^2 = 0.99949$]{\includegraphics[width=0.32\textwidth]{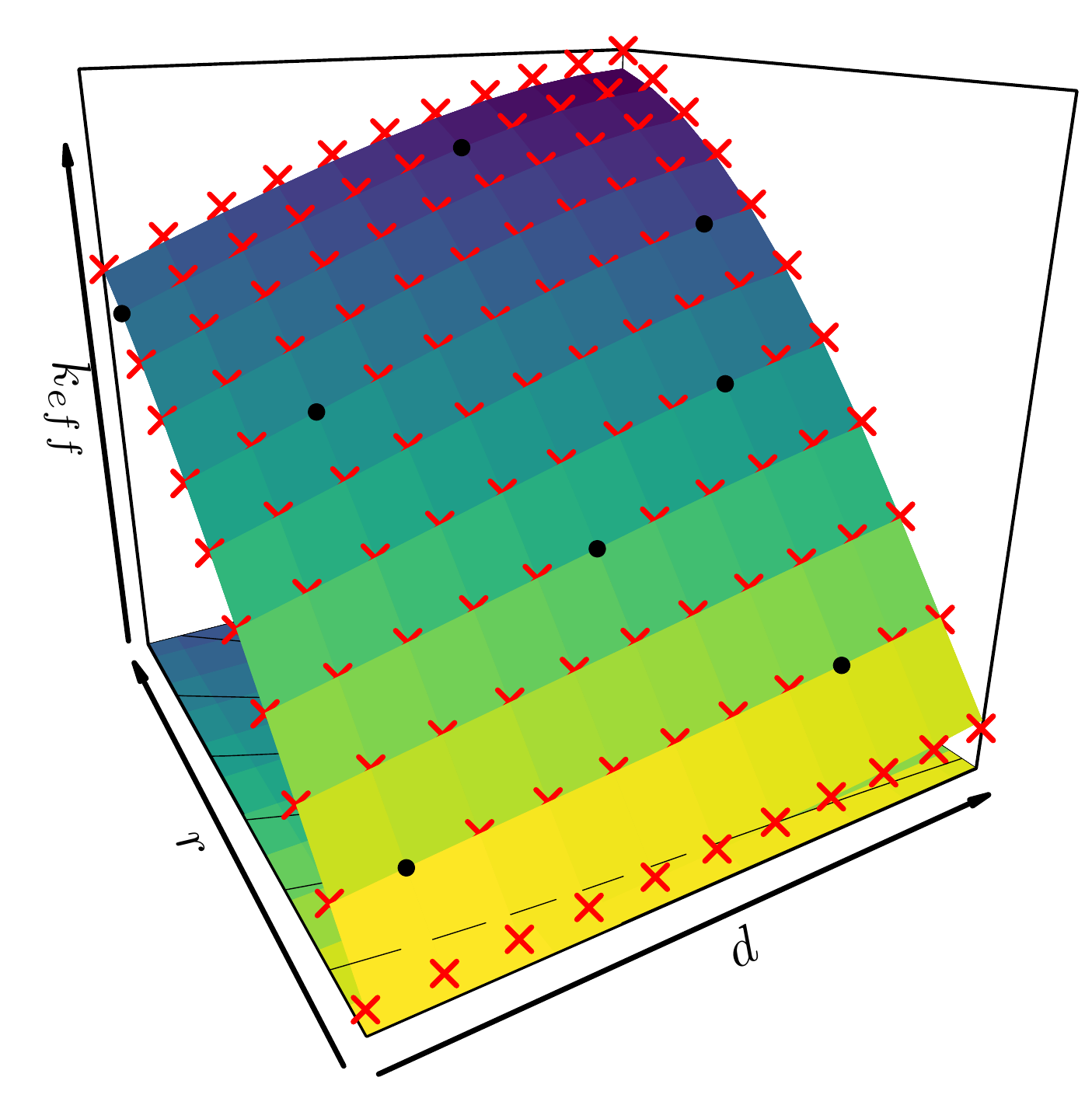}}
	
	\caption{2D Gaussian models for interpolating the Godiva's dataset. Unconstrained GP models are trained using MLE (left column) and using \subref{subfig:godivaExampleSKMLEfig2} four, or \subref{subfig:godivaExampleSKMLEfig4} eight training points from the proposed maximin Latin hypercube DoE. Constrained GP models are trained either using MLE (middle column) and CMLE (right column). Each panel shows: training and test points (black dots and red crosses), the conditional mean function (solid surface), and the $Q^2$ criterion (subcaptions).}
	\label{fig:godiva}
\end{figure}
\begin{figure}
	\centering
	\includegraphics[width=0.49\textwidth]{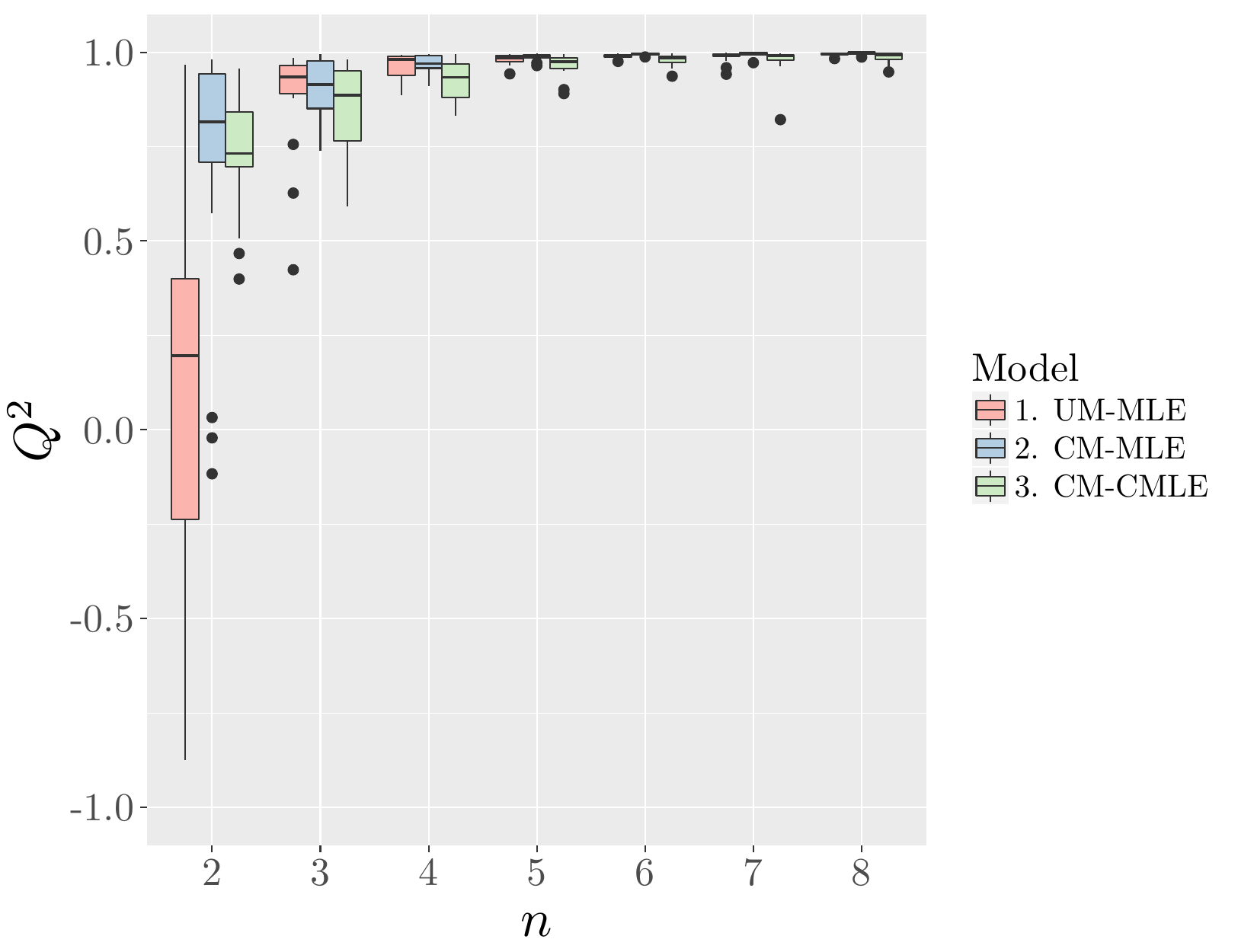}
	\includegraphics[width=0.49\textwidth]{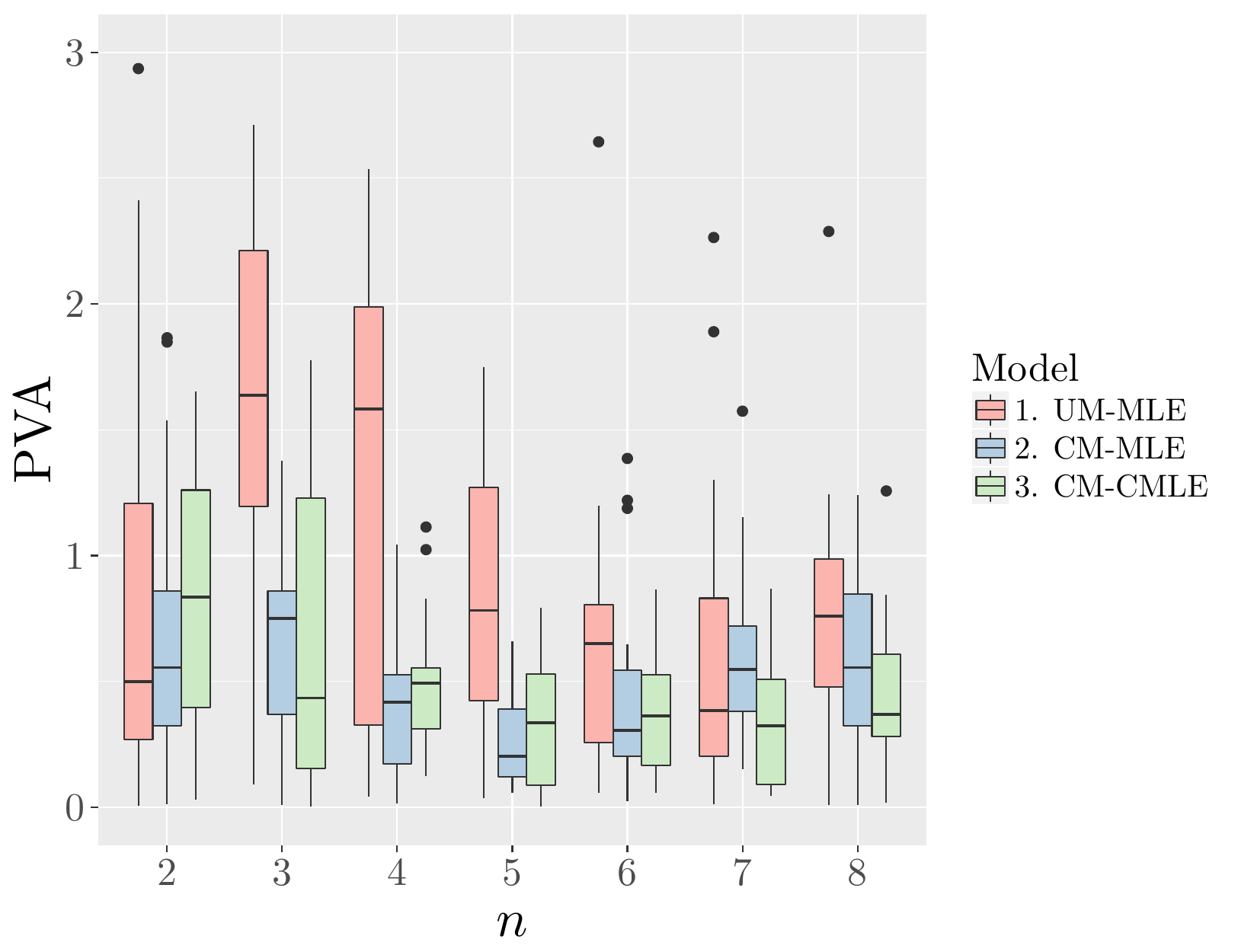}
	
	\caption{Assessment of the Gaussian models for interpolating the dataset from \Cref{fig:godivaData} using different number of training points $n$ and using twenty different random Latin hypercube designs. Predictive accuracy is evaluated using the (left) $Q^2$ and (right) PVA criteria. Results are showed for the unconstrained model (UM) using MLE (red), and the constrained models (CM) using either MLE (blue) or CMLE (green).}
	\label{fig:godivaInd}
\end{figure}

Because the prediction accuracy depends on the training set, we repeated the procedure with twenty different random Latin hypercube DoEs using several values of $n$. We used the $Q^2$ and PVA criteria to evaluate the quality of the predictions (see \Cref{subsec:cmlresults}). \Cref{fig:godivaInd} shows that the constrained models often outperform the unconstrained ones. Notice that although the $Q^2$ results obtained by the unconstrained model are comparable with the constrained ones when the number of training points is large enough, we observe, according to the PVA criterion, that the constrained models provide more reliable confidence intervals. This means that, if we consider both positivity and monotonicity conditions to take into account the physics of the $k_{\text{eff}}$ factor, we can obtain more informative and robust models. Furthermore, we observe that the unconstrained MLE achieves a good tradeoff between prediction accuracy/reliability and computational cost.

\section{Conclusions}
\label{sec:conclusions}

Continuing the approach proposed in \citep{Maatouk2016GPineqconst}, we have introduced a full Gaussian-based framework to satisfy linear sets of inequality constraints. The proposed finite-dimensional approach takes into account the inequalities for both data interpolation and covariance parameter estimation. Because the posterior distribution is expressed as a truncated multinormal distribution, we compared different MCMC methods as well as exact sampling methods. According to experiments, we concluded that the Hamiltonian Monte Carlo-based sampler adapts to our needs. For parameter estimation, we suggested the constrained likelihood which takes into account the inequalities constraints. We showed that, loosely speaking, any consistency result for ML with unconstrained GPs, is preserved when adding boundedness, monotonicity and convexity constraints. Furthermore, this consistency occurs for both the unconditional and conditional likelihood functions. 

We tested our model in both synthetic and real-world data in 1D or 2D. According to the experimental results under different types of inequalities, the proposed framework fits properly the observations and provides realistic confidence intervals. Our approach is also flexible enough to satisfy multiple inequality conditions and to deal with specific types of constraints which are sequentially activated. Finally, as we showed in the 2D nuclear criticality safety assessment, the proposed framework provides reliable predictions on both data prediction and uncertainty quantification satisfying the inequality constraints exhibited by the neutron population (positivity and monotonicity conditions). We also observe that the unconstrained MLE achieves a good tradeoff between prediction accuracy/reliability and computational cost.

The framework presented in this paper can be improved in different ways. First, the precision of the results depends on the number of knots $m$ used in the finite approximation. For higher values of $m$, the interpolation is better but more expensive. In this sense, we could consider the optimal location of the knots over the input space instead of using regular grids. This potentially allows to reduce the computational cost of the full framework. Second, as we discussed for 2D input spaces, the model can be generalized to $d$ dimensional problems. However, due to its tensor structure, its practical application could be time-consuming. Hence, there is a need to find an extension of the model to higher dimensions that can be applied in real-word problems. Finally, the estimation of the orthant Gaussian probabilities can be improved in order to exploit the advantages of the constrained likelihood.

\appendix
\section{Conditional maximum likelihood: asymptotic properties}
\label{app:asymtotic}
We detail in this appendix section the conditions and lemmas we used in \Cref{prop:ml,prop:cml} from \Cref{subsec:asymptotic}. We use the same notations and assumptions as in this subsection.
\begin{Condition}\label{cond:Balls}
	Let $\Bx, \Bx' \in \mathbb{X}$. For a fixed $\kappa \in \{0,1,2\}$, assume one of the following conditions:	
	
	- If $\kappa = 0$. 	Assume that $Y$ has continuous trajectories. Let $k$ be the covariance function of $Y$.  Let
	\begin{displaymath}
	d_{k} (\Bx,\Bx') = \sqrt{k(\Bx,\Bx) + k(\Bx',\Bx') - 2k(\Bx,\Bx')}.
	\end{displaymath}
	Let $N(\mathbb{X}, d_k,\rho)$ be the minimum number of balls with radius $\rho$ (w.r.t. the distance $d_k$), required to cover $\mathbb{X}$.
	Assume that
	\begin{displaymath}
	\int_{0}^{\infty} \sqrt{\log(N(\mathbb{X}, d_{k},\rho))} \ d\rho< \infty.
	\label{eq:intBall0}
	\end{displaymath}
	Assume also that the Fourier transform $\widehat{k}$ of $k$ satisfies
	\begin{displaymath}
	\exists \ P < \infty \quad \mbox{so that as} \quad \|w\| \to \infty, \quad \widehat{k}(w) \|w\|^P \to \infty.
	\label{eq:Fourier0}
	\end{displaymath}
	
	- If $\kappa = 1$. Assume that $Y$ has $C^1$ trajectories. Let $k_i^{[1]}$ be the covariance function of $\frac{\partial}{\partial x_i}Y$. Let $d_{k_i^{[1]}}$ and $N(\mathbb{X}, d_{k_i^{[1]}},\rho)$ be defined as $d_{k}$ and $N(\mathbb{X}, d_{k},\rho)$ for $\kappa = 0$.
	Assume that
	\begin{equation*}
	\int_{0}^{\infty} \sqrt{\log(N(\mathbb{X}, d_{k_i^{[1]}},\rho))} \ d\rho< \infty, \quad \forall i = 1,\cdots,d.
	\label{eq:intBall1}
	\end{equation*}
	Assume also that the Fourier transform $\widehat{k}_i^{[1]}$ of $k_i^{[1]}$ satisfies the same conditions as for $\kappa = 0$, for $i = 1, \cdots, d$. 
	
	- If $\kappa = 2$. Assume that $Y$ has $C^2$ trajectories. Let $k_{i,j}^{[2]}$ be the covariance function of $\frac{\partial^2}{\partial x_i \partial x_j}Y$. Let $d_{k_{i,j}^{[2]}}$ and $N(\mathbb{X}, d_{k_{i,j}^{[2]}},\rho)$ be defined as $d_{k}$ and $N(\mathbb{X}, d_{k},\rho)$ for $\kappa = 0$.
	Assume that
	\begin{equation*}
	\int_{0}^{\infty} \sqrt{\log(N(\mathbb{X}, d_{k_{i,j}^{[2]}},\rho))} \ d\rho< \infty, \quad \forall i,j = 1,\cdots,d.
	\label{eq:intBall2}
	\end{equation*}
	Assume also that the Fourier transform $\widehat{k}_{i,j}^{[2]}$ of $k_{i,j}^{[2]}$ satisfies the same conditions as for $\kappa = 0$, for $i,j = 1, \cdots, d$.
\end{Condition}

\begin{lemma}
	\label{lem:lb} 
	Let $0 \leq \ell < u \leq \infty$. Let
	\begin{equation*}
	P_{n,\ell,u}(\BY_n) = P_{\Btheta^\ast}(Y \in \mathcal{E}_0 | \ \BY_n).
	\end{equation*}
	Then, $\forall \varepsilon \geq 0$, we have
	\begin{equation*}
	P(P_{n,\ell,u}(\BY_n) \leq 1-\varepsilon \ | \ Y \in \mathcal{E}_0) \xrightarrow[n\to\infty]{} 0.
	\end{equation*}
\end{lemma}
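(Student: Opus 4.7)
The plan is to apply Lévy's upward martingale convergence theorem to the conditional probabilities $P_{n,\ell,u}(\BY_n) = P_{\Btheta^\ast}(Y \in \mathcal{E}_0 \mid \sigma(\BY_n))$. Write $\mathcal{F}_n = \sigma(Y(\Bx_1), \ldots, Y(\Bx_n))$ and $\mathcal{F}_\infty = \sigma\bigl(\bigcup_n \mathcal{F}_n\bigr)$. The key identification to establish is $\mathcal{F}_\infty = \sigma(Y)$; once this holds, the event $A := \{Y \in \mathcal{E}_0\}$ belongs to $\mathcal{F}_\infty$ and Lévy's theorem applies directly.

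First, I would verify that $\mathcal{F}_\infty = \sigma(Y)$. The inclusion $\mathcal{F}_\infty \subseteq \sigma(Y)$ is immediate. For the reverse direction, \Cref{cond:Balls} (case $\kappa = 0$) guarantees that $Y$ admits a continuous version, and since $(\Bx_i)_{i \in \mathds{N}}$ is dense in $\mathbb{X}$, for every $\Bx \in \mathbb{X}$ one can extract a subsequence $\Bx_{i_k} \to \Bx$ along which $Y(\Bx_{i_k}) \to Y(\Bx)$ almost surely. Hence each $Y(\Bx)$ is $\mathcal{F}_\infty$-measurable, and by continuity one may write $A = \bigcap_{\Bx \in D} \{\ell \leq Y(\Bx) \leq u\}$ for any countable dense $D \subset \mathbb{X}$, which gives $A \in \mathcal{F}_\infty$.

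Second, I would invoke Lévy's upward theorem on the increasing filtration $(\mathcal{F}_n)_n$. Since $\mathbf{1}_A$ is bounded (hence integrable), almost surely
\begin{equation*}
P_{n,\ell,u}(\BY_n) = E[\mathbf{1}_A \mid \mathcal{F}_n] \xrightarrow[n \to \infty]{} E[\mathbf{1}_A \mid \mathcal{F}_\infty] = \mathbf{1}_A.
\end{equation*}
In particular, on the event $A$, one has $P_{n,\ell,u}(\BY_n) \to 1$ almost surely.

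Third, invoking that $P(A) > 0$ (provided by \cref{lem:nonzero}, cited in \cref{prop:ml}), the conditioning is well-defined and
\begin{equation*}
P\bigl(P_{n,\ell,u}(\BY_n) \leq 1 - \varepsilon \mid Y \in \mathcal{E}_0\bigr) = \frac{E\bigl[\mathbf{1}_A \, \mathbf{1}_{\{P_{n,\ell,u}(\BY_n) \leq 1-\varepsilon\}}\bigr]}{P(A)} \xrightarrow[n\to\infty]{} 0
\end{equation*}
by bounded convergence, since the integrand tends to $0$ almost surely and is dominated by $\mathbf{1}_A$. The only delicate point is the identification $\mathcal{F}_\infty = \sigma(Y)$, but this reduces to the classical fact that a continuous process on a separable metric space is determined by its restriction to any dense sequence; no further probabilistic machinery beyond Lévy's upward theorem and bounded convergence is required.
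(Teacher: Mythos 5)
Your proof is correct, and its skeleton is the same as the paper's: reduce to the joint probability using $P(Y \in \mathcal{E}_0) > 0$ from \cref{lem:nonzero}, apply martingale convergence (L\'evy's upward theorem) to $P_{n,\ell,u}(\BY_n) = E[\mathds{1}_{Y \in \mathcal{E}_0} \mid \mathcal{F}_n]$, identify the limit as $\mathds{1}_{Y \in \mathcal{E}_0}$, and conclude by dominated convergence. The one step where you genuinely diverge is the identification of the limit. The paper does \emph{not} argue that $\{Y \in \mathcal{E}_0\} \in \mathcal{F}_\infty$; instead it invokes a result of Bect et al.\ on the conditional distribution of a GP given $\mathcal{F}_\infty$ (uniform a.s.\ convergence of the conditional mean $\mu_n$ to $Y$ and of the conditional covariance $k_n$ to zero, using density of $(\Bx_i)$), which yields $P(Y \in \mathcal{E}_0 \mid \mathcal{F}_\infty) = 1$ on the event $\{Y \in \mathcal{E}_0\}$. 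Your route is more elementary: since the trajectories are continuous and $[\ell,u]$ is closed, $\{Y \in \mathcal{E}_0\} = \bigcap_{i \in \mathds{N}} \{\ell \leq Y(\Bx_i) \leq u\} \in \mathcal{F}_\infty$ exactly (no null-set caveat needed), so $E[\mathds{1}_A \mid \mathcal{F}_\infty] = \mathds{1}_A$ with no GP-specific machinery. This buys simplicity for this lemma; the paper's heavier route is not wasted, however, because the fact $\sup_{\Bx} k_n(\Bx,\Bx) \to 0$ extracted from it is reused in the proof of \cref{lem:nonzero}, and the degenerate-conditional-law argument is what generalizes directly to $\kappa = 1,2$ in \cref{lem:lbd}, where the constraint involves derivatives of $Y$ rather than its values at the $\Bx_i$ and your direct countable-intersection identification would need additional work. (As a side remark, the lemma as stated with $\varepsilon \geq 0$ is vacuously false at $\varepsilon = 0$; both your proof and the paper's implicitly treat $\varepsilon > 0$, which is clearly what is intended.)
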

\begin{proof}[Proof]
	From \Cref{lem:nonzero} we have $P(Y \in \mathcal{E}_0) > 0$. Hence, it is sufficient to show
	\begin{equation*}
	P(P_{n,\ell,u}(\BY_n) \leq 1-\varepsilon, \ Y \in \mathcal{E}_0) \xrightarrow[n\to\infty]{} 0.
	\end{equation*}
	The term $P_{n,\ell,u}(\BY_n)$, being a conditional expectation, is a martingale with respect to the $\sigma$-algebra generated by $Y(\Bx_1), \cdots, Y(\Bx_n)$. Furthermore, $0 \leq P_{n,\ell,u}(\BY_n) \leq 1$.
	Hence
	\begin{equation*}
	P_{n,\ell,u}(\BY_n) \xrightarrow[n\to\infty]{a.s.} P(Y \in \mathcal{E}_0\ | \ \mathcal{F}_\infty),
	\end{equation*}
	where $\mathcal{F}_\infty$ is the $\sigma$-algebra generated by $[Y(\Bx_i)]_{i \in \mathds{N}}$ using Theorem 6.2.3 from \citep{Kallenberg2002foundations}. Let $\mu_n$ and $k_n$ be the mean and the covariance function (respectively) of $Y$ given $\BY_n$. From proposition 2.8 in \citep{Bect2016MartingaleArXiv}, the conditional distribution of $Y$ given $\mathcal{F}_\infty$ is the distribution of a GP with mean function $\mu_\infty$ and covariance function $k_\infty$. Furthermore, a.s., $\mu_n$ and $k_n$ converge uniformly to $\mu_\infty$ and $k_\infty$, respectively. Hence we can show simply that, because $(x_i)_{i \in \mathds{N}}$ is dense in $\mathbb{X}$, we have a.s. $\mu_\infty = Y$ and $k_\infty$ is the zero function. Hence a.s. if $Y \in \mathcal{E}_0$ holds, then 
	\begin{displaymath}
	P(Y \in \mathcal{E}_0 \ | \ \mathcal{F}_\infty) = 1, \quad \mbox{so that} \quad P_{n,\ell,u}(\BY_n) \xrightarrow[n \to \infty]{} 1.
	\end{displaymath}
	Hence by the dominated convergence theorem
	\begin{displaymath}
	P(P_{n,\ell,u}(\BY_n) \leq 1-\varepsilon , \ Y \in \mathcal{E}_0) \xrightarrow[n\to\infty]{} 0.
	\end{displaymath}
\end{proof}

\begin{lemma}
	\label{lem:lbd}
	Let $\kappa = \{1, 2\}$.
	Let 
	\begin{displaymath}
	P_n (\BY_n) = P_{\Btheta^\ast}(Y \in \mathcal{E}_\kappa | \ \BY_n).
	\end{displaymath}
	Then, $\forall \varepsilon > 0$, we have
	\begin{displaymath}
	P(P_{n} (\BY_n) \leq 1 - \varepsilon \ | \ Y \in \mathcal{E}_\kappa) \xrightarrow[n\to\infty]{} 0.
	\end{displaymath}	
\end{lemma}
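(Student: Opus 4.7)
The plan is to adapt the martingale-convergence argument of \Cref{lem:lb} by showing directly that the event $\{Y \in \mathcal{E}_\kappa\}$ is $\mathcal{F}_\infty$-measurable, where $\mathcal{F}_\infty = \sigma((Y(\Bx_i))_{i \in \mathds{N}})$. First, by \Cref{lem:nonzeroC1,lem:nonzeroC2} one has $P(Y \in \mathcal{E}_\kappa) > 0$, so it is enough to prove that $P(P_n(\BY_n) \leq 1-\varepsilon,\, Y \in \mathcal{E}_\kappa) \to 0$. Exactly as in \Cref{lem:lb}, $P_n(\BY_n)$ is a $[0,1]$-bounded martingale with respect to the filtration $\mathcal{F}_n = \sigma(Y(\Bx_1),\dots,Y(\Bx_n))$, which by Theorem 6.2.3 of \citep{Kallenberg2002foundations} converges almost surely to $P_{\Btheta^\ast}(Y \in \mathcal{E}_\kappa \mid \mathcal{F}_\infty)$.

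The heart of the argument is to show $\{Y \in \mathcal{E}_\kappa\} \in \mathcal{F}_\infty$. Since $(\Bx_i)_{i \in \mathds{N}}$ is dense in $\mathbb{X}$ and $Y$ has a.s.\ continuous trajectories under \Cref{cond:Balls}, for every $\By \in \mathbb{X}$ one can pick $\Bx_{i_k} \to \By$ and obtain $Y(\By) = \lim_k Y(\Bx_{i_k})$ a.s., so $Y(\By)$ is $\mathcal{F}_\infty$-measurable. For $\kappa \in \{1,2\}$, \Cref{cond:Balls} furthermore guarantees that $Y$ has a.s.\ $C^\kappa$ trajectories, so each partial derivative $\partial^\alpha Y(\By)$ with $|\alpha| \leq \kappa$ is an a.s.\ limit of finite differences of $Y$ evaluated at interior points of $\mathbb{X}$ (for instance $\partial_i Y(\By) = \lim_{k \to \infty} k\bigl(Y(\By + k^{-1} e_i) - Y(\By)\bigr)$, and analogously for the second-order case). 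Consequently $\partial^\alpha Y(\By)$ is also $\mathcal{F}_\infty$-measurable. Because $\partial^\alpha Y$ is a.s.\ continuous, the pointwise conditions defining $\mathcal{E}_\kappa$ (nonnegativity of the first-order partial derivatives for $\kappa = 1$, nonnegative-definiteness of the Hessian for $\kappa = 2$) hold for all $\Bx \in \mathbb{X}$ if and only if they hold for all $\Bx$ in a fixed countable dense subset of $\mathbb{X}$. Hence $\{Y \in \mathcal{E}_\kappa\}$ is a countable intersection of $\mathcal{F}_\infty$-measurable events, and is itself $\mathcal{F}_\infty$-measurable.

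It then follows that $P_{\Btheta^\ast}(Y \in \mathcal{E}_\kappa \mid \mathcal{F}_\infty) = \mathds{1}_{\{Y \in \mathcal{E}_\kappa\}}$ almost surely, so the martingale limit equals $1$ a.s.\ on $\{Y \in \mathcal{E}_\kappa\}$, meaning $P_n(\BY_n) \to 1$ a.s.\ on this event. Dominated convergence then yields $P(P_n(\BY_n) \leq 1-\varepsilon,\, Y \in \mathcal{E}_\kappa) \to 0$, which together with $P(Y \in \mathcal{E}_\kappa) > 0$ gives the stated conditional statement.

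The main obstacle I anticipate is the careful handling of the derivative-measurability argument. One must produce a single null set outside of which the finite-difference limits converge to $\partial^\alpha Y(\By)$ simultaneously for all $\By$ in a fixed countable dense subset of $\mathbb{X}$; this follows from taking a countable union of null sets, but requires the $C^\kappa$ trajectory assumption, and near the boundary of $\mathbb{X}$ one must restrict to one-sided difference quotients along admissible directions or, equivalently, to countably many interior points and extend to the closure using continuity of $\partial^\alpha Y$. This technical bookkeeping is the only genuine difficulty, and it is made manageable by the regularity granted by \Cref{cond:Balls}.
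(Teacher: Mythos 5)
Your proof is correct, and its skeleton is the one the paper uses: reduce to the joint probability via $P(Y \in \mathcal{E}_\kappa) > 0$ (from \cref{lem:nonzeroC1,lem:nonzeroC2}), run the bounded-martingale convergence argument of \cref{lem:lb}, and finish by dominated convergence. The paper's own proof of this lemma is a one-line remark that ``the proof is the same as that of \cref{lem:lb}'', plus the observation that $\mathds{1}_{Y \in \mathcal{E}_\kappa}$ is measurable because $Y$ has $C^\kappa$ trajectories; the one place where you genuinely diverge is in how the limit $P_{\Btheta^\ast}(Y \in \mathcal{E}_\kappa \mid \mathcal{F}_\infty) = \mathds{1}_{Y \in \mathcal{E}_\kappa}$ is justified. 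The paper inherits this from \cref{lem:lb} via the result of Bect et al.\ that the conditional law of $Y$ given $\mathcal{F}_\infty$ degenerates to a Dirac mass at the observed trajectory (uniform convergence of $\mu_n$ to $Y$ and of $k_n$ to $0$), whereas you prove directly that $\{Y \in \mathcal{E}_\kappa\}$ lies in (the completion of) $\mathcal{F}_\infty$, by expressing point evaluations and then partial derivatives as a.s.\ limits of difference quotients over the dense design, and reducing the pointwise constraints to a countable dense subset by continuity of $\partial^\alpha Y$. Your route is more self-contained --- it needs no conditional-GP machinery and makes the paper's terse measurability remark precise --- at the cost of the boundary and null-set bookkeeping you correctly flag; the paper's route is shorter because the degenerate conditional law does the measurability work for any path functional at once. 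Both are sound.
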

\begin{proof}[Proof]
	The proof is the same as that of \cref{lem:lb}. In particular, we remark that $\mathds{1}_{Y \in \mathcal{E}_\kappa}$ is a measurable random variable, as $Y$ has $C^\kappa$ trajectories.
\end{proof}

\begin{lemma}
	\label{lem:nonzero}
	Let $\kappa = 0$. Assume that Condition \ref{cond:Balls} is satisfied.
	Then 
	\begin{displaymath}
	P(Y \in \mathcal{E}_0) > 0, \quad \mbox{for} \quad -\infty \leq \ell < u \leq \infty.
	\end{displaymath}
\end{lemma}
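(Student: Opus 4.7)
The plan is to view the sample paths of $Y$ as random elements of the Banach space $C(\mathbb{X})$ equipped with the sup norm, and to argue that $\mathcal{E}_0$ contains a sup-norm ball around some element of the topological support of the law of $Y$. First, I would use the metric entropy integral assumption in Condition~\ref{cond:Balls} together with Dudley's theorem to conclude that $Y$ admits a version with almost surely continuous (and bounded) trajectories on the compact set $\mathbb{X}$. Consequently, the law $\mu$ of $Y$ is a centered Gaussian Borel probability measure on $(C(\mathbb{X}), \|\cdot\|_\infty)$.

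Next, I would produce a function $h$ in the RKHS of $k$, call it $H(k)$, satisfying $\ell < h(x) < u$ for all $x \in \mathbb{X}$. Pick a constant $c \in (\ell,u)$ and a smooth compactly supported bump $h \in C_c^\infty(\realset{d})$ that equals $c$ on an open neighborhood of $\mathbb{X}$. Since $h$ is Schwartz, $\widehat{h}$ decays faster than any polynomial. The Fourier hypothesis $\widehat{k}(w)\|w\|^P\to\infty$ then implies $1/\widehat{k}(w)$ grows at most like $\|w\|^P$ at infinity; combined with the fact that $\widehat{k}$ is a positive continuous function bounded away from $0$ on bounded sets (itself consequence of the same hypothesis and of $k$ being a bona fide covariance), one obtains
\[
\|h\|_{H(k)}^{2} \;=\; \int_{\realset{d}} \frac{|\widehat{h}(w)|^{2}}{\widehat{k}(w)}\,dw \;<\;\infty,
\]
so $h \in H(k)$, with $h\equiv c$ on $\mathbb{X}$.

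Finally, I would combine Cameron--Martin with the small-ball property of centered Gaussians. By the Cameron--Martin theorem, the Gaussian measures $\mu$ (the law of $Y$) and the law of $Y+h$ are mutually absolutely continuous, hence $P(Y \in \mathcal{E}_0) > 0$ if and only if $P(Y \in \mathcal{E}_0 - h) > 0$. Since $\ell < h < u$ uniformly on the compact set $\mathbb{X}$, there exists $\varepsilon > 0$ such that $\{f \in C(\mathbb{X}) : \|f\|_\infty \le \varepsilon\} \subseteq \mathcal{E}_0 - h$. Because $0 \in H(k)$, the origin lies in the topological support of $\mu$ in $C(\mathbb{X})$, so this sup-norm ball has positive $\mu$-measure, finishing the argument.

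The main obstacle I anticipate is the middle step: extracting precisely what the Fourier hypothesis tells us about $H(k)$. One needs both that $\widehat{k}$ does not vanish in places where $\widehat{h}$ is concentrated and that $1/\widehat{k}$ grows at most polynomially at infinity. If $\widehat{k}$ is permitted to have zeros in bounded regions, then the bump $h$ should be replaced by a mollified version whose Fourier support avoids those zeros (possible, e.g., by band-pass filtering and then rescaling so that $h$ still lies strictly between $\ell$ and $u$ on $\mathbb{X}$). This is a technicality rather than a conceptual difficulty, but it is the only delicate point in the argument; the rest is a routine combination of Dudley's theorem, Cameron--Martin, and the fact that $0$ belongs to the support of a centered Gaussian measure on a separable Banach space.
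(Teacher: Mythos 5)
Your argument is correct in substance but follows a genuinely different route from the paper's, and the comparison is instructive. For the small-ball step, the paper argues by hand: it conditions on finitely many values $Y(\Bv_1),\dots,Y(\Bv_n)$, uses the metric-entropy integral of Condition \ref{cond:Balls} through a Dudley-type bound (Theorem 2.10 of Aza\"is--Wschebor) to show $\expect{\sup_{\Bx}|Y(\Bx)-\mu_n(\Bx)|}\to 0$, splits the event using the independence of $Y-\mu_n$ from the conditioning vector, and concludes by a continuity argument at $\BY_v=\Bzero$. You instead invoke the abstract fact that the topological support of a centred Gaussian measure on a separable Banach space is the closure of its Cameron--Martin space, hence contains $0$, so every sup-norm ball around the origin has positive mass. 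Your route is shorter and does not use the entropy condition quantitatively (only to secure continuity/boundedness of paths, which Condition \ref{cond:Balls} assumes anyway), at the price of heavier abstract machinery and a mild technicality about separability of the path space when $\mathbb{X}$ is merely bounded rather than compact -- a point the paper also glosses over. For the shift step, the two proofs are mathematically equivalent: the paper places a smooth square-integrable mean $f$ strictly inside $(\ell,u)$ and cites Yadrenko/Stein for equivalence of the Gaussian measures of $Y$ and $Y+f$, whereas you make the same move explicit via Cameron--Martin and the spectral characterisation $\|h\|_{H(k)}^2=\int|\widehat{h}|^2/\widehat{k}$, using the polynomial lower bound on $\widehat{k}$ at infinity to control the tail of that integral. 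Your version has the merit of showing exactly where the Fourier hypothesis enters.

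One intermediate claim is wrong as stated: the hypothesis $\widehat{k}(w)\|w\|^P\to\infty$ controls only the behaviour at infinity and, together with Bochner, gives $\widehat{k}\geq 0$, not that $\widehat{k}$ is bounded away from zero on bounded sets. You flag this yourself in your final paragraph, and the band-pass repair you sketch is not obviously complete (after removing the frequencies where $\widehat{k}$ vanishes you must still verify that the modified $h$ stays strictly between $\ell$ and $u$ on $\mathbb{X}$, and in fact only \emph{some} extension of $h|_{\mathbb{X}}$ off $\mathbb{X}$ needs to lie in the global RKHS, which is the extra freedom that makes this work). To be fair, the paper's proof inherits exactly the same difficulty inside its citation of Yadrenko and Stein, so this is a shared technicality rather than a defect specific to your approach; but if you keep the Cameron--Martin formulation you should either assume $\widehat{k}>0$ (true for all covariances used in the paper) or spell out the low-frequency repair.
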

\begin{proof}[Proof]
	We first prove that for all $\delta>0$
	\begin{displaymath}
	P(\forall \Bx \in \mathbb{X}: \ |Y(\Bx)| \leq \delta) > 0.
	\end{displaymath}
	This result is true and appears implicitly in the literature about small ball estimates for GP \citep{Li1999BallEstimates}. We nevertheless provide a proof of it for self-consistency. Let $(\Bv_i)_{i\in \mathds{N}}$ be a dense sequence in $\mathbb{X}$.
	Let $ \BY_v = [\begin{smallmatrix} Y(\Bv_1), & \cdots, & Y(\Bv_n) \end{smallmatrix}]^\top$.	
	Let $\mu_n$ and $k_n$ be the mean and the covariance function of $Y$ given $\BY_v$. Then we let
	\begin{displaymath}
	d_{k_n}^2 (\Bx,\Bx') = \var{(Y(\Bx)-Y(\Bx'))| \mathcal{F}_n},
	\end{displaymath}
	where $\mathcal{F}_n = \sigma(Y(\Bv_1), \cdots, Y(\Bv_n))$.
	Thus 
	\begin{displaymath}
	d_{k_n}^2 (\Bx,\Bx') = \expect{\var{(Y(\Bx)-Y(\Bx'))| \mathcal{F}_n}} \leq \var{(Y(\Bx)-Y(\Bx'))} = d_{k}^2 (\Bx,\Bx'),
	\end{displaymath}
	from the law of total variance. Hence $N(\mathbb{X}, d_{k_n}, \rho) \leq N(\mathbb{X}, d_{k}, \rho) \ \forall \rho$. Also, from Theorem 2.10 in \citep{Azais2009level} (together with an union bound and using that $\max_{\Bx \in \mathbb{X}} \ Y(\Bx)$ and $\max_{\Bx \in \mathbb{X}} \ [-Y(\Bx)]$ have the same law) we have, with $C$ an universal constant,
	\begin{align*}
	\expect{\max\limits_{\Bx \in \mathbb{X}} \ \left|Y(\Bx) - \mu_n(\Bx)\right|}
	&\leq C \int_{0}^{\infty} \sqrt{\log(N(\mathbb{X}, d_{k_n},\rho))} \ d\rho \\
	&= C \int_{0}^{2\sqrt{ \sup\limits_{\Bx \in \mathbb{X}} \ k_n(\Bx,\Bx) }} \sqrt{\log(N(\mathbb{X}, d_{k_n},\rho))} \ d\rho \\
	&\leq C \int_{0}^{2\sqrt{\sup\limits_{\Bx \in \mathbb{X}} \ k_n(\Bx,\Bx) }} \sqrt{\log(N(\mathbb{X}, d_{k},\rho))} \ d\rho.
	\end{align*}
	This last integral goes to 0 as $n \to \infty$ because $\sup_{\Bx \in \mathbb{X}} \ k_n(\Bx,\Bx) \to 0$ (see the proof of \cref{lem:lb}), and because of Condition \ref{cond:Balls}.
	Hence $\max_{\Bx \in \mathbb{X}} \ |Y(\Bx) - \mu_n(\Bx)|$ goes to 0 in probability. 	Furthermore, $\mathscr{P} =	P(\forall \Bx \in \mathbb{X}, \ -\delta \leq Y(\Bx) \leq \delta)$ satisfies
	\begin{align*}
	\mathscr{P}
	&\geq P \left(\forall \Bx \in \mathbb{X}, \ -\frac{\delta}{2} \leq \mu_n(\Bx) \leq \frac{\delta}{2}, \ -\frac{\delta}{2} \leq Y(\Bx) - \mu_n(\Bx) \leq \frac{\delta}{2}\right) \\
	&= P \left(\forall \Bx \in \mathbb{X}, \ -\frac{\delta}{2} \leq \mu_n(\Bx) \leq \frac{\delta}{2}\right) P \left(\forall \Bx \in \mathbb{X}, \ -\frac{\delta}{2} \leq Y(\Bx) - \mu_n(\Bx) \leq \frac{\delta}{2}\right),
	\end{align*}
	since the distribution of $Y-\mu_n$ does not depend on $\BY_v$.
	We now fix $n \in \mathds{N}$ for which the second probability is non-zero (the existence is guaranteed from above). Then, the first probability is non-zero by continuity since, when $\BY_v = \Bzero$, then $\mu_n$ is the zero function. 
	Hence we have 
	\begin{equation*}
	P(\forall \Bx \in \mathbb{X}: \ |Y(\Bx)| \leq \delta) > 0.
	\end{equation*}
	Let $f$ be a $C^\infty$ function on $\realset{d}$, square integrable, satisfying
	\begin{equation*}
	\forall \Bx \in \mathbb{X}, \quad \ell+\delta \leq f(\Bx) \leq u-\delta,
	\end{equation*}
	for $\delta >0$. ($f$ exists for $\delta >0$ small enough,  and can be taken for instance as $f(\Bx) = \exp\{ -\tau \|\Bx-\Bx_0\|^2\} \left[\frac{u+\ell}{2}\right]$ with $\tau>0$ small enough, and for any $\Bx_0 \in \mathbb{X}$).
	Let $Z$ be a GP with covariance function $k$ and mean function $f$. Then, from what we have shown before, we have
	\begin{displaymath}
	P(\forall \Bx \in \mathbb{X}: \ |Z(\Bx)-f(\Bx)| \leq \delta) > 0,
	\end{displaymath}
	so that
	\begin{displaymath}
	P(\forall \Bx \in \mathbb{X}: \ \ell \leq Z(\Bx) \leq u) > 0.
	\end{displaymath}
	From \citep{Yadrenko1983SpectralTheory} (p.138), as discussed by \citep{stein1999interpolation} (p.121), the Gaussian measures of $Y$ and $Z$ are equivalent. Thus
	\begin{displaymath}
	P(Y \in \mathcal{E}_0) = P(\forall \Bx \in \mathbb{X}: \ \ell \leq Y(\Bx) \leq u) > 0.
	\end{displaymath}
\end{proof}

\begin{lemma}
	\label{lem:nonzeroC1}
	Let $\kappa = 1$. Assume that Condition \ref{cond:Balls} is satisfied.
	Then
	\begin{displaymath}
	P\left(Y \in \mathcal{E}_1 \right) > 0.
	\end{displaymath}
\end{lemma}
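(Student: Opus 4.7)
The plan is to mirror the argument of \cref{lem:nonzero} at the level of the derivative processes $\partial Y / \partial x_i$. Condition~\ref{cond:Balls} for $\kappa = 1$ imposes, on each covariance function $k_i^{[1]}$ of $\partial Y / \partial x_i$, exactly the metric-entropy and Fourier-decay assumptions that were made on $k$ in the case $\kappa = 0$, so the same machinery applies with $Y$ replaced by $\partial Y / \partial x_i$.

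The first and main step is the joint derivative small-ball estimate
\[
P\bigg( \forall i \in \{1,\dots,d\}, \forall \Bx \in \mathbb{X}: \ \Big| \tfrac{\partial}{\partial x_i} Y(\Bx) \Big| \leq \delta \bigg) > 0
\]
for every $\delta > 0$. For each fixed $i$, I would replay the Dudley-entropy argument from \cref{lem:nonzero} on the process $\partial Y / \partial x_i$ conditioned on $Y(\Bv_1), \dots, Y(\Bv_n)$ for a dense sequence $(\Bv_n)$ in $\mathbb{X}$: the conditional variance of $\partial Y / \partial x_i$ shrinks to $0$ by the martingale/filtration argument already used in \cref{lem:lb}, and since $d_{k_n^{[i]}} \leq d_{k_i^{[1]}}$ the entropy integral in Condition~\ref{cond:Balls} bounds the expected uniform deviation from the conditional mean $\mu_n^{[i]}$. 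The same factoring as in \cref{lem:nonzero}, splitting on $\{\sup_\Bx |\mu_n^{[i]}(\Bx)| \leq \delta/2\}$ (positive by continuity at $\BY_v = \Bzero$) and the independent event $\{\sup_\Bx |\partial Y/\partial x_i - \mu_n^{[i]}| \leq \delta/2\}$, yields $P(\sup_\Bx |\partial Y / \partial x_i| \leq \delta) > 0$ for each $i$. To pass to the joint event, I note that each set $\{\sup_\Bx |\partial Y / \partial x_i| \leq \delta\}$ is a symmetric convex subset of the Banach space of $C^1$ functions on $\mathbb{X}$, so by the Gaussian correlation inequality the joint probability is at least the product of the marginals, hence strictly positive.

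The remainder is then immediate. Choose a $C^\infty$, square-integrable function $f$ on $\realset{d}$ with $\partial f / \partial x_i(\Bx) = \delta$ for all $\Bx \in \mathbb{X}$ and all $i$; such $f$ can be built by multiplying $\Bx \mapsto \delta \sum_i x_i$ by a smooth compactly supported cutoff equal to $1$ on $\mathbb{X}$. Let $Z$ be the Gaussian process with covariance $k$ and mean $f$. Then
\[
P(Z \in \mathcal{E}_1) \geq P\bigg( \forall i, \forall \Bx: \ \tfrac{\partial}{\partial x_i}(Y+f)(\Bx) \geq 0 \bigg) \geq P\bigg( \forall i, \forall \Bx: \ \Big| \tfrac{\partial}{\partial x_i} Y(\Bx) \Big| \leq \delta \bigg) > 0.
\]
Finally, the Fourier-decay part of Condition~\ref{cond:Balls} on $k$ together with the Yadrenko/Stein equivalence-of-measures argument already used at the end of \cref{lem:nonzero} gives that the laws of $Y$ and $Z$ are equivalent, so $P(Y \in \mathcal{E}_1) > 0$.

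The main obstacle is the joint-in-$i$ small-ball estimate: Condition~\ref{cond:Balls} is stated componentwise on the covariances $k_i^{[1]}$, so Dudley's entropy bound only gives the small-ball bound for each $\partial Y / \partial x_i$ separately. The Gaussian correlation inequality resolves this cleanly; an alternative, should one wish to avoid it, would be to apply Dudley directly to the $\realset{d}$-valued Gaussian process $\nabla Y$ after checking that its joint pseudometric is dominated by $\sum_i d_{k_i^{[1]}}$, so that the joint entropy integral is bounded by the sum of the componentwise ones.
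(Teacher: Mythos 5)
Your proof is correct and follows the same three-step skeleton as the paper: a small-ball estimate for the gradient process obtained by conditioning on $Y(\Bv_1),\dots,Y(\Bv_n)$ and applying the entropy bound to each $\partial Y/\partial x_i$, then a mean shift by a smooth square-integrable $f$ with $\partial f/\partial x_i$ bounded below on $\mathbb{X}$, then the Yadrenko/Stein equivalence of the laws of $Y$ and $Y+f$. The one place where you genuinely diverge is the joint-in-$i$ small-ball event, which you obtain from the separate marginals via the Gaussian correlation inequality. That step is valid (the sets $\{f:\sup_{\Bx}|\partial f/\partial x_i|\leq\delta\}$ are symmetric and convex, and Royen's inequality extends to centered Gaussian measures on separable function spaces by cylindrical approximation), but it is much heavier machinery than the paper needs. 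The paper never decouples the coordinates: it lower-bounds the joint probability directly by the product of (i) the probability that \emph{all} $d$ conditional means $\expect{\partial Y/\partial x_i\,|\,\BY_v}$ lie in $[-\delta/2,\delta/2]$ and (ii) the probability that \emph{all} $d$ residual processes do, these two events being independent since the residuals are independent of $\BY_v$. Event (ii) has probability tending to $1$ as $n\to\infty$ because each of the finitely many residual suprema tends to $0$ in probability (a union bound suffices), and event (i) is positive by continuity of the conditional means in $\BY_v$ at $\BY_v=\Bzero$, where they all vanish simultaneously. So the correlation inequality can be dispensed with entirely; your alternative suggestion of running Dudley on the vector-valued process $\nabla Y$ is closer in spirit to what the paper actually does. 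A minor additional remark: your cutoff construction gives $\partial f/\partial x_i$ exactly $\delta$ on $\mathbb{X}$, whereas the paper takes $f(\Bx)=[\sum_i x_i]\exp\{-\tau\|\Bx-\Bx_0\|^2\}$ with $\tau$ small so that $\partial f/\partial x_i\geq 1/2$; both choices are smooth, square integrable, and compatible with the spectral condition needed for equivalence of measures.
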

\begin{proof}[Proof]
	We first prove that for all $\delta>0$
	\begin{displaymath}
	P\left(\forall i = 1,\cdots,d, \ \forall \Bx \in \mathbb{X} : \ \left|\frac{\partial}{\partial x_i} Y(\Bx)\right| \leq \delta \right) > 0. 
	\end{displaymath}
	We let $(\Bv_i)_{i \in \mathds{N}}$ and $\BY_v$ be defined as in the proof of \cref{lem:nonzero}. 
	Then, as in this proof we can show that for all $i=1,\cdots,d$
	\begin{displaymath}
	\max\limits_{\Bx \in \mathbb{X}} \left| \frac{\partial}{\partial x_i} Y(\Bx) - \expect{\frac{\partial}{\partial x_i} Y(\Bx) \bigg| \BY_v} \right| \xrightarrow[n \to \infty]{P} 0.
	\end{displaymath}	
	Furthermore, $\mathscr{P} = P\left(\forall i = 1,\cdots,d, \ \forall \Bx \in \mathbb{X} : \ \left|\frac{\partial}{\partial x_i} Y(\Bx)\right| \leq \delta \right)$ satisfies
	\begin{align*}
	\mathscr{P}
	&\geq P\left(\forall i = 1,\cdots,d, \ \forall \Bx \in \mathbb{X}, \ -\frac{\delta}{2} \leq \expect{\frac{\partial}{\partial x_i} Y(\Bx)\bigg|\BY_v} \leq \frac{\delta}{2}, \right. \\
	& \hspace{28pt} \left. \forall i = 1,\cdots,d, \ \forall \Bx \in \mathbb{X}, \ -\frac{\delta}{2} \leq \frac{\partial}{\partial x_i} Y(\Bx) -  \expect{\frac{\partial}{\partial x_i} Y(\Bx)\bigg|\BY_v} \leq \frac{\delta}{2} \right) \\
	& = P\left(\forall i = 1,\cdots,d, \ \forall \Bx \in \mathbb{X}, \ -\frac{\delta}{2} \leq \expect{\frac{\partial}{\partial x_i} Y(\Bx)\bigg|\BY_v} \leq \frac{\delta}{2} \right) \\
	&\times P\left(\forall i = 1,\cdots,d, \ \forall \Bx \in \mathbb{X}, \ -\frac{\delta}{2} \leq \frac{\partial}{\partial x_i} Y(\Bx) -  \expect{\frac{\partial}{\partial x_i} Y(\Bx)\bigg|\BY_v} \leq \frac{\delta}{2} \right).
	\end{align*}
	Notice that the last equality holds because the distribution of the process $\Bx \to \frac{\partial}{\partial x_i} Y(\Bx) -  \expect{\frac{\partial}{\partial x_i} Y(\Bx) | \BY_v}$ does not depend on $\BY_v$. 	
	We now fix $n\in \mathds{N}$ so that the second probability is non-zero (the existence is guaranteed from above). 
	Then, the first probability is non-zero by continuity since, when $\BY_v = \Bzero$, then for all $i = 1, \cdots, d$, $\ \expect{\frac{\partial}{\partial x_i} Y|\BY_v}$ is the zero function.
	Hence, we have  obtained
	\begin{displaymath}
	P\left(\forall i = 1,\cdots,d, \ \forall \Bx \in \mathbb{X} : \ \left|\frac{\partial}{\partial x_i} Y(\Bx)\right| \leq \delta \right).
	\end{displaymath}
	We now conclude the proof in the same way as for \cref{lem:nonzero}.
	We consider the mean function
	\begin{displaymath}
	f(\Bx) = \left[\sum_{i=1}^{d} x_i\right]\exp\{ -\tau \|\Bx-\Bx_0\|^2\},
	\end{displaymath}
	with $\Bx_0 \in \mathbb{X}$ and $\tau > 0$.
	For $\tau$ small enough, $f$ is $C^\infty$, square integrable, and satisfies
	\begin{displaymath}
	\forall i = 1,\cdots,d, \quad \forall \Bx \in \mathbb{X}, \quad \frac{\partial}{\partial x_i} f(\Bx) \geq \frac{1}{2}.
	\end{displaymath}	
	Then, we conclude the proof as in the proof of \cref{lem:nonzero}.
\end{proof}

\begin{lemma}
	\label{lem:nonzeroC2}
	Let $\kappa = 2$. Assume that Condition \ref{cond:Balls} is satisfied. Then,
	\begin{displaymath}
	P\left(Y \in \mathcal{E}_2 \right) > 0. 
	\end{displaymath}
\end{lemma}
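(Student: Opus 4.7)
The plan is to mimic, step by step, the structure of the proofs of \cref{lem:nonzero} and \cref{lem:nonzeroC1}, adapting everything to second partial derivatives. The scheme is: (i) prove a ``small-ball'' estimate stating that all the second partials of $Y$ can simultaneously be uniformly small on $\mathbb{X}$ with positive probability; (ii) build a deterministic $C^\infty$, square-integrable function $f$ whose Hessian is uniformly positive definite on $\mathbb{X}$; (iii) use (i) and (ii) to show that the GP $Z$ with covariance $k$ and mean $f$ lies in $\mathcal{E}_2$ with positive probability; (iv) conclude via the equivalence of the Gaussian measures of $Y$ and $Z$, exactly as in \cref{lem:nonzero}.

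For step (i), I would show that for every $\delta > 0$,
\[
P\!\left(\forall i,j = 1,\ldots,d,\; \forall \Bx \in \mathbb{X}:\; \left|\tfrac{\partial^2}{\partial x_i \partial x_j} Y(\Bx)\right| \leq \delta \right) > 0.
\]
The argument is the one used in \cref{lem:nonzeroC1}, but applied to the GPs $\partial^2 Y/\partial x_i \partial x_j$, whose covariance functions are the $k_{i,j}^{[2]}$ appearing in Condition \ref{cond:Balls} for $\kappa=2$. Fixing a dense sequence $(\Bv_i)_{i \in \mathds{N}}$ in $\mathbb{X}$ and conditioning on $\BY_v = (Y(\Bv_1),\dots,Y(\Bv_n))^\top$, the conditional pointwise variances of each $\partial^2 Y/\partial x_i \partial x_j$ tend to $0$ uniformly (as in the proof of \cref{lem:lb}), and the Dudley-type entropy bound (Theorem 2.10 in \citep{Azais2009level}) combined with the integrability assumption on $N(\mathbb{X}, d_{k_{i,j}^{[2]}}, \rho)$ gives $\max_{\Bx \in \mathbb{X}} |\partial^2 Y/\partial x_i \partial x_j(\Bx) - \expect{\partial^2 Y/\partial x_i \partial x_j(\Bx)|\BY_v}| \xrightarrow[n \to \infty]{P} 0$. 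The same product-probability factorization used in \cref{lem:nonzeroC1} (the centered process is independent of $\BY_v$, and the conditional mean depends continuously on $\BY_v$ and vanishes at $\BY_v=\Bzero$) then yields the small-ball estimate above.

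For step (ii), I would pick $\Bx_0 \in \mathbb{X}$ and consider
\[
f(\Bx) = \tfrac{1}{2}\|\Bx - \Bx_0\|^2 \exp\{-\tau \|\Bx - \Bx_0\|^2\},
\]
with $\tau > 0$. A direct computation gives $\partial_{ij} f(\Bx) = \delta_{ij} + O(\tau)$ uniformly on the bounded set $\mathbb{X}$, so for $\tau$ small enough there exists $c>0$ with $\partial^2 f(\Bx)/\partial \Bx^2 \succeq c\,\BI$ for all $\Bx \in \mathbb{X}$; moreover $f$ is $C^\infty$ and square integrable. Applying step (i) with threshold $\delta = c/(2d)$ to control the operator norm of the Hessian perturbation, and using that $Z - f$ has the same law as $Y$, gives $P(Z \in \mathcal{E}_2) > 0$. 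The Gaussian measures of $Y$ and $Z$ are equivalent by the result of \citep{Yadrenko1983SpectralTheory} (as cited by \citep{stein1999interpolation}), whose applicability is ensured by the Fourier-decay part of Condition \ref{cond:Balls}, so $P(Y \in \mathcal{E}_2) > 0$.

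The main obstacle is step (i): lifting the entropy/martingale small-ball argument of \cref{lem:nonzero,lem:nonzeroC1} to the second-derivative processes. This requires that the conditional covariances of $\partial^2 Y/\partial x_i \partial x_j$ given a dense sample of $Y$ really do contract uniformly to $0$, and that the $d_{k_{i,j}^{[2]}}$-entropy bounds transfer to these conditional metrics; both hinge on Condition \ref{cond:Balls} for $\kappa=2$ and on the fact that $Y$ is $C^2$ so that the relevant derivatives are well-defined jointly Gaussian processes. Once these technical ingredients are in place, the mean-shift construction and the equivalence-of-measures conclusion go through verbatim as in the previous two lemmas.
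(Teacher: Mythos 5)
Your proposal matches the paper's proof essentially step for step: the same small-ball estimate for the second-derivative processes obtained by transplanting the entropy/martingale argument of \cref{lem:nonzeroC1}, the same kind of mean-shift function of the form $\|\Bx\|^2$ (or $\|\Bx-\Bx_0\|^2$) times a Gaussian envelope with uniformly positive-definite Hessian for small $\tau$, and the same conclusion via equivalence of the Gaussian measures of $Y$ and $Z$. The only differences are cosmetic (the paper uses $f(\Bx)=\bigl[\sum_i x_i^2\bigr]\exp\{-\tau\|\Bx-\Bx_0\|^2\}$ and leaves the perturbation bound implicit, whereas you spell out the threshold $\delta = c/(2d)$), so the argument is correct and not a genuinely different route.
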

\begin{proof}[Proof]
	We first prove that for all $\delta>0$
	\begin{displaymath}
	P\left(\forall i,j = 1,\cdots,d, \ \forall \Bx \in \mathbb{X} : \ \left|\frac{\partial^2}{\partial x_i \partial x_j} Y(\Bx)\right| \leq \delta \right) > 0. 
	\end{displaymath}
	This is done in a similar way as for showing $P\left(\forall i = 1,\cdots,d, \ \forall \Bx \in \mathbb{X} : \ \left|\frac{\partial}{\partial x_i} Y(\Bx)\right| \leq \delta \right) > 0$ in the proof of \cref{lem:nonzeroC1}.
	We then conclude similarly as the rest of the proof this Lemma. In particular, we consider the mean function
	\begin{displaymath}
	f(\Bx) = \left[\sum_{i=1}^{d} x_i^2\right]\exp\{ -\tau \|\Bx-\Bx_0\|^2\},
	\end{displaymath}
	with $\Bx_0 \in \mathbb{X}$ and $\tau>0$.
	Let $\lambda_\inf(M)$ be the smallest eigenvalue of a symmetric matrix $M$. Then, for $\tau$ small enough, $f$ is $C^\infty$, square integrable, and satisfies
	\begin{displaymath}
	\forall \Bx \in \mathbb{X}, \quad \lambda_\inf \left(\frac{\partial^2}{\partial \Bx^2} f(\Bx)\right) \geq 1.
	\end{displaymath}
\end{proof}

\begin{lemma}
	\label{lem:continuity}
	Let $\kappa \in \{0,1,2\}$.
	Assume that Condition \ref{cond:Balls} holds.
	Let $Y_\Btheta$ be the GP defined by
	\begin{displaymath}
	Y_\Btheta(t) = \sigma Y(\theta_1 t_1, \cdots, \theta_d t_d).
	\end{displaymath}
	Let $P_\Btheta^\kappa = P(Y_\Btheta \in \mathcal{E}_\kappa)$ (see \Cref{eq:convexsetFun2}). Then,
	\begin{displaymath}
	\inf\limits_{\Btheta \in \BTheta} \ P_\Btheta^\kappa >0.
	\end{displaymath}
\end{lemma}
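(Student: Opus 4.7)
The plan is to combine three ingredients: (a) strict positivity $P(Y_\Btheta \in \operatorname{int}(\mathcal{E}_\kappa)) > 0$ for every $\Btheta \in \BTheta$, with the interior taken in the $C^\kappa(\mathbb{X})$ topology; (b) a coupling-based lower bound relating $P_\Btheta^\kappa$ for nearby $\Btheta$; and (c) a finite-subcover argument using compactness of $\BTheta$.

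For (a), I would apply \cref{lem:nonzero,lem:nonzeroC1,lem:nonzeroC2} to the rescaled process $Y_\Btheta(\Bt) = \sigma Y(\theta_1 t_1, \ldots, \theta_d t_d)$ in place of $Y$. Inspecting the proofs of those lemmas, the mean function $f$ they construct lies strictly inside the admissible set (e.g.\ $\ell + \delta \leq f \leq u - \delta$ for $\kappa = 0$, $\partial_i f \geq 1/2$ for $\kappa = 1$, Hessian with smallest eigenvalue $\geq 1$ for $\kappa = 2$), so the small-ball estimate combined with Yadrenko's equivalence in fact yields $P(Y_\Btheta \in \operatorname{int}(\mathcal{E}_\kappa)) > 0$, not merely $P(Y_\Btheta \in \mathcal{E}_\kappa) > 0$. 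The only nontrivial point is to check that Condition~\ref{cond:Balls} is inherited by $Y_\Btheta$. Since $\BTheta \subset (0,\infty)^{d+1}$ is compact, the maps $\Bt \mapsto (\theta_1 t_1, \ldots, \theta_d t_d)$ are bi-Lipschitz on $\mathbb{X}$ with constants uniform in $\Btheta$, so the covering-number integrals transform by bounded factors and remain finite. The Fourier transform of $k_\Btheta$ equals $\sigma^2 \bigl(\prod_i \theta_i^{-1}\bigr) \widehat{k}(w_1/\theta_1, \ldots, w_d/\theta_d)$, which still satisfies the polynomial-growth tail condition; the same bookkeeping goes through for the derivative kernels $k_i^{[1]}, k_{i,j}^{[2]}$ needed when $\kappa \geq 1$.

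For (b), I would couple the whole family $(Y_\Btheta)_{\Btheta \in \BTheta}$ through a single realization of $Y$ on an enlarged compact set $\mathbb{X}_{\max}$ containing all the rescaled input sets $\{(\theta_1 t_1, \ldots, \theta_d t_d) : \Bt \in \mathbb{X}\}$ for $\Btheta \in \BTheta$. Almost surely $Y$ has $C^\kappa$ paths on $\mathbb{X}_{\max}$, so its derivatives up to order $\kappa$ are uniformly continuous there, and the map $\Btheta \mapsto Y_\Btheta$ is a.s.\ continuous from $\BTheta$ into $C^\kappa(\mathbb{X})$. In particular, if $\Btheta_n \to \Btheta_0$ then $Y_{\Btheta_n} \to Y_{\Btheta_0}$ in $C^\kappa(\mathbb{X})$ a.s. Because $\operatorname{int}(\mathcal{E}_\kappa)$ is open in $C^\kappa(\mathbb{X})$, on the event $\{Y_{\Btheta_0} \in \operatorname{int}(\mathcal{E}_\kappa)\}$ one has $Y_{\Btheta_n} \in \mathcal{E}_\kappa$ for all $n$ large enough. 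Fatou's lemma applied to the indicators then gives
\[
\liminf_{\Btheta_n \to \Btheta_0} P_{\Btheta_n}^\kappa \;\geq\; P\bigl(Y_{\Btheta_0} \in \operatorname{int}(\mathcal{E}_\kappa)\bigr) \;>\; 0,
\]
so every $\Btheta_0 \in \BTheta$ admits an open neighborhood $U_{\Btheta_0} \subset \BTheta$ on which $P_\Btheta^\kappa \geq \tfrac{1}{2} P\bigl(Y_{\Btheta_0} \in \operatorname{int}(\mathcal{E}_\kappa)\bigr)$.

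For (c), compactness of $\BTheta$ extracts a finite subcover $U_{\Btheta_1}, \ldots, U_{\Btheta_N}$, and
\[
\inf_{\Btheta \in \BTheta} P_\Btheta^\kappa \;\geq\; \min_{i \leq N} \tfrac{1}{2}\, P\bigl(Y_{\Btheta_i} \in \operatorname{int}(\mathcal{E}_\kappa)\bigr) \;>\; 0
\]
follows immediately. I expect the main obstacle to be step (a): carefully checking that Condition~\ref{cond:Balls}, particularly the Fourier-tail requirement on the derivative kernels when $\kappa \geq 1$, is stable under the input rescaling and uniform in $\Btheta \in \BTheta$. The coupling-plus-Fatou argument and the finite-subcover step are essentially mechanical once the $C^\kappa$ regularity of sample paths is transported through the coupling.
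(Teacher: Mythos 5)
Your argument is correct and follows essentially the same route as the paper: pointwise positivity of $P_\Btheta^\kappa$ (with a strict margin) obtained from the proofs of \cref{lem:nonzero,lem:nonzeroC1,lem:nonzeroC2} applied to the rescaled process, combined with a continuity-in-$\Btheta$ argument driven by the a.s.\ uniform convergence of $Y_{\Btheta_n}$ and its derivatives under the common realization of $Y$, and finally compactness of $\BTheta$. The only difference is the device used to handle the discontinuity of the indicator: you keep the indicator but target the open interior of $\mathcal{E}_\kappa$ and apply Fatou, whereas the paper mollifies the indicator into a continuous lower bound $P_{\Btheta,\varepsilon}^\kappa \leq P_\Btheta^\kappa$ and shows this surrogate is an everywhere-positive continuous function of $\Btheta$ by dominated convergence.
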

\begin{proof}[Proof]
	We do the proof for $\kappa=2$. The proof for $\kappa=0,1$ is similar.
	Let $\varepsilon > 0$ and let 
	\begin{displaymath}
	P_{\Btheta,\varepsilon}^\kappa = \expect{\mathds{1}_{I(Y_\Btheta)\geq \varepsilon} + \frac{I(Y_\Btheta)}{\varepsilon} \mathds{1}_{0 \leq I(Y_\Btheta)\leq \varepsilon}},
	\end{displaymath}
	with $I(Y_\Btheta) = \inf_{\Bx \in \mathbb{X}} \ \lambda_\inf \left(\frac{\partial^2}{\partial \Bx^2} Y_\Btheta (\Bx)\right)$.
	We have $P_{\Btheta,\varepsilon}^\kappa \leq P_{\Btheta}^\kappa$ for all $\varepsilon > 0$.
	With the proof of \cref{lem:nonzeroC2}, we also obtain for $\varepsilon>0$ small enough
	\begin{displaymath}
	\forall \Btheta \in \BTheta \quad P_{\Btheta,\varepsilon}^\kappa > 0.
	\end{displaymath}
	Hence, the proof is concluded, by compacity, if we show that $\Btheta \to P_{\Btheta,\varepsilon}^\kappa$ is a continuous function on $\BTheta$. Let us show this. Let $\Btheta = (\sigma_1^2, \theta_1, \cdots, \theta_d) \in (0,\infty)^{d+1}$ and $\Btheta_{n} = (\sigma_{n}^2, \theta_{n1}, \cdots, \theta_{nd}) \to \Btheta$. We have
	\begin{displaymath}
	\frac{\partial^2}{\partial x_i x_j} Y_{\Btheta_n}(\Bx) = \sigma_n \left((\theta_n)_i (\theta_n)_j \frac{\partial^2}{\partial x_i x_j} Y(\theta_{n1} x_1, \cdots, \theta_{nd} x_d) \right).
	\end{displaymath}
	Hence, because $Y$ is $C^2$, we have a.s.
	\begin{displaymath}
	\sup\limits_{\Bx \in \mathbb{X}} \left|\left| \frac{\partial^2}{\partial \Bx^2} Y_{\Btheta_n}(\Bx) - \frac{\partial^2}{\partial \Bx^2} Y_{\Btheta}(\Bx) \right|\right| \xrightarrow[n \to \infty]{} 0,
	\end{displaymath}
	for any matrix norm $\|\cdot\|$. Hence also since $Y$ is $C^2$, we can show, a.s.
	\begin{displaymath}
	\left(\inf\limits_{\Bx \in \mathbb{X}} \ \lambda_\inf \left(\frac{\partial^2}{\partial \Bx^2} Y_{\Btheta_n}(\Bx)\right) - \inf\limits_{\Bx \in \mathbb{X}} \ \lambda_\inf \left(\frac{\partial^2}{\partial \Bx^2} Y_{\Btheta}(\Bx) \right)\right) \xrightarrow[n \to \infty]{} 0.
	\end{displaymath}
	Hence, we conclude by dominated convergence observing that $t \to (\mathds{1}_{t \geq \varepsilon} + \frac{t}{\varepsilon} \mathds{1}_{0 \leq t \leq \varepsilon})$ is a continuous function on $\realset{}$.
\end{proof}

\section*{Acknowledgement}
This research was conducted within the frame of the Chair in Applied Mathematics OQUAIDO, gathering partners in technological research (BRGM, CEA, IFPEN, IRSN, Safran, Storengy) and academia (CNRS, Ecole Centrale de Lyon, Mines Saint-Etienne, University of Grenoble, University of Nice, University of Toulouse) around advanced methods for Computer Experiments. We thank Yann Richet (IRSN) for providing the nuclear criticality safety data. 

\bibliographystyle{apa}
\bibliography{arXiv2017}

\begin{thebibliography}{}

\bibitem[\protect\astroncite{IAE}{2014}]{IAEA2014}
 (2014).
\newblock {\em Criticality Safety in the Handling of Fissile Material: Specific
  Safety Guide}.
\newblock International Atomic Energy Agency (IAEA).
\newblock IAEA Safety Standards Series No. SSG-27.

\bibitem[\protect\astroncite{Aza{\"\i}s and Wschebor}{2009}]{Azais2009level}
Aza{\"\i}s, J.-M. and Wschebor, M. (2009).
\newblock {\em Level sets and extrema of random processes and fields}.
\newblock John Wiley \& Sons.

\bibitem[\protect\astroncite{Bachoc}{2013}]{Bachoc2013CrossValidation}
Bachoc, F. (2013).
\newblock Cross validation and maximum likelihood estimations of
  hyper-parameters of {G}aussian processes with model misspecification.
\newblock {\em Computational Statistics \& Data Analysis}, 66:55 -- 69.

\bibitem[\protect\astroncite{Bay et~al.}{2016}]{Bay2016KimeldorfWahba}
Bay, X., Grammont, L., and Maatouk, H. (2016).
\newblock {Generalization of the Kimeldorf-Wahba correspondence for constrained
  interpolation}.
\newblock {\em {Electronic journal of statistics }}, 10(1):1580--1595.

\bibitem[\protect\astroncite{{Bect} et~al.}{2016}]{Bect2016MartingaleArXiv}
{Bect}, J., {Bachoc}, F., and {Ginsbourger}, D. (2016).
\newblock {A supermartingale approach to {G}aussian process based sequential
  design of experiments}.
\newblock {\em ArXiv e-prints}.

\bibitem[\protect\astroncite{Botev}{2017}]{Botev2017MinimaxTilting}
Botev, Z.~I. (2017).
\newblock The normal law under linear restrictions: simulation and estimation
  via minimax tilting.
\newblock {\em Journal of the Royal Statistical Society: Series B (Statistical
  Methodology)}, 79(1):125--148.

\bibitem[\protect\astroncite{Brooks et~al.}{2011}]{Brooks2011handbook}
Brooks, S., Gelman, A., Jones, G., and Meng, X. (2011).
\newblock {\em Handbook of Markov Chain Monte Carlo}.
\newblock Chapman \& Hall/CRC Handbooks of Modern Statistical Methods. CRC
  Press.

\bibitem[\protect\astroncite{Cousin et~al.}{2016}]{Cousin2016KrigingFinancial}
Cousin, A., Maatouk, H., and Rulli\`ere, D. (2016).
\newblock Kriging of financial term-structures.
\newblock {\em European Journal of Operational Research}, 255(2):631--648.

\bibitem[\protect\astroncite{Da~Veiga and
  Marrel}{2012}]{DaVeiga2012GPineqconst}
Da~Veiga, S. and Marrel, A. (2012).
\newblock Gaussian process modeling with inequality constraints.
\newblock {\em Annales de la facult\'e des sciences de Toulouse
  Math\'ematiques}, 21(3):529--555.

\bibitem[\protect\astroncite{Du et~al.}{2009}]{Du2009AsympMLE}
Du, J., Zhang, H., and Mandrekar, V.~S. (2009).
\newblock Fixed-domain asymptotic properties of tapered maximum likelihood
  estimators.
\newblock {\em Ann. Statist.}, 37(6A):3330--3361.

\bibitem[\protect\astroncite{Duane et~al.}{1987}]{Duane1987hybridHM}
Duane, S., Kennedy, A.~D., Pendleton, B.~J., and Roweth, D. (1987).
\newblock Hybrid {M}onte {C}arlo.
\newblock {\em Physics letters B}, 195(2):216--222.

\bibitem[\protect\astroncite{Fernex et~al.}{2005}]{Fernex2005moret}
Fernex, F., Heulers, L., Jacquet, O., Miss, J., and Richet, Y. (2005).
\newblock The {MORET 4B Monte Carlo} code - {N}ew features to treat complex
  criticality systems.
\newblock In {\em M\&C international conference on mathematics and computation
  supercomputing, reactor physics and nuclear and biological application,
  Avignon, France}, volume~59.

\bibitem[\protect\astroncite{Genz}{1992}]{Genz1992numericalcomputation}
Genz, A. (1992).
\newblock Numerical computation of multivariate normal probabilities.
\newblock {\em Journal of Computational and Graphical Statistics}, 1:141--150.

\bibitem[\protect\astroncite{Golchi et~al.}{2015}]{Golchi2015MonotoneEmulation}
Golchi, S., Bingham, D.~R., Chipman, H., and Campbell, D.~A. (2015).
\newblock Monotone emulation of computer experiments.
\newblock {\em SIAM/ASA Journal on Uncertainty Quantification}, 3(1):370--392.

\bibitem[\protect\astroncite{Gong and Flegal}{2016}]{Gong2016ESS}
Gong, L. and Flegal, J.~M. (2016).
\newblock {A practical sequential stopping rule for high-dimensional Markov
  Chain Monte Carlo}.
\newblock {\em Journal of Computational and Graphical Statistics},
  25(3):684--700.

\bibitem[\protect\astroncite{Kallenberg}{2002}]{Kallenberg2002foundations}
Kallenberg, O. (2002).
\newblock {\em Foundations of Modern Probability}.
\newblock Probability and its Applications. Springer New York.

\bibitem[\protect\astroncite{Kocijan}{2016}]{Kocijan2016GPControl}
Kocijan, J. (2016).
\newblock {\em Modelling and Control of Dynamic Systems Using Gaussian Process
  Models}.
\newblock Advances in Industrial Control. Springer International Publishing, 1
  edition.

\bibitem[\protect\astroncite{Lan and Shahbaba}{2016}]{Lan2016Sampling}
Lan, S. and Shahbaba, B. (2016).
\newblock {\em Sampling Constrained Probability Distributions Using Spherical
  Augmentation}, pages 25--71.
\newblock Springer International Publishing, Cham.

\bibitem[\protect\astroncite{Li and Linde}{1999}]{Li1999BallEstimates}
Li, W.~V. and Linde, W. (1999).
\newblock Approximation, metric entropy and small ball estimates for {G}aussian
  measures.
\newblock {\em Ann. Probab.}, 27(3):1556--1578.

\bibitem[\protect\astroncite{Loh}{2005}]{Loh2005Matern}
Loh, W.~L. (2005).
\newblock Fixed-domain asymptotics for a subclass of {M}at\'ern-type {G}aussian
  random fields.
\newblock {\em Ann. Statist.}, 33(5):2344--2394.

\bibitem[\protect\astroncite{Loh and Lam}{2000}]{Loh2000Matern32}
Loh, W.~L. and Lam, T.~K. (2000).
\newblock Estimating structured correlation matrices in smooth {G}aussian
  random field models.
\newblock {\em Ann. Statist.}, 28(3):880--904.

\bibitem[\protect\astroncite{Maatouk and Bay}{2016}]{Maatouk2016RSM}
Maatouk, H. and Bay, X. (2016).
\newblock {\em A New Rejection Sampling Method for Truncated Multivariate
  Gaussian Random Variables Restricted to Convex Sets}, pages 521--530.
\newblock Springer International Publishing, Cham.

\bibitem[\protect\astroncite{Maatouk and Bay}{2017}]{Maatouk2016GPineqconst}
Maatouk, H. and Bay, X. (2017).
\newblock Gaussian process emulators for computer experiments with inequality
  constraints.
\newblock {\em Mathematical Geosciences}, 49(5):557--582.

\bibitem[\protect\astroncite{Maatouk et~al.}{2015}]{Maatouk2015CrossVal}
Maatouk, H., Roustant, O., and Richet, Y. (2015).
\newblock Cross-validation estimations of hyper-parameters of {G}aussian
  processes with inequality constraints.
\newblock {\em Procedia Environmental Sciences}, 27:38--44.

\bibitem[\protect\astroncite{Meyer}{2000}]{Meyer2000Algebra}
Meyer, C.~D. (2000).
\newblock {\em Matrix Analysis and Applied Linear Algebra}.
\newblock Society for Industrial and Applied Mathematics, Philadelphia, PA,
  USA.

\bibitem[\protect\astroncite{Murphy}{2012}]{Murphy2012ML}
Murphy, K.~P. (2012).
\newblock {\em Machine Learning: A Probabilistic Perspective (Adaptive
  Computation And Machine Learning Series)}.
\newblock The MIT Press.

\bibitem[\protect\astroncite{Neal}{1996}]{Neal1996BayesianLearning}
Neal, R.~M. (1996).
\newblock {\em Bayesian Learning for Neural Networks}.
\newblock Springer-Verlag New York, Inc., Secaucus, NJ, USA.

\bibitem[\protect\astroncite{Nickisch and
  Rasmussen}{2008}]{Nickisch2008GPClassification}
Nickisch, H. and Rasmussen, C. (2008).
\newblock Approximations for binary {G}aussian process classification.
\newblock {\em Journal of Machine Learning Research}, 9:2035--2078.

\bibitem[\protect\astroncite{Paciorek and
  Schervish}{2004}]{Paciorek04nonstationarycovariance}
Paciorek, C.~J. and Schervish, M.~J. (2004).
\newblock Nonstationary covariance functions for {G}aussian process regression.
\newblock In {\em In Proc. of the Conf. on Neural Information Processing
  Systems}. MIT Press.

\bibitem[\protect\astroncite{Pakman and Paninski}{2014}]{Pakman2014Hamiltonian}
Pakman, A. and Paninski, L. (2014).
\newblock Exact {Hamiltonian Monte Carlo} for truncated multivariate
  {G}aussians.
\newblock {\em Journal of Computational and Graphical Statistics},
  23(2):518--542.

\bibitem[\protect\astroncite{Rasmussen and Williams}{2005}]{Rasmussen2005GP}
Rasmussen, C.~E. and Williams, C. K.~I. (2005).
\newblock {\em Gaussian Processes for Machine Learning (Adaptive Computation
  and Machine Learning)}.
\newblock The MIT Press.

\bibitem[\protect\astroncite{Riihim\"{a}ki and
  Vehtari}{2010}]{Riihimaki2010GPwithMonotonicity}
Riihim\"{a}ki, J. and Vehtari, A. (2010).
\newblock {G}aussian processes with monotonicity information.
\newblock In {\em Journal of Machine Learning Research: Workshop and Conference
  Proceedings}, volume~9, pages 645--652.

\bibitem[\protect\astroncite{Snelson and
  Ghahramani}{2006}]{Snelson2006SparseGP}
Snelson, E. and Ghahramani, Z. (2006).
\newblock Sparse {G}aussian processes using pseudo-inputs.
\newblock In Weiss, Y., Sch\"{o}lkopf, P.~B., and Platt, J.~C., editors, {\em
  Advances in Neural Information Processing Systems 18}, pages 1257--1264. MIT
  Press.

\bibitem[\protect\astroncite{Stein}{1999}]{stein1999interpolation}
Stein, M. (1999).
\newblock {\em Interpolation of Spatial Data: Some Theory for Kriging}.
\newblock Springer Series in Statistics. Springer New York.

\bibitem[\protect\astroncite{Taylor and
  Benjamini}{2017}]{Benjamini2017fastGibbs}
Taylor, J. and Benjamini, Y. (2017).
\newblock Restricted{MVN}: multivariate normal restricted by affine
  constraints.
\newblock
  \url{https://cran.r-project.org/web/packages/restrictedMVN/index.html}.
\newblock [Online; 02-Feb-2017].

\bibitem[\protect\astroncite{{Vats} et~al.}{2017}]{Vats2015multiMCMC}
{Vats}, D., {Flegal}, J.~M., and {Jones}, G.~L. (2017).
\newblock {Multivariate output analysis for Markov Chain Monte Carlo}.
\newblock {\em ArXiv e-prints}.

\bibitem[\protect\astroncite{Yadrenko}{1983}]{Yadrenko1983SpectralTheory}
Yadrenko, M.~I. (1983).
\newblock {\em {Spectral theory of random fields}}.
\newblock Translation series in mathematics and engineering. Optimization
  Software, New York, NY.
\newblock Transl. from the Russian.

\bibitem[\protect\astroncite{Ying}{1993}]{Ying1993ExpCov}
Ying, Z. (1993).
\newblock Maximum likelihood estimation of parameters under a spatial sampling
  scheme.
\newblock {\em Ann. Statist.}, 21(3):1567--1590.

\bibitem[\protect\astroncite{Zhang}{2004}]{Zhang2004InconsistentEst}
Zhang, H. (2004).
\newblock Inconsistent estimation and asymptotically equal interpolations in
  model-based geostatistics.
\newblock {\em Journal of the American Statistical Association},
  99(465):250--261.

\end{thebibliography}

\end{document}